\newtheorem{theorem}{Theorem}
\newtheorem{lemma}{Lemma}
\newtheorem{corollary}{Corollary}
\newtheorem{definition}{Definition}
\newtheorem{assumption}{Assumption}
\newtheorem{remark}{Remark}
\newcommand{\cN}{\mathcal{N}}
\newcommand{\cM}{\mathcal{M}}
\newcommand{\cS}{\mathcal{S}}
\newcommand{\cA}{\mathcal{A}}
\newcommand{\rmab}{\texttt{RMAB}\xspace}
\newcommand{\frmab}{\texttt{RMAB-F}\xspace}
\newcommand{\refrmab}{\texttt{Re-RMAB-F}\xspace}
\newcommand{\fair}{\texttt{FairRMAB}\xspace}
\newcommand{\fairucrl}{\texttt{Fair-UCRL}\xspace}
\newcommand{\gfairucrl}{\texttt{G-Fair-UCRL}\xspace}
\definecolor {processblue}{cmyk}{0.96,0,0,0}
\title{Online Restless Multi-Armed Bandits with Long-Term Fairness Constraints}
\author{
    Shufan Wang,
    Guojun Xiong,
    Jian Li
}
\begin{document}

\maketitle

\begin{abstract}
 
Restless multi-armed bandits (\rmab) have been widely used to model sequential decision making problems with constraints. The decision maker (DM) aims to maximize the expected total reward over an infinite horizon under an ``instantaneous activation constraint'' that at most $B$ arms can be activated at any decision epoch, where the state of each arm evolves stochastically according to a Markov decision process (MDP). However, this basic model fails to provide any fairness guarantee among arms. In this paper, we introduce \frmab, a new \rmab model with ``long-term fairness constraints'', where the objective now is to maximize the long-term reward while a minimum long-term activation fraction for each arm must be satisfied.  For the online \frmab setting (i.e., the underlying MDPs associated with each arm are unknown to the DM), 
we develop a novel reinforcement learning (RL) algorithm named \fairucrl.  We prove that \fairucrl ensures probabilistic sublinear bounds on both the reward regret and the fairness violation regret.  Compared with off-the-shelf RL methods, our \fairucrl is much more computationally efficient since it contains a novel exploitation that leverages a low-complexity index policy for making decisions.  Experimental results further demonstrate the effectiveness of our \fairucrl.

\end{abstract}

\section{Introduction}\label{sec:intro}

The restless multi-armed bandits (\rmab) model \cite{whittle1988restless} has been widely used to study sequential decision making problems with constraints, ranging from wireless scheduling \cite{sheng2014data,cohen2014restless}, 
resource allocation in general \cite{glazebrook2011general,larranaga2014index,borkar2017index}, 
to healthcare \cite{bhattacharya2018restless,mate2021risk,killian2021beyond}. In a basic \rmab setting, there is a collection of $N$ ``restless'' arms, each of which is endowed with a state that evolves independently according to a Markov decision process (MDP) \cite{puterman1994markov}. If the arm is activated at a decision epoch, then it evolves stochastically according to one transition kernel, otherwise according to a different transition kernel.  \rmab generalizes the Markovian multi-armed bandits \cite{lattimore2020bandit} by allowing arms that are not activated to change state, which leads to ``restless'' arms, and hence extends its applicability. For simplicity, we refer to a restless arm as an arm in the rest of the paper. Rewards are generated with each transition depending on whether the arm is activated or not. The goal of the decision maker (DM) is to maximize the expected total reward over an infinite horizon under an ``instantaneous activation constraint'' that at most $B$ arms can be activated at any decision epoch.

However, the basic \rmab model fails to provide any guarantee on how activation will be distributed among arms. This is also a salient design and ethical concern in practice, including mitigating data bias for healthcare \cite{mate2021risk,li2022efficient} and societal impacts \cite{yin2023long,biswas2023fairness}, providing quality of service guarantees to clients in network resource allocation \cite{li2019combinatorial}, just to name a few. In this paper, we introduce a new \textit{\rmab model with fairness constraints}, dubbed as \frmab to address fairness concerns in the basic \rmab model.  Specifically, we impose ``long-term fairness constraints'' into \rmab problems such that the DM must ensure a minimum long-term activation fraction for each arm \cite{li2019combinatorial,chen2020fair,d2020fairness,li2022efficient}, as motivated by aforementioned resource allocation and healthcare applications.  The DM's goal now is to maximize the long-term reward while satisfying not only ``\textit{instantaneous} activation constraint'' in \textit{each decision epoch} but also ``\textit{long-term} fairness constraint'' for \textit{each arm}.  Our objective is to develop \textit{low-complexity} reinforcement learning (RL) algorithms with \textit{order-of-optimal regret guarantees} to solve \frmab without knowing the underlying MDPs associated with each arm.

Though online \rmab has been gaining attentions, existing solutions cannot be directly applied to our online \frmab.  First, existing RL algorithms including state-of-the-art colored-UCRL2 \cite{ortner2012regret} and Thompson sampling methods \cite{jung2019regret,akbarzadeh2022learning}, suffer from an exponential computational complexity and regret bounds grow exponentially with the size of state space.  This is because those need to repeatedly solve Bellman equations with an exponentially large state space for making decisions. Second, though much effort has been devoted to developing low-complexity RL algorithms with order-of-optimal regret for online \rmab,   
many challenges remain unsolved.  For example, multi-timescale stochastic approximation algorithms \cite{fu2019towards,avrachenkov2022whittle} suffer from slow convergence and have no regret guarantee.  Adding to these limitations is the fact that none of them were designed with fairness constraints in mind, e.g., \cite{wang2020restless,xiong2022reinforcement,xiong2022learning,xiong2022reinforcementcache,xiong2023finite} only focused on minimizing costs in \rmab, while the DM in our \frmab faces a \textbf{new dilemma} on how to manage the balance between maximizing the \textit{long-term} reward and satisfying both \textit{instantaneous} activation constraint and \textit{long-term} fairness requirements. This adds a new layer of difficulty to designing low-complexity RL algorithms with order-of-optimal regret for \rmab that is already quite challenging.

To tackle this new dilemma, we develop \fairucrl, a novel RL algorithm for online \frmab. On one hand, we provide the first-ever regret analysis for online \frmab, and prove that \fairucrl ensures sublinear bounds (i.e., $\tilde{\mathcal{O}}(\sqrt{T})$) for both the reward regret (suboptimality of long-term rewards) and the fairness violation regret (suboptimality of long-term fairness violation) with high probability.  On the other hand, \fairucrl is computationally efficient.  This is due to the fact that \fairucrl contains a novel exploitation that leverages a low-complexity index policy for making decisions, which differs dramatically from aforementioned off-the-shelf RL algorithms that make decisions via solving complicated Bellman equations. Such an index policy in turn guarantees that the instantaneous activation constraint can be always satisfied in each decision epoch. To the best of our knowledge, \fairucrl is the first model-based RL algorithm that simultaneously provides (i) order-of-optimal regret guarantees on both the reward and fairness constraints; and (ii) a low computational complexity, for \frmab in the online setting. Finally, experimental results on real-world applications (resource allocation and healthcare) show that \fairucrl effectively guarantees fairness for each arm while ensures good regret performance.

\section{Model and Problem Formulation}\label{sec:model}
In this section, we provide a brief overview of the conventional \rmab, and then formally define our \frmab as well as the online settings considered in this paper.

\subsection{Restless Multi-Armed Bandits} 
A \rmab problem consists of a DM and $N$ arms \cite{whittle1988restless}.  Each arm $n\in\cN = \{1,...,N\}$ is described by a unichain MDP $M_n$ \cite{puterman1994markov}.  Without loss of generality (W.l.o.g.), all MDPs $\{M_n, \forall n\in\cN\}$ share the same finite state space $\cS$ and action space $\cA:=\{0,1\}$, but may have different transition kernels $P_n(s^\prime|s, a)$ and reward functions $r_n(s,a)$, $\forall s, s^\prime\in\cS, a\in\cA$.  Denote the cardinalities of  $\cS$ and $\cA$ as $S$ and $A$, respectively.  The initial state is chosen according to the initial state distribution $\boldsymbol{s}_0$ and $T$ is the time horizon.  
At each time/decision epoch $t$, the DM observes the state of each arm $n$, denoted by $s_n(t)$, and activates a subset of $B$ arms.  Arm $n$ is called \textit{active} when being activated, i.e., $a_n(t)=1$, and otherwise \textit{passive}, i.e., $a_n(t)=0.$  Each arm $n$ generates a stochastic reward $r_n(t):=r_n(s_n(t), a_n(t))$, depending on its state $s_n(t)$ and action $a_n(t)$.  W.l.o.g., we assume that $r_n\in[0,1]$ with mean $\bar{r}_n(s,a), \forall n, s, a$, and only active arms generate reward, i.e., $r_n(s,0)=0,\forall n, s$. 
Denote the sigma-algebra generated by random variables $\{(s_n(\tau), a_n(\tau)), \forall n, \tau<t\}$ as $\mathcal{F}_t$.  The goal of the DM is to design a control policy $\pi:\mathcal{F}_t\mapsto\cA^N$ to maximize the total expected reward, which can be expressed as  $\liminf_{T \rightarrow \infty}\frac{1}{T} \mathbb{E} \sum_{t=1}^T \sum_{n=1}^N r_n(t)$, under the  ``instantaneous activation constraint'', i.e., $\sum_{n=1}^N a_{n}(t) \leq B, \forall t$.

\subsection{\rmab with Long-Term Fairness Constraints}

In addition to maximizing the long-term reward, ensuring long-term fairness among arms is also important for real-world applications \cite{yin2023long}.  As motivated by applications in network resource allocation and healthcare \cite{li2019combinatorial,li2022efficient}, we impose a ``long-term fairness constraint'' on a minimum long-term activation fraction for each arm, i.e., $\liminf_{T \rightarrow \infty} \frac{1}{T} \mathbb{E}\sum_{t=1}^T a_n(t) \geq \eta_n, \forall n\in\cN$, where $\eta_n\in(0,1)$ indicates the minimum fraction of time that arm $n$ should be activated. To this end, the objective of \frmab is now to maximize the total expected reward while ensuring that both ``instantaneous activation constraint'' at each epoch and ``long-term fairness constraint'' for each arm are satisfied. Specifically, $\frmab(P_n, r_n, \forall n)$ is defined as:
\begin{align}
 \max_{\pi} ~&\liminf_{T \rightarrow \infty} \frac{1}{T} \mathbb{E}\Bigg[\sum_{t=1}^T \sum_{n=1}^N r_n(t) \Bigg] \displaybreak[0]\label{eq:FRMAB-obj}\\
    \text{s.t.}~&\sum_{n=1}^N a_{n}(t) \leq B, \quad \forall t, \displaybreak[1]\label{eq:FRMAB-constraint1}\\
    & \liminf_{T \rightarrow \infty} \frac{1}{T} \mathbb{E}\Bigg[\sum_{t=1}^T a_n(t)\Bigg] \geq \eta_n, \quad\forall n. \label{eq:FRMAB-constraint2}
\end{align}

\begin{assumption}
We assume that the \frmab problem of~(\ref{eq:FRMAB-obj})-(\ref{eq:FRMAB-constraint2}) is feasible, i.e., there exists a policy $\pi$ such that constraints~(\ref{eq:FRMAB-constraint1}) and~(\ref{eq:FRMAB-constraint2}) are satisfied.   
\end{assumption}

Note that in this paper, we only consider learning feasible \frmab by this assumption.  When the underlying MDPs (i.e., $P_n$ and $r_n$) associated with each arm $n\in\cN$ are known to the DM, we can compute the offline optimal policy $\pi^{opt}$ by treating the offline \frmab as an infinite-horizon average cost per stage problem using relative value iteration \citep{puterman1994markov}.  However, it is well known that this approach suffers from the curse of dimensionality due to the explosion of state space \cite{papadimitriou1994complexity}.

\subsection{Online Settings}

We focus on online \frmab, where the DM repeatedly interacts with N arms $\{M_n=\{\cS, \cA, P_n, r_n\}, \forall n\in\cN\}$ in an episodic manner.  Specifically, the time horizon $T$ is divided into $K$ episodes and each episode consists of $H$ consecutive frames, i.e., $T=KH$. The DM is not aware of the values of the transition kernel $P_n$ and reward function $r_n$, $\forall n\in\cN$. Instead, the DM estimates the transition kernels and reward functions in an online manner by observing the trajectories over episodes.  As a result, it is not possible for a learning algorithm to unconditionally guarantee constraint satisfaction in~(\ref{eq:FRMAB-constraint1}) and~(\ref{eq:FRMAB-constraint2}) over a finite number of episodes.  To this end, we measure the performance of a learning algorithm with policy $\pi$ using two types of \textit{regret}. 

First, the regret of a policy $\pi$ with respect to the long-term reward against the offline optimal policy 
$\pi^{opt}$ is defined as 
\begin{align}\label{eq:regret-reward}
\Delta_T^{R}:= T V^{\pi^{opt}}-\mathbb{E}_{\pi}\Bigg[\sum_{t=1}^T\sum_{n=1}^N r_n(t)\Bigg],
\end{align}
where $V^{\pi^{opt}}$ is the long-term reward obtained under the offline optimal policy  $\pi^{opt}$. Note that since finding $\pi^{opt}$ for \frmab is intractable, we characterize the regret with respect to a feasible, asymptotically optimal index policy (see Theorem \ref{thm:asy-optimality} in supplementary materials), similar to the regret definitions for online \rmab \cite{akbarzadeh2022learning,xiong2022learning}.

Second, the regret of a policy $\pi$ with respect to the long-term fairness against the minimum long-term activation fraction $\eta_n$ for each arm $n$, or simply the fairness violation is 
\begin{align}\label{eq:regret-fair}
\Delta_T^{n,F}:= T \eta_n-\mathbb{E}_{\pi}\Bigg[\sum_{t=1}^Ta_n(t)\Bigg],~\forall n\in\cN. 
\end{align}

\section{\fairucrl and Regret Analysis}\label{sec:learning}
We first show that it is possible to develop an RL algorithm for the computationally intractable \frmab problem of~(\ref{eq:FRMAB-obj})-(\ref{eq:FRMAB-constraint2}).   Specifically, we leverage the popular UCRL \citep{jaksch2010near} to online \frmab, and develop an episodic RL algorithm named \fairucrl. 
On one hand, \fairucrl strictly meets the ``instantaneous activation constraint''~(\ref{eq:FRMAB-constraint1}) at each decision epoch since it leverages a low-complexity index policy for making decisions at each decision epoch, and hence \fairucrl is computationally efficient. On the other hand, we prove that \fairucrl provides probabilistic sublinear bounds for both  reward regret and fairness violation regret. To our best knowledge, \fairucrl is the first model-based RL algorithm to provide such guarantees for online \frmab.

\begin{algorithm}[t]
\caption{\fairucrl}
\label{alg:UCRL}
\begin{algorithmic}[1]
	\State {\bfseries Require:} Initialize $C_n^{0}(s,a)=0,$ and  $\hat{P}_n^{0}(s^\prime|s,a)=1/S$, $\forall n\in\cN, s,s^\prime\in\cS, a\in\cA$.	
	\For{$k=1,2,\cdots,K$}
	\State $//** $\textit{Optimistic Planning}$ **//$
	\State Construct the set of plausible MDPs $\cM^k$ as in \eqref{eq:plausible21}; 
	\State Relaxed the instantaneous activation constraint in \frmab($\tilde{P}_n^k, \tilde{r}_n^k, \forall n$) to be ``long-term activation constraint'', and transform it into 
  $\textbf{ELP}(\mathcal{M}^k, z^k)$ in~\eqref{eq:ELP};
	\State $//** $\textit{Policy Execution}$ **//$
	\State Establish the \fair index policy $\pi^{k,*}$ on top of the solutions to the ELP and execute it. 
	\EndFor
\end{algorithmic}
\end{algorithm}

\subsection{The \fairucrl Algorithm}

\fairucrl proceeds in episodes as summarized in Algorithm~\ref{alg:UCRL}.  Let $\tau_k$ be the start time of episode $k$. \fairucrl maintains two counts for each arm $n$. Let $C_{n}^{k-1}(s,a)$ be the number of visits to state-action pairs $(s,a)$ until $\tau_k$, and $C_{n}^{k-1}(s,a, s^\prime)$ be the number of transitions from $s$ to $s^\prime$ under action $a$ until $\tau_k$.  
Each episode consists of two phases: 

\subsubsection{Optimistic planning.} 
At the beginning of each episode, \fairucrl constructs a confidence ball that contains a set of plausible MDPs \citep{ xiong2022learning} for each arm $\forall n\in\cN$ with high probability. The ``center'' of the confidence ball has the transition kernel and reward function that are computed by the corresponding empirical averages as: 
$\hat{P}_{n}^{k}(s^\prime|s,a)=\frac{C_{n}^{k-1}(s,a,s^\prime)}{\max\{C_{n}^{k-1}(s,a),1\}},$ $\hat{r}_n^{k}(s,a)=
\frac{\sum\limits_{l=1}^{k-1}\sum\limits_{h=1}^{H}r_n^l(s,a)\mathds{1}(s_n^{l}(h)= s, a_n^{l}(h)=a)}{\max\{C_n^{k-1}(s,a),1\}}. $

The ``radius'' of the confidence ball is set to be $\delta_n^{k}(s, a)$ according to the Hoeffding inequality. Hence the set of plausible MDPs in episode $k$ is:
\begin{align}\label{eq:plausible21}
      \mathcal{M}^k &=  \big\{M_n^k  = (\cS, \cA, \tilde{P}_n^k,  \tilde{r}_n^k): 
      |\tilde{P}_n^k(s^\prime|s, a) - \hat{P}_n^{k}(s^\prime|s, a)| \nonumber\\
      &\leq  \delta_n^{k}(s, a), 
      \tilde{r}_n^k(s, a) = \hat{r}_n^{k}(s, a) + \delta_n^{k}(s, a)\big\},
\end{align}

 \fairucrl then selects an optimistic MDP $M_n^k, \forall n$ and an optimistic policy with respect to \frmab($\tilde{P}_n^k, \tilde{r}_n^k, \forall n$).  Since solving \frmab($\tilde{P}_n^k, \tilde{r}_n^k, \forall n$) is intractable, we first relax the instantaneous activation constraint so as to achieve a ``long-term activation constraint'', i.e., the activation. 
 It turns out that this relaxed problem can be equivalently transformed into a linear programming (LP) via replacing all random variables in the relaxed \frmab($\tilde{P}_n^k, \tilde{r}_n^k, \forall n$) with the occupancy measure corresponding to each arm $n$ \citep{altman1999constrained}. 
Due to lack of knowledge of transition kernels and rewards, we further rewrite it as an extended LP (ELP) by leveraging \textit{state-action-state occupancy measure} $z_{n}^k(s, a, s^\prime)$ 
to express confidence intervals of transition probabilities: given a policy $\pi$ and transition functions $\tilde{P}_n^k,$ the occupancy measure $z_{n}^k(s, a, s^\prime)$ induced by $\pi$ and $\tilde{P}_n^k$ is that $\forall n, s,s^\prime, a, k$:
    $z_n^k(s,a, s^\prime)\!:=\! \lim_{H\rightarrow \infty}\frac{1}{H}\mathbb{E}_\pi [ \sum_{h=1}^{H-1} \mathds{1} (s_n(h)\!=\! s, a_n(h)\!=\!a, s_n(h+1)= s^\prime)].$
The goal is to solve the extended LP as 
\begin{align}\label{eq:ELP}
    z^{k,*}=\arg\min_{z^k} \textbf{ELP}(\mathcal{M}^k, z^k),
\end{align}
with $z^{k,*}:=\{z_{n}^{k,*}(s, a, s^\prime), \forall n\in\cN\}$. We present more details on ELP in supplementary materials.

\subsubsection{Policy execution.} We construct an index policy, which is feasible for the online \frmab($\tilde{P}_n^k, \tilde{r}_n^k, \forall n$) as inspired by \citet{xiong2022learning}.  Specifically, we derive our index policy on top of the optimal solution $z^{k,*}=\{z_n^{k,*}, \forall n\in\cN\}$.  Since $\cA=\{0,1\}$, i.e, an arm can be either active or passive at time $t$, we define the index assigned to arm $n$ in state $s_n(t)=s$ at time $t$ to be as
\begin{align}\label{eq:fair-index}
\omega_{n}^{k,*}(s):=\frac{\sum_{s^\prime}z_{n}^{k,*}(s,1,s^\prime)}{\sum_{a,s^\prime}z_{n}^{k,*}(s,a,s^\prime)}, \quad\forall  n \in\cN. 
\end{align}
We call this \textit{the fair index} and rank all arms according to their indices in~(\ref{eq:fair-index}) in a non-increasing order, and activate the set of $B$ highest indexed arms, denoted as $\cN(t)\subset\cN$ such that $\sum_{n\in\cN(t)} a_n^*(t)\leq B$. All remaining arms are kept passive at time $t$. We denote the resultant index-based policy, which we call the \fair index policy as $\pi^{k,*}:=\{\pi_{n}^{k,*}, \forall n\in\cN\}$, and execute this policy in this episode. More discussions on the property of the \fair index policy are provided in supplementary materials.

\begin{remark}\label{remark:alg1}
Although \fairucrl draws inspiration from the infinite-horizon UCRL 
\cite{jaksch2010near, xiong2022learning},  there exist a major difference. \fairucrl modifies the principle of optimism in the face of uncertainty for making decisions which is utilized by UCRL based algorithms, to not only maximize the long-term rewards but also to satisfy the long-term fairness constraint in our \frmab.  This difference is further exacerbated since the objective of conventional regret analysis, e.g., colored-UCRL2 \cite{ortner2012regret, xiong2022learning} for \rmab is to bound the reward regret, while due to the long-term fairness constraint, we also need to bound the fairness violation regret for each arm for \fairucrl, which will be discussed in details in Theorem~\ref{thm:regret}. We note that the designs of our \fairucrl and the \fair index policy are largely inspired by the LP based approach in \citet{xiong2022learning} for online \rmab.  However, \citet{xiong2022learning} only considered the instantaneous activation constraint, and hence is not able to address the new dilemma faced by our online \frmab, which also needs to ensure the long-term fairness constraints.   Finally, our \gfairucrl with no fairness violation further distinguishes our work.  
\end{remark}

\subsection{Regret Analysis of \fairucrl}\label{sec:regret}

We now present our main theoretical results on bounding the regrets defined in~(\ref{eq:regret-reward}) and~(\ref{eq:regret-fair}), realizable by \fairucrl. 
\begin{theorem}\label{thm:regret}
When the size of the confidence intervals $\delta_{n}^{k}(s,a)$ is built  for $\epsilon\in(0,1)$ as
\begin{align*} 
\delta_{n}^{k}(s,a)=\sqrt{\frac{1}{2C_{n}^{k}(s,a)}\log\bigg(\frac{SAN(k-1)H}{\epsilon}\bigg)},
\end{align*}
with probability at least $ 1-(\frac{\epsilon}{{SANT}})^{\frac{1}{2}}$,
\fairucrl achieves the reward regret as:
\begin{align*}
\Delta_T^{R}=\tilde{\mathcal{O}}\Bigg(  B\epsilon\log{T} + (\sqrt{2}+2) \sqrt{SANT} \sqrt{\log{\frac{SANT}{\epsilon}}} \Bigg),\nonumber
\end{align*}
and with probability at least $ (1-(\frac{\epsilon}{{SANT}})^{\frac{1}{2}})^2$, \fairucrl achieves the fairness violation regret  for each arm $\forall n\in\cN$  as:
\begin{align*}
\Delta_T^{n,F} = \tilde{\mathcal{O}} \Bigg(  \eta_n  \epsilon \log{T}   +   C_0 T_{\text{Mix}}^n    \sqrt{SANT}\log\!{\frac{SANT}{\epsilon}}  \Bigg),
\end{align*}
where $B$ is the activation budget, $\epsilon$ is the constant defined to build confidence interval, 
$T_{\text{mix}}^n$ is the mixing time of the true MDP associated with arm $n$, $C_0 = 4 (\sqrt{2} +1)\big( \hat{n} + \frac{C\rho^{\hat{n}}}{1-\rho} \big)$ and $\hat{n} = \lceil \log_\rho{C^{-1}} \rceil$ with $\rho$ and $C$ being constants (see Corollary \ref{coro:sensitivity} in supplementary materials).
\end{theorem}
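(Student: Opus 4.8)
The plan is to decompose each regret into three components—an \emph{optimism gap}, a \emph{concentration/estimation error}, and a \emph{policy-execution error}—and bound each separately, following the standard UCRL template but adapted to the two-constraint structure of \frmab. First I would establish the good event: by the Hoeffding inequality together with a union bound over all $n\in\cN$, $(s,a)\in\cS\times\cA$, and the $(k-1)H$ time steps before episode $k$, the true transition kernels $P_n$ and reward means $\bar r_n$ lie inside the confidence balls \eqref{eq:plausible21} with probability at least $1-(\epsilon/(SANT))^{1/2}$. On this event the optimistically chosen MDP $M_n^k$ and the solution $z^{k,*}$ to $\textbf{ELP}(\mathcal{M}^k,z^k)$ are feasible and satisfy $V^{\pi^{k,*}}_{\tilde P^k,\tilde r^k}\ge V^{\pi^{opt}}$, so the offline benchmark is dominated episode-by-episode; this is exactly where the modified ``optimism'' of Remark \ref{remark:alg1} is used.

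For the reward regret, I would write $\Delta_T^R = \sum_{k=1}^K\sum_{h=1}^H\big(V^{\pi^{opt}} - \sum_n r_n^k(h)\big)$ and insert the optimistic value: the term $V^{\pi^{opt}}-V^{\pi^{k,*}}_{\tilde P^k,\tilde r^k}\le 0$ drops out, leaving the gap between the optimistic reward and the realized reward under the \fair index policy. This gap splits into (i) the reward-function radii $\sum_{k,h,n}\delta_n^k(s_n^k(h),a_n^k(h))$, which by the pigeonhole/counting argument of \citet{jaksch2010near}—$\sum_k 1/\sqrt{C_n^k(s,a)}\lesssim\sqrt{C_n^K(s,a)}$ and Cauchy–Schwarz over $(n,s,a)$—contributes $\tilde{\mathcal O}(\sqrt{SANT\log(SANT/\epsilon)})$; (ii) the transition-estimation error, handled by the state-action-state occupancy-measure formulation of the ELP so that the $\ell_1$ deviation of $\tilde P_n^k$ telescopes into the same $\sqrt{SANT}$ order; and (iii) a martingale term for replacing expectations by samples, bounded by Azuma–Hoeffding, giving the extra $\sqrt 2$ factor and another high-probability event (hence the squared probability for the fairness bound). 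The residual $B\epsilon\log T$ term comes from the episodes/time steps on which the good event fails, bounded crudely by the budget $B$ times the horizon contribution.

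For the fairness violation regret of arm $n$, the structure is analogous but one extra ingredient is needed: the ELP only guarantees that the \emph{long-term} (occupancy-measure) activation fraction of arm $n$ meets $\eta_n$ under the relaxed constraint, whereas the executed \fair index policy rounds this to an instantaneous constraint. I would bound $\Delta_T^{n,F}$ by the difference between the target activation rate $\sum_{s'}z_n^{k,*}(s,1,s')/\sum_{a,s'}z_n^{k,*}(s,a,s')$ and the empirical activation frequency of arm $n$ under $\pi^{k,*}$. Controlling this requires the sensitivity/perturbation result: a change of size $\delta_n^k$ in the transition kernel perturbs the stationary activation rate by at most $C_0 T_{\text{Mix}}^n\,\delta_n^k$ (Corollary \ref{coro:sensitivity}), where the mixing-time factor and the geometric constants $\rho,C$ enter through the bound on $\|(I-P_n+\mathbf 1\mu_n^\top)^{-1}\|$. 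Summing $C_0 T_{\text{Mix}}^n\sum_{k}\delta_n^k$ over episodes via the same counting lemma yields the $\tilde{\mathcal O}(C_0 T_{\text{Mix}}^n\sqrt{SANT}\log(SANT/\epsilon))$ term, plus the $\eta_n\epsilon\log T$ failure term and a second martingale/Azuma step (which is why the probability is squared).

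The main obstacle I anticipate is step (iii) of the fairness analysis—relating the \emph{finite-horizon} realized activation count of arm $n$ to the \emph{infinite-horizon} occupancy measure that the ELP optimizes. Unlike the reward regret, where optimism gives a clean one-sided domination, here the index policy's instantaneous rounding can transiently starve a low-index arm, and one must argue that over an episode of length $H$ the arm's visit frequencies converge to the intended stationary rate at a rate governed by $T_{\text{Mix}}^n$; making this quantitative, and ensuring the per-episode slack telescopes rather than accumulates linearly, is the delicate part and is precisely what the constants $C_0$, $\hat n=\lceil\log_\rho C^{-1}\rceil$ in Corollary \ref{coro:sensitivity} are designed to absorb.
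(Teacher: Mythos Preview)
Your proposal is correct and follows essentially the same route as the paper: decompose into episodes via Hoeffding (Lemma~\ref{lem:11}), split on the good/bad confidence-ball event, use optimism plus the counting lemma $\sum_k c_n^k/\sqrt{C_n^{k-1}}\le(\sqrt{2}+1)\sqrt{C_n^K}$ for the reward bound, and for the fairness bound invoke the Mitrophanov sensitivity estimate (Corollary~\ref{coro:sensitivity}) together with the mixing-time lemma to convert transition-kernel radii into activation-rate perturbations, with a second Azuma step accounting for the squared probability. The only minor deviation is that the paper's reward analysis (Lemma~\ref{lemma:goodevent}) avoids a separate transition-error term by the step $\sum_{n,s,a}c_n^k(s,a)(\mu^*/B-\tilde r_n(s,a))\le 0$, whereas you anticipate the more standard UCRL split into reward and transition contributions; either gives the same order.
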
 
As discussed in Remark~\ref{remark:alg1}, the design of \fairucrl differs from UCRL type algorithms in several aspects.  These differences further necessitate different proof techniques for regret analysis. 
First, we leverage the relative value function of Bellman equation for long-term average MDPs, which enables us to transfer the regret to the difference of relative value functions. Thus, only the first moment behavior of the transition kernels are needed to track the regret, while state-of-the-art \cite{wang2020restless} leveraged the higher order moment behavior of transition kernels for a specific MDP, which is hard for general MDPs.  Closest to ours is \citet{xiong2022learning}, which however bounded the reward regret under the assumption that the diameter $D$ of the underlying MDP associated with each arm is known.  
Unfortunately, this knowledge is often unavailable in practice and there is no easy way to characterize the dependence of $D$ on the number of arms $N$ \cite{akbarzadeh2022learning}. Finally, in conventional regret analysis of RL algorithms for \rmab, e.g., \citet{akbarzadeh2022learning,xiong2022reinforcement,xiong2022learning,wang2020restless}, only the reward regret is bounded. However, for our \frmab with long-term fairness among each arm, we also need to characterize the fairness violation regret, for which, we leverage the mixing time of the underlying MDP associated with each arm. This is one of our main theoretical contributions that differentiates our work. 

We note that another line of works on constrained MDPs (CMDPs) either considered a similar extended LP approach  ~\cite{kalagarla2020sample,efroni2020exploration}  in a finite-horizon setting, which differ from our infinite-horizon setting, or are only with a long-term cost constraint \cite{singh2020learning,chen2022learning}, while our \rmab problem not only has a long-term fairness constraint, but also an instantaneous activation constraint that must be satisfied at each decision epoch. This makes their approach not directly applicable to ours.

\subsection{Proof Sketch of Theorem~\ref{thm:regret}}
We present some lemmas that are essential to prove Theorem \ref{thm:regret}. 
Our proof consists of three steps: regret decomposition and regret characterization when the true MDPs are in the confidence ball or not. 
A key challenge lies in bounding the fairness violation regret, for which the decision variable is the action $a$ in our \fairucrl, while most recent works, e.g., \citet{efroni2020exploration, xiong2022learning, akbarzadeh2022learning} focused on the reward function of the proposed policy. This challenge differentiates the proof, especially on bounding the fairness violation regret when the true MDP belongs to the confidence ball. To start with, we first introduce a lemma for the decomposition of reward and fairness violation regrets:

\begin{lemma}\label{lem:11}
The reward and fairness violation regrets of \fairucrl can be decomposed into the summation of $k$ episodic regrets with a constant term with probability at least $1-(\frac{\epsilon}{{SANT}})^{\frac{1}{2}}$., i.e.
\begin{align*}
&\Delta^R_T\{\pi^{*,k}, \forall k\}\leq \sum\limits_{k=1}^{K} \Delta^R_k\{\pi^{*,k}\}+ \sqrt{\frac{1}{4}T\log\frac{SANT}{\epsilon}}, \displaybreak[0]\\
&\Delta^{n,F}_T\{\pi^{*,k}, \forall k\}\leq \sum\limits_{k=1}^{K} \Delta^{n,F}_k\{\pi^{*,k}\}+ \sqrt{\frac{1}{4}T\log\frac{SANT}{\epsilon}}, 
\end{align*}
 where $\Delta_k^R$ and $\Delta_k^{n, F}$ are the reward/ fairness violation regret in episode $k$ under policy $\pi^{*,k}$.
\end{lemma}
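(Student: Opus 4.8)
The plan is to bound each of the two regret quantities in~(\ref{eq:regret-reward}) and~(\ref{eq:regret-fair}) by splitting the time horizon $T=KH$ into its $K$ episodes and absorbing the stochastic fluctuation of the realized trajectory around its conditional expectation into a single Azuma--Hoeffding term. First I would write $\Delta_T^R\{\pi^{*,k},\forall k\} = T V^{\pi^{opt}} - \mathbb{E}_\pi[\sum_{t=1}^T\sum_{n=1}^N r_n(t)]$ and, for each episode $k$, define the per-episode regret $\Delta_k^R\{\pi^{*,k}\} := H V^{\pi^{opt}} - \mathbb{E}[\sum_{t=\tau_k}^{\tau_{k+1}-1}\sum_n r_n(t)\mid \mathcal{F}_{\tau_k}]$, so that $T V^{\pi^{opt}} = \sum_{k=1}^K H V^{\pi^{opt}}$ telescopes exactly. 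Summing over $k$ then leaves a discrepancy between $\sum_k \mathbb{E}[\,\cdot\mid\mathcal{F}_{\tau_k}]$ and the true $\mathbb{E}_\pi[\,\cdot\,]$; this is controlled by the martingale whose increments are the difference between the realized episodic reward and its conditional mean given the history at the start of the episode. Since each arm's per-step reward lies in $[0,1]$ and there are at most $B$ active arms per step (the index policy of~(\ref{eq:fair-index}) never activates more than $B$), the episodic increment is bounded, the total quadratic variation is $O(T)$, and Azuma--Hoeffding gives a deviation of order $\sqrt{\tfrac14 T\log\frac{SANT}{\epsilon}}$ with probability at least $1-(\tfrac{\epsilon}{SANT})^{1/2}$, matching the stated constant term.

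The fairness bound proceeds identically with $r_n(t)$ replaced by the indicator $a_n(t)\in\{0,1\}$ and $V^{\pi^{opt}}$ replaced by $\eta_n$: set $\Delta_k^{n,F}\{\pi^{*,k}\} := H\eta_n - \mathbb{E}[\sum_{t=\tau_k}^{\tau_{k+1}-1} a_n(t)\mid\mathcal{F}_{\tau_k}]$, telescope $T\eta_n=\sum_k H\eta_n$, and invoke Azuma--Hoeffding on the martingale formed by $a_n(t)-\mathbb{E}[a_n(t)\mid\mathcal{F}_{\tau_k}]$. Because $a_n(t)\in[0,1]$, the same $\sqrt{\tfrac14 T\log\frac{SANT}{\epsilon}}$ bound on the same high-probability event applies, arm by arm. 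I would state the concentration step once and apply it twice.

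The main obstacle is ensuring that the per-episode quantities $\Delta_k^R, \Delta_k^{n,F}$ are defined consistently with how they are later bounded in the subsequent lemmas (Step~2 and Step~3 of the proof sketch: the confidence-ball analysis). In particular, the conditional expectation must be taken with respect to the policy $\pi^{*,k}$ actually run in episode $k$ — which depends on the random ELP solution $z^{k,*}$, hence on $\mathcal{F}_{\tau_k}$ — so one must be careful that the decomposition isolates exactly the trajectory-level noise and leaves the estimation error (the gap between $\pi^{*,k}$'s value under the true MDP versus the optimistic MDP) to the later steps. A secondary technical point is the failure probability bookkeeping: the $1-(\tfrac{\epsilon}{SANT})^{1/2}$ event here is the Azuma event, and in Theorem~\ref{thm:regret} it is intersected with the confidence-ball event (yielding the squared probability for the fairness regret), so the lemma should be phrased to make that intersection transparent. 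Everything else is a routine telescoping-plus-martingale argument.
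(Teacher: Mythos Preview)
Your telescoping-plus-concentration plan is right in spirit, but the per-episode quantity you define and the martingale you build are not the ones that deliver the stated constant, nor the ones the paper actually uses.

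The paper sets $\Delta_k^R\{\pi^{*,k}\} := H\mu^{*} - \sum_{(s,a)}\sum_n c_n^k(s,a)\,\bar r_n(s,a)$, where $c_n^k(s,a)$ are the \emph{realized} visitation counts in episode $k$ and $\bar r_n$ the mean reward. The concentration is then applied to the gap $\sum_{t}\sum_n r_n(t) - \sum_{t}\sum_n \bar r_n(s_n(t),a_n(t))$, a per-\emph{step} martingale (each increment, conditioned on the realized $(s_n(t),a_n(t))$, has mean zero and width at most $1$), so Hoeffding directly gives the $\sqrt{\tfrac14 T\log(SANT/\epsilon)}$ term. This count-based $\Delta_k^R$ is exactly the object that Lemmas~\ref{lemma:failregret} and~\ref{lemma:goodevent} later manipulate; your conditional-expectation version $H V^{\pi^{opt}}-\mathbb{E}[\,\cdot\mid\mathcal F_{\tau_k}]$ would not plug into those arguments without rewriting them.

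More concretely, your per-\emph{episode} martingale with increments $X_k-\mathbb{E}[X_k\mid\mathcal F_{\tau_k}]$ and $X_k\in[0,HB]$ has squared-range sum $K(HB)^2 = THB^2$, not $O(T)$; Azuma then yields $O(B\sqrt{TH})$, which is off by a factor $B\sqrt{H}$ from the stated bound. And the per-step sequence you write for fairness, $a_n(t)-\mathbb{E}[a_n(t)\mid\mathcal F_{\tau_k}]$, is \emph{not} a martingale difference with respect to $(\mathcal F_t)_t$: its conditional mean given $\mathcal F_{t-1}$ equals $\mathbb{E}[a_n(t)\mid\mathcal F_{t-1}]-\mathbb{E}[a_n(t)\mid\mathcal F_{\tau_k}]$, which is nonzero within an episode. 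The fix in both cases is the paper's move: condition at the step level on the realized state--action pair, i.e., replace $\mathbb{E}[\,\cdot\mid\mathcal F_{\tau_k}]$ by the empirical-count sum $\sum_{(s,a)} c_n^k(s,a)\bar r_n(s,a)$. You correctly flagged consistency with the downstream lemmas as the main obstacle; that consistency is in fact what dictates the count-based definition here.
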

\emph{Proof Sketch:} With probability of at least $1-(\frac{\epsilon}{{SANT}})^{\frac{1}{2}}$, the difference between reward until time $T$ and the episodic reward for all $K$ episodes can be bounded with a constant term $\sqrt{\frac{1}{4}T\log\frac{SANT}{\epsilon}} $  via Chernoff-Hoeffding’s inequality. This is in parallel with several previous works, e.g. \citet{akbarzadeh2022learning, xiong2022learning, efroni2020exploration}.

\noindent \textbf{Proof Sketch of Fairness Violation Regret.} The proof of fairness violation regret is one of our main theoretical contributions in this paper. To our best knowledge, this is the first result for online \frmab, i.e. with both instantaneous activation constraint and long-term fairness constraint. We now present two key lemmas which are essential to bound the fairness violation regret when combining with Lemma~\ref{lem:11}. 

First, we show that the fairness violation regret can be bounded when the transition and reward function of true MDP (denoted by $M$) does not belong to the confidence ball, i.e. $M\notin  \mathcal{M}_k$. 
\begin{lemma} \label{lemma:fairfail}
The fairness violation regret for failing confidence ball for all $K$ episodes is bounded by $$\sum\limits_{k=1}^{K} \Delta_k^{n,F} \{\pi^{*,k},\forall k\}\mathds{1} (M\notin \mathcal{M}_k)\leq \frac{1}{2} \eta_n  \epsilon \log{T} .$$
\end{lemma}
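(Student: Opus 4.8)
The plan is to bound the per-episode fairness violation regret $\Delta_k^{n,F}\{\pi^{*,k}\}$ by a crude worst-case constant, and then control the contribution of those episodes in which the ``good event'' $M\in\mathcal{M}_k$ fails by a union bound over the failure probabilities. First I would observe that, by definition~(\ref{eq:regret-fair}) restricted to a single episode of length $H$, the episodic fairness violation regret satisfies the deterministic bound $\Delta_k^{n,F}\{\pi^{*,k}\} = H\eta_n - \mathbb{E}_{\pi^{*,k}}[\sum_{h=1}^H a_n^k(h)] \le H\eta_n$, since $a_n(h)\ge 0$ and $\eta_n\in(0,1)$; this is the trivial ``the policy never activates arm $n$'' bound and needs no structural input. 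Hence $\sum_{k=1}^K \Delta_k^{n,F}\{\pi^{*,k}\}\mathds{1}(M\notin\mathcal{M}_k) \le H\eta_n \sum_{k=1}^K \mathds{1}(M\notin\mathcal{M}_k)$.

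Next I would take expectations (or argue in high probability, matching the statement of Lemma~\ref{lem:11}) and use the confidence-ball construction. With $\delta_n^k(s,a)$ chosen as in Theorem~\ref{thm:regret} via Hoeffding's inequality, a standard argument (as in \citet{jaksch2010near} and \citet{xiong2022learning}) shows that for each fixed episode $k$ the true MDP lies outside the plausible set with probability at most a polynomially small quantity — concretely, summing the per-$(n,s,a,s')$ Hoeffding tail bounds and the reward tail bound over all arms, states and actions, the failure probability in episode $k$ is at most $\big(\tfrac{\epsilon}{SANT}\big)^{1/2}\cdot\tfrac{1}{(k-1)H}$ or a comparable expression whose sum over $k$ telescopes. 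Summing $\sum_{k=1}^K$ of these tail probabilities then yields $\sum_{k=1}^K \mathbb{P}(M\notin\mathcal{M}_k) \le \tfrac{\epsilon}{2H}\log T$ (up to the constants bookkept in Theorem~\ref{thm:regret}), so that $H\eta_n \sum_k \mathbb{P}(M\notin\mathcal{M}_k) \le \tfrac12 \eta_n \epsilon \log T$, which is exactly the claimed bound.

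The main obstacle I anticipate is purely bookkeeping rather than conceptual: getting the constants and the $\log$-factor to line up so that the harmonic-type sum $\sum_{k=1}^K \tfrac{1}{k}$ (which produces the $\log T$) combines with the chosen radius $\delta_n^k$ and the union-bound cardinality $SAN$ to give precisely $\tfrac12\eta_n\epsilon\log T$ and not merely $O(\eta_n\epsilon\log T)$. This requires that the confidence radii be defined with the particular $\log\!\big(\tfrac{SAN(k-1)H}{\epsilon}\big)$ term appearing in Theorem~\ref{thm:regret}, so that each episode's failure probability carries the right $\tfrac{1}{(k-1)H}$ decay; I would therefore carry the union bound carefully at the level of individual $(n,s,a)$ triples before summing over episodes. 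Everything else — the deterministic per-episode bound and the monotonicity $\eta_n<1$ — is immediate.
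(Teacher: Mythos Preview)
Your proposal is correct and follows essentially the same route as the paper: bound each episodic term by $H\eta_n$ via $a_n\ge 0$, replace $\mathds{1}(M\notin\mathcal{M}_k)$ by its probability from the Hoeffding-based confidence construction, and sum the resulting harmonic series to produce the $\log$ factor. One bookkeeping correction: the paper's union bound gives $\mathbb{P}(M\notin\mathcal{M}_k)\le \epsilon/(kH)$ (no $(\epsilon/SANT)^{1/2}$ factor), and the exact constant $\tfrac12$ then comes from the choice $K=H=\sqrt{T}$ so that $\eta_n\epsilon\sum_{k\le K}\tfrac{1}{k}\le \eta_n\epsilon\log K=\tfrac12\eta_n\epsilon\log T$; your stated per-episode tail would not reproduce this constant, so be sure to carry that step with the correct $\epsilon/(kH)$.
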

\emph{Proof Sketch:} With the probability of failing event $P(M\notin \mathcal{M}_k) \leq \frac{\epsilon}{kH}$, one can bound the fairness violation term since $\eta_n - a_n(t)\leq \eta_n$. The final bound is obtained by summing over all episodes.

Now, we present the dominated term in bounding the fairness violation regret when the true MDP belongs to the confidence ball.
\begin{lemma}\label{lemma:goodeventfair}
    The fairness violation regret when the true MDP belongs to the confidence ball in each episode $k$ is bounded by 
    \begin{align*}
    &\sum_{k=1}^K \Delta^{n,F}_k\{\pi^{*,k},\forall k\} \mathds{1} (M\notin \mathcal{M}_k) \\
    \leq&  C_0 T_{\text{Mix}}^n \bigg((\sqrt{2}+1)\sqrt{SANT}\sqrt{\log{\frac{SANT}{\epsilon}}}  \\
    &\qquad \qquad \qquad+\frac{1}{2} \sqrt{T} \log{\frac{SANT}{\epsilon}} \bigg) + \sqrt{T}\frac{C}{1-\rho}.
    \end{align*}
\end{lemma}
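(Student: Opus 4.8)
\emph{Proof Sketch (proposal).} The plan is to peel the episodic fairness violation $\Delta_k^{n,F}\{\pi^{*,k}\}=H\eta_n-\mathbb{E}_{\pi^{*,k}}\big[\sum_{h=1}^{H}a_n^k(h)\big]$ into three pieces: (i) a \emph{planning gap} between the target fraction $\eta_n$ and the activation frequency $q_n^k:=\sum_{s,s'}z_n^{k,*}(s,1,s')$ prescribed by the ELP solution; (ii) a \emph{model-mismatch gap} between $q_n^k$ (the stationary activation frequency of the executed policy $\pi^{*,k}$ on the optimistic model $M_n^k$) and the stationary activation frequency $\bar a_n^k$ of the \emph{same} policy run on the true MDP $M$; and (iii) a \emph{transient gap} between $\bar a_n^k$ and the finite-horizon empirical activation frequency $\frac1H\mathbb{E}[\sum_h a_n^k(h)]$ actually incurred over the $H$ frames of episode $k$. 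On the good event $M\in\mathcal{M}_k$ the ELP is feasible, so its fairness row gives $q_n^k\ge\eta_n$ and piece (i) is non-positive; the small mean-field error caused by the index policy $\pi^{*,k}$ only approximately realizing the occupancy measure $z^{k,*}$ under the hard budget $B$ is absorbed here via the asymptotic-optimality statement (Theorem~\ref{thm:asy-optimality}).

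The crux is piece (ii). Conditioned on $M\in\mathcal{M}_k$, both $M$ and the optimistic $M_n^k$ lie in the confidence ball, so their transition kernels differ entrywise by at most $2\delta_n^k(s,a)$ for every arm. Holding the deterministic rank-based policy $\pi^{*,k}$ fixed, the induced chains on $M$ and on $M_n^k$ thus have transition matrices differing by at most $2\delta_n^k$, and the sensitivity bound for stationary distributions of uniformly ergodic chains (Corollary~\ref{coro:sensitivity}) yields $\|\mu_{n,M}^k-\mu_{n,M_n^k}^k\|_1\le C_0\,T_{\text{Mix}}^n\max_{s,a}\delta_n^k(s,a)$, hence $|q_n^k-\bar a_n^k|\le C_0\,T_{\text{Mix}}^n\max_{s,a}\delta_n^k(s,a)$, and more usefully the path-wise version $H|q_n^k-\bar a_n^k|\le C_0\,T_{\text{Mix}}^n\sum_{h=1}^{H}\delta_n^k(s_n^k(h),a_n^k(h))$. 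Summing over $k$, inserting $\delta_n^k(s,a)=\sqrt{\tfrac{1}{2C_n^k(s,a)}\log\tfrac{SANT}{\epsilon}}$, and applying the standard pigeonhole bound $\sum_{k}\frac{C_n^k(s,a)}{\sqrt{C_n^{k-1}(s,a)}}\le(\sqrt2+1)\sqrt{C_n^K(s,a)}$ together with Cauchy--Schwarz across $(s,a)$ and across arms, the aggregate of piece (ii) becomes $C_0\,T_{\text{Mix}}^n\big((\sqrt2+1)\sqrt{SANT}\sqrt{\log\tfrac{SANT}{\epsilon}}+\tfrac12\sqrt{T}\log\tfrac{SANT}{\epsilon}\big)$, which is precisely the first group of terms in the claim (the extra $\sqrt{T}\log$ term arising from replacing $C_n^{k-1}$ by $C_n^k$ and from episodes in which an arm is under-sampled).

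Piece (iii) is handled by geometric ergodicity: once the chain under $\pi^{*,k}$ on $M$ has mixed, $\big|\tfrac1H\mathbb{E}[\sum_h a_n^k(h)]-\bar a_n^k\big|\le\tfrac1H\sum_{h\ge1}C\rho^h\le\tfrac{C}{H(1-\rho)}$, so over the $H$ frames of one episode the absolute error is at most $\tfrac{C}{1-\rho}$; summing over the $K=\Theta(\sqrt{T})$ episodes gives the residual term $\sqrt{T}\,\tfrac{C}{1-\rho}$. Collecting (i)--(iii) yields the stated bound. \textbf{The main obstacle} is piece (ii): one must (a) argue that the hard-budget index policy's per-arm activation frequency on the optimistic model equals, up to the vanishing mean-field error, the LP quantity $q_n^k$, since $\pi^{*,k}$ is a coupling across arms rather than a product of per-arm randomized rules; and (b) obtain a perturbation bound for stationary activation frequencies that is linear in the kernel perturbation with a constant governed only by the \emph{true} MDP's mixing time $T_{\text{Mix}}^n$ rather than its diameter (which is unavailable in our setting) --- this is exactly what Corollary~\ref{coro:sensitivity} supplies and what makes the fairness analysis diverge from diameter-based \rmab regret proofs such as \citet{xiong2022learning}.
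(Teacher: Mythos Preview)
Your three-piece decomposition (planning gap, model-mismatch gap, transient gap) is in the same spirit as the paper: the paper also handles the transient piece via geometric ergodicity (Lemma~\ref{lemma:consrewrite}) and the model-mismatch piece via the Mitrophanov sensitivity bound (Corollary~\ref{coro:sensitivity}). However, there are two substantive gaps in your piece~(ii) that depart from the paper and that your sketch does not close.

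First, your claim that Corollary~\ref{coro:sensitivity} produces the prefactor $C_0\,T_{\text{Mix}}^n$ in front of $\max_{s,a}\delta_n^k(s,a)$ is a misattribution. Corollary~\ref{coro:sensitivity} only delivers the factor $\big(\hat n+\tfrac{C\rho^{\hat n}}{1-\rho}\big)$; the separate factor $T_{\text{Mix}}^n$ does \emph{not} come from the sensitivity bound. In the paper, Corollary~\ref{coro:sensitivity} together with Lemma~\ref{lemma:consbound} first give $\Delta_k^{n,F}\mathds{1}(M\in\mathcal{M}_k)\le H\beta_n^k+\tfrac{C}{1-\rho}$ with $\beta_n^k\propto\max_s\sum_a\pi_k(s,a)\delta_n^k(s,a)$, and only then is Lemma~\ref{lemma:mixing} (a mixing-time comparison between $H\pi_k(a|s)$ and the expected visit count $\mathbb{E}[c_n^k(s,a)]$) invoked to convert $H\sum_a\pi_k(s,a)\delta_n^k(s,a)$ into $4T_{\text{Mix}}^n\sum_{(s,a)}\mathbb{E}[c_n^k(s,a)]\,\delta_n^k(s,a)$. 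Your ``path-wise version'' $H|q_n^k-\bar a_n^k|\le C_0\,T_{\text{Mix}}^n\sum_h\delta_n^k(s_n^k(h),a_n^k(h))$ collapses these two distinct steps into one, but neither Corollary~\ref{coro:sensitivity} nor anything else in the paper yields a bound linear in the \emph{empirical path} of confidence radii; without Lemma~\ref{lemma:mixing} you cannot pass from a stationary-distribution bound to a visit-count bound, and the appearance of $T_{\text{Mix}}^n$ is left unexplained.

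Second, your explanation of the extra $\tfrac12\sqrt{T}\log\tfrac{SANT}{\epsilon}$ term (``replacing $C_n^{k-1}$ by $C_n^k$ and under-sampled episodes'') does not match the paper's mechanism. There the split $\mathbb{E}[c_n^k(s,a)]=c_n^k(s,a)+\big(\mathbb{E}[c_n^k(s,a)]-c_n^k(s,a)\big)$ produces two terms: the first gives the $(\sqrt2+1)\sqrt{SANT}$ contribution via Lemma~\ref{lemma:coverc}, while the second is treated as a martingale-difference sequence and bounded by Azuma--Hoeffding, which is what yields the $\tfrac12\sqrt{T}\log$ term with the stated high probability. Finally, note that the paper sidesteps your acknowledged obstacle~(a) --- linking the LP activation $q_n^k$ to the executed index policy's stationary activation --- by never introducing $q_n^k$ at all: Lemma~\ref{lemma:consbound} is a direct contradiction argument showing that any policy feasible for some $\theta$ in the ball satisfies $\overline{F}_n(\pi^k,p)\ge\eta_n-\beta_n^k$, so the asymptotic mean-field appeal you make to Theorem~\ref{thm:asy-optimality} is neither used nor sufficient (it would not give a finite-$N$ bound in any case).
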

\emph{Proof Sketch:}  We first define a new variable $\overline{F}_n(\pi^k,p) = \frac{1}{T} \lim_{T\rightarrow\infty} (\sum_{t=1}^T a_n(t)|\pi^k,p)$ as the long term average fairness variable under policy $\pi^k$ for arm $n$ with MDP that has the true transition probability matrix $p$. We show that the fairness violation regret when the true MDP belongs to confidence ball can be upper bounded by $\mathbb{E} \big[ H\eta_n - \overline{F}_n(\pi_n^k,p) \big] $ with a constant term.

Next we introduce another variable close to $\overline{F}_n(\pi^k,p)$, that is $\overline{F}_n(\pi^k,\theta) = \frac{1}{T} \lim_{T\rightarrow\infty} (\sum_{t=1}^T a_n(t)|\pi^k,\theta)$ as the fairness variable under policy $\pi$ in episode $k$ for arm $n$ with  MDP whose transition matrix $\theta$ belongs to the confidence ball. By comparing the total variance norm of $\overline{F}_n(\pi^k,p)$ and $\overline{F}_n(\pi^k,\theta)$,  we can upper bound $\mathbb{E} \big[ H\eta_n - \overline{F}_n(\pi_n^k,p) \big] $ as $\beta_n^k(\pi_k) := 2 \big( \hat{n} + \frac{C\rho^{\hat{n}}}{1-\rho} \big)\max_{s} \sum_a \pi_k(s,a) \delta_n^k(s,a)$, where $\pi_k$ is the policy in episode $k$. In order to bound $\beta_n^k(\pi_k)$ with the expected number of counts of $(s,a)$  pair in episode $k$ $\mathbb{E} [c_n^k(s,a)]$, we leverage the mixing time $T_{mix}^n$.

The regret is further split into two terms, one of which $\sum_{k=1}^K \sum_{(s,a)} \sum_n \frac{c_n^k(s,a)}{C_n^{k-1}(s,a)}$ can be bounded as $(\sqrt{2}+1)  \sqrt{SANT} $ through the induction of sequence summation, while the other term $\sum_{k=1}^K  \sum_{(s,a)} \frac{\mathbb{E} [c_n^k(s,a)] - c_n^k(s,a)}{\sqrt{2C_{n}^{k-1}(s,a)}}$ can be upper bounded by $\sqrt{T} \sqrt{\frac{1}{4}\log{\frac{SANT}{\epsilon}}}$ via Azuma-Hoeffding’s inequality, as it can be considered as a martingale difference sequence.

\noindent \textbf{Proof Sketch of Reward Regret.}
Similar to the fairness violation regret, we first bound the reward regret  when the MDP does not belong to the confidence ball.
\begin{lemma}\label{lemma:failregret}
    The reward regret for failing the confidence ball for all $K$ episodes is bounded by $$\sum_{k=1}^K \Delta^R_k\{\pi^{*,k},\forall k\} \mathds{1}(M\notin \mathcal{M}_k) \leq B\epsilon\log{T}. $$
\end{lemma}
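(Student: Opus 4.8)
The plan is to relinquish any fine control over the policy on the failure event $\{M\notin\mathcal{M}_k\}$, bound the episodic reward regret there by its trivial worst case, multiply by the (small) probability of that event, and then sum the resulting harmonic series over episodes.

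First I would record a crude deterministic per-episode bound. Every reward satisfies $r_n(s,a)\in[0,1]$ and the \fair index policy activates at most $B$ arms at each decision epoch, so the reward collected in a single frame lies in $[0,B]$ and the reward collected over an episode of $H$ frames is nonnegative. Moreover $V^{\pi^{opt}}\le B$, since any feasible policy — including the asymptotically optimal index policy that defines $V^{\pi^{opt}}$ — activates on (long-run) average at most $B$ arms per frame, each contributing reward at most $1$. Hence $\Delta_k^R = H V^{\pi^{opt}} - \mathbb{E}_{\pi^{*,k}}[\text{reward collected in episode }k] \le H V^{\pi^{opt}} \le BH$ for every episode $k$, whether or not $M\in\mathcal{M}_k$.

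Next I would bound the failure probability. With $\delta_n^k(s,a)=\sqrt{\tfrac{1}{2C_n^k(s,a)}\log(SAN(k-1)H/\epsilon)}$, the Chernoff--Hoeffding inequality applied to each empirical transition probability and to the empirical reward at a fixed triple $(n,s,a)$ and a fixed value of the count yields a deviation probability of order $\epsilon/(SAN(k-1)H)$; a union bound over the $N$ arms, $S$ states, $A$ actions and the realizable values of the visit counts up to $\tau_k$ then gives $\Pr(M\notin\mathcal{M}_k)\le \epsilon/(kH)$, exactly the tail estimate already invoked in the proof of Lemma~\ref{lemma:fairfail}. Combining the two steps, $\mathbb{E}\big[\Delta_k^R\,\mathds{1}(M\notin\mathcal{M}_k)\big]\le BH\cdot\frac{\epsilon}{kH}=\frac{B\epsilon}{k}$, and summing over the $K=T/H$ episodes with $\sum_{k=1}^K 1/k \le 1+\ln K \le \log T$ (using $K\le T$) gives $\sum_{k=1}^K \Delta_k^R\,\mathds{1}(M\notin\mathcal{M}_k)\le B\epsilon\log T$, as claimed.

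I expect this lemma to be essentially routine, so there is no serious obstacle; the only point requiring care is the union-bound bookkeeping in the second step — matching the count-dependent form of $\delta_n^k(s,a)$ to the $1/(kH)$ scaling of $\Pr(M\notin\mathcal{M}_k)$ while simultaneously covering the transition-kernel deviations, the reward deviation and every realizable value of $C_n^k(s,a)$ — so that the constant absorbed into $\log T$ is consistent with the one used throughout the regret analysis. The crude per-episode bound is immediate once one observes that the \fair index policy never exceeds the instantaneous activation budget, so the collected reward always stays in $[0,BH]$.
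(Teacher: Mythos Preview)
Your proposal is correct and follows essentially the same argument as the paper: bound the episodic regret crudely by $HB$ (the paper does this via $\sum_n\sum_{(s,a)}c_n^k(s,a)=HB$, you via $V^{\pi^{opt}}\le B$; these are the same observation), invoke $\Pr(M\notin\mathcal{M}_k)\le \epsilon/(kH)$ from Lemma~\ref{lemma:confifail}, and sum the resulting harmonic series $B\epsilon\sum_{k=1}^K 1/k\le B\epsilon\log T$. There is no substantive difference between your route and the paper's.
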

\emph{Proof Sketch:} Similar to Lemma \ref{lemma:fairfail}, the probability of failing confidence ball is bounded by $P(M\notin \mathcal{M}_k) \leq \frac{\epsilon}{kH}.$ Summing over all episodes yields the bound. 

We then present the dominated term in the reward regret. 

\begin{lemma}\label{lemma:goodevent}
    The reward regret when the true MDP belongs to the confidence ball in each episode $k$ is bounded by 
    \begin{align*}
    \sum_{k=1}^K \Delta^R_k\{\pi^{*,k},\forall k\} \mathds{1}& (M\in \mathcal{M}_k) \nonumber\\ 
    &\leq  (\sqrt{2}+2) \sqrt{SANT}\sqrt{\log{\frac{SANT}{\epsilon}}}.
    \end{align*}
\end{lemma}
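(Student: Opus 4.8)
The plan is to bound the per-episode reward regret on the good event $M\in\mathcal{M}_k$ and then sum over the $K$ episodes. First I would express $\Delta^R_k$ in episode $k$ using the relative value function of the Bellman equation for the long-term average-reward MDP induced by the \fair index policy $\pi^{*,k}$. By optimism, the value $\tilde V^{k}$ of the optimistic MDP $M_n^k$ (with inflated rewards $\tilde r_n^k=\hat r_n^k+\delta_n^k$ and transition kernel within $\delta_n^k$ of $\hat P_n^k$) upper-bounds the value of the true asymptotically optimal index policy, so $T V^{\pi^{opt}}-\mathbb{E}_{\pi^{*,k}}[\text{reward in }k]\le \mathbb{E}[\sum_{t\in k}(\tilde r^k - r)]+(\text{transition-induced value gap})$. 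The reward part contributes $\sum_{n}\sum_{s,a} c_n^k(s,a)\,\delta_n^k(s,a)$ (each visit pays at most twice the confidence radius), and the transition part, after telescoping the relative value function across the $H$ frames in the episode, is controlled by the span of the bias and again by $\sum_{n,s,a} c_n^k(s,a)\,\delta_n^k(s,a)$ — this is exactly the standard UCRL2-style manipulation, but done arm-by-arm since the \fair index policy decouples across arms and only the first moment of the kernels enters.

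Next I would carry out the summation over episodes. Plugging in $\delta_n^k(s,a)=\sqrt{\tfrac{1}{2C_n^k(s,a)}\log\tfrac{SANT}{\epsilon}}$, the dominant quantity becomes
\begin{align*}
\sum_{k=1}^K\sum_{n}\sum_{s,a}\frac{c_n^k(s,a)}{\sqrt{C_n^{k-1}(s,a)}}\cdot\sqrt{\tfrac{1}{2}\log\tfrac{SANT}{\epsilon}}.
\end{align*}
The inner double sum over $k$ and $(s,a)$ is handled by the telescoping/pigeonhole bound $\sum_k c_n^k(s,a)/\sqrt{C_n^{k-1}(s,a)}\le (\sqrt2+1)\sqrt{C_n^{K}(s,a)}$, followed by Cauchy--Schwarz over the $SAN$ state-action-arm triples together with $\sum_{n,s,a}C_n^K(s,a)\le NT$ (since each of the $T$ steps produces one visit per arm). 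This yields the $(\sqrt2+1)\sqrt{SANT}\sqrt{\log(SANT/\epsilon)}$ term; the extra additive $\sqrt{SANT}\sqrt{\log(SANT/\epsilon)}$ comes from the $\max\{C_n^{k-1}(s,a),1\}$ truncation and from bounding the number of episodes in which $C_n^k(s,a)$ fails to at least double, which together produce the coefficient $(\sqrt2+2)$ in the statement.

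The main obstacle, and the step deserving the most care, is the transition-induced value gap: controlling $\sum_{t\in k}\sum_n\big(\tilde P_n^k(\cdot\mid s_n(t),a_n(t))-P_n(\cdot\mid s_n(t),a_n(t))\big)^\top h_n$ where $h_n$ is the relative value function. This requires (i) a uniform bound on $\mathrm{span}(h_n)$ — here I would invoke the relative-value-function / sensitivity machinery referenced via Corollary~\ref{coro:sensitivity} rather than assuming a known diameter $D$ as in \citet{xiong2022learning}, which is precisely the improvement the authors flag; and (ii) replacing the on-policy transition error along the realized trajectory with its expectation, i.e. handling $\sum_k\sum_{s,a}(\mathbb{E}[c_n^k(s,a)]-c_n^k(s,a))\delta_n^k(s,a)$ as a martingale-difference sequence and bounding it by Azuma--Hoeffding — this is lower order, absorbed into the $\tilde{\mathcal O}$. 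Once the span is bounded by an absolute constant (independent of $N$) and the $\ell_1$ deviation $\|\tilde P_n^k-P_n\|_1$ is bounded by $S\delta_n^k$ (or $2\delta_n^k$ per coordinate), the transition term reduces to the same $\sum c_n^k\delta_n^k$ sum already analyzed, and combining with Lemma~\ref{lemma:failregret} over the complementary event completes the proof of Theorem~\ref{thm:regret}'s reward bound.
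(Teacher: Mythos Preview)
Your proposal follows the standard UCRL2 route (Bellman decomposition, transition-induced value gap controlled by the span of the bias, martingale correction), but this is \emph{not} how the paper proves Lemma~\ref{lemma:goodevent}. The paper's argument is considerably more elementary and never touches relative value functions, transition errors, spans, or Azuma--Hoeffding for the reward part. Concretely, the paper uses the accounting identity $\sum_{n}\sum_{(s,a)} c_n^k(s,a)=HB$ to rewrite $H\mu^*=\sum_{n,s,a} c_n^k(s,a)\,\mu^*/B$, then adds and subtracts the optimistic reward $\tilde r_n(s,a)$ to split the per-episode regret into
\[
\sum_{n,s,a} c_n^k(s,a)\bigl(\mu^*/B-\tilde r_n(s,a)\bigr)\;+\;\sum_{n,s,a} c_n^k(s,a)\bigl(\tilde r_n(s,a)-\bar r_n(s,a)\bigr).
\]
The first sum is declared $\le 0$ directly by optimism (the optimistic per-arm reward dominates $\mu^*/B$ on the good event), and the second is bounded by $2\delta_n^k(s,a)$ per visit. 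Summing $\sum_k\sum_{n,s,a} c_n^k/\sqrt{C_n^{k-1}}\le(\sqrt2+1)\sqrt{SANT}$ via Lemma~\ref{lemma:coverc} and multiplying by $\sqrt{2\log(SANT/\epsilon)}$ produces the constant $(\sqrt2+1)\sqrt2=\sqrt2+2$. So the extra ``$+1$'' you attribute to doubling/truncation is actually the factor $2$ in front of $\delta_n^k$, not a separate term.

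Your route is not wrong in spirit, but it is overkill here and would not land on the stated bound: a Bellman/transition decomposition drags in a span or mixing-time constant multiplying the $\sqrt{SANT}$ term, whereas the paper's reward bound is deliberately free of $T_{\mathrm{mix}}^n$ (that constant appears only in the fairness regret). Also, Corollary~\ref{coro:sensitivity} bounds the total-variation sensitivity of stationary measures, not $\mathrm{span}(h_n)$; invoking it for the span requires an additional argument you have not supplied. If you want to match the paper, drop the value-function machinery entirely and use the reward-only decomposition above.
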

\emph{Proof Sketch:} We split the reward regret into two terms, $\sum_{(s,a)} \sum_n  c_{n}^k(s,a)(\mu^* /B - \tilde{r}_n(s,a))$ and $\sum_{(s,a)} \sum_n c_{n}^k(s,a) 2 \sqrt{\frac{1}{2C_{n}^{k-1}(s,a)}\log{\frac{SANkH}{\epsilon}}}$. The first term is upper bounded by 0 due to the fact that for any episode $k$, the optimistic average reward $\tilde{r}_n (s,a)$ of the optimistically chosen policy ${\tilde{\pi}}_k$ within the confidence ball is equal or larger than the true optimal average reward $\mu^*$, provided that the true MDP belongs to confidence ball. Similar to Lemma \ref{lemma:goodeventfair}, the second term can be bounded with $(\sqrt{2}+1)  \sqrt{SANT}$.

\begin{figure*}[t]
 \centering
\begin{minipage}{.24\textwidth}
 \centering
 \includegraphics[width=1\columnwidth]{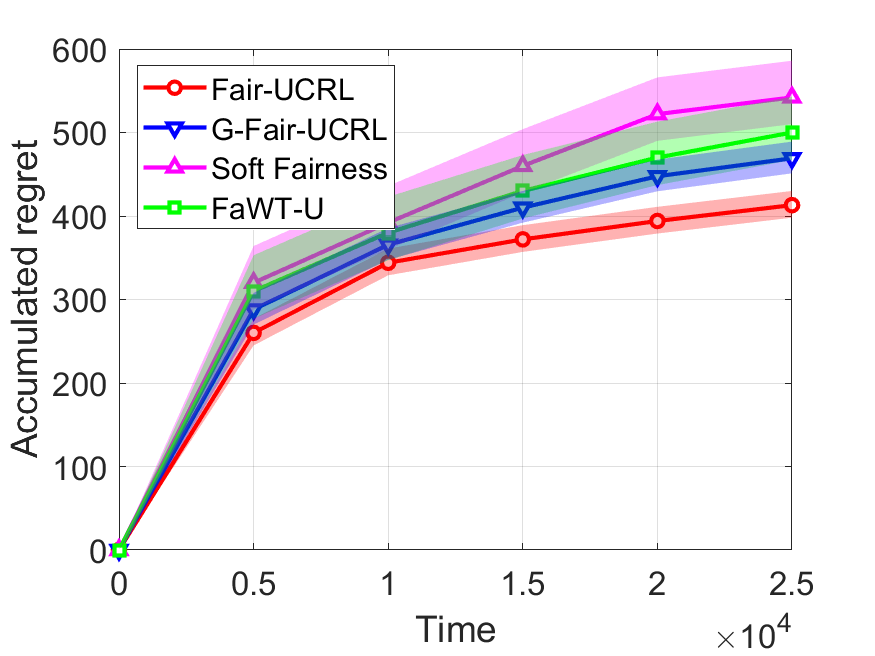}

\caption{Reward regret in simulated environments.}
 \label{fig:regetsyn}
 \end{minipage}\hfill
   \begin{minipage}{.24\textwidth}
 \centering
 \includegraphics[width=1\columnwidth]{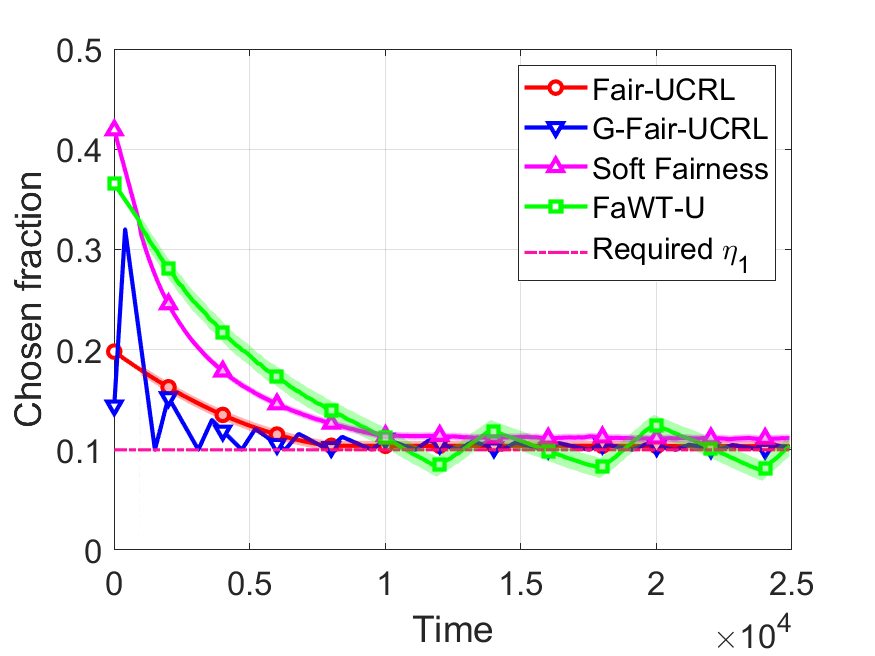}

\caption{Fairness constraint violation for arm 1.}
\label{fig:syncon1}
 \end{minipage}\hfill
 \begin{minipage}{.24\textwidth}
 \centering
 \includegraphics[width=1\columnwidth]{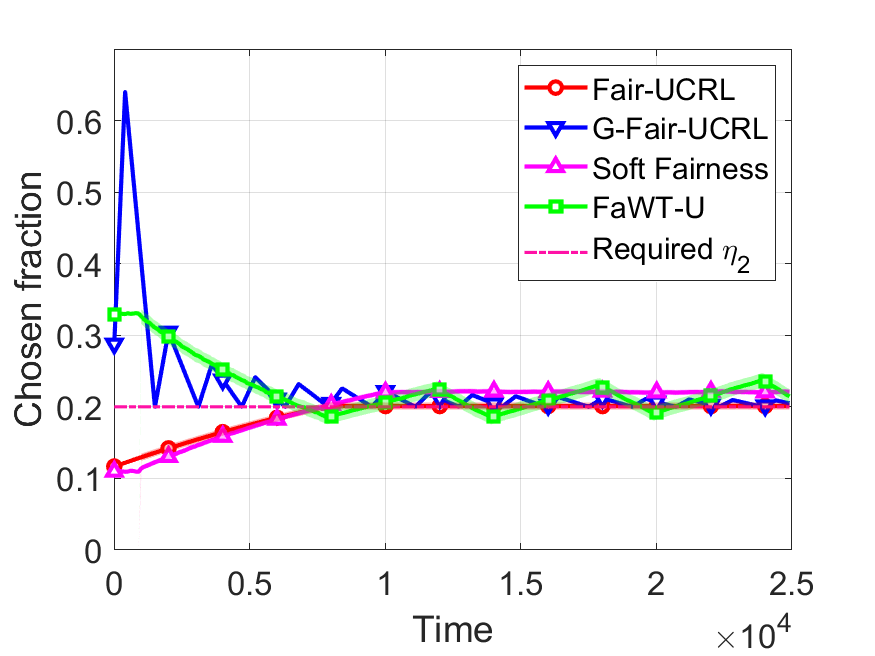}

 \caption{Fairness constraint violation for arm 2.}
\label{fig:syncon2}
 \end{minipage}\hfill
   \begin{minipage}{.24\textwidth}
 \centering
 \includegraphics[width=1\columnwidth]{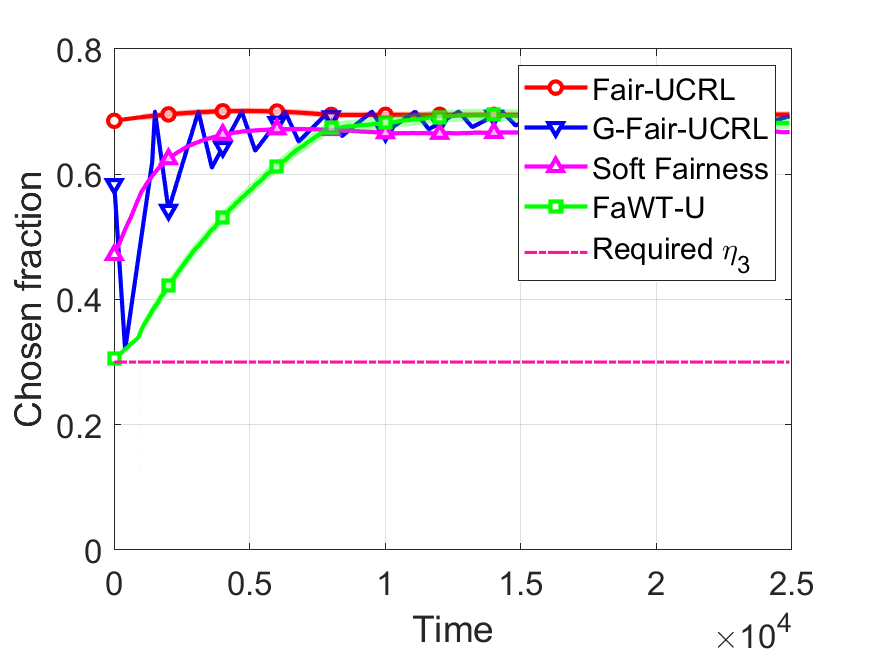}
 
\caption{Fairness constraint violation for arm 3.}
\label{fig:syncon3}
 \end{minipage}

 \end{figure*}

\begin{figure*}[t]
 \centering
\begin{minipage}{.33\textwidth}
 \centering
 \includegraphics[width=1\columnwidth]{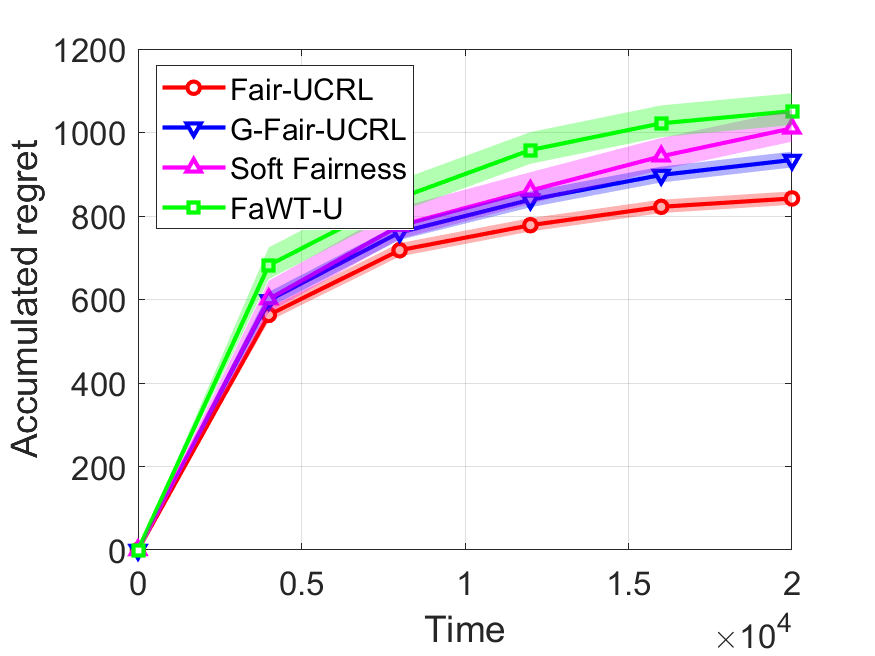}
 
\subcaption{Reward regret.}
 \label{fig:cpapregret}
 \end{minipage}\hfill
   \begin{minipage}{.33\textwidth}
 \centering
 \includegraphics[width=1\columnwidth]{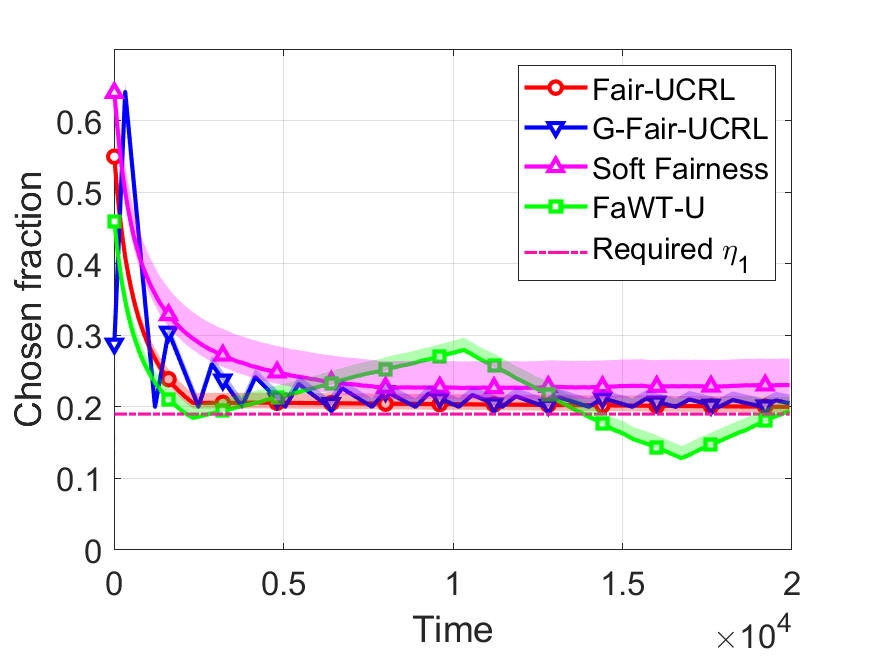}

\subcaption{Fairness constraint violation for arm 1.}
\label{fig:cpapcon1}
 \end{minipage}\hfill
 \begin{minipage}{.33\textwidth}
 \centering
 \includegraphics[width=1\columnwidth]{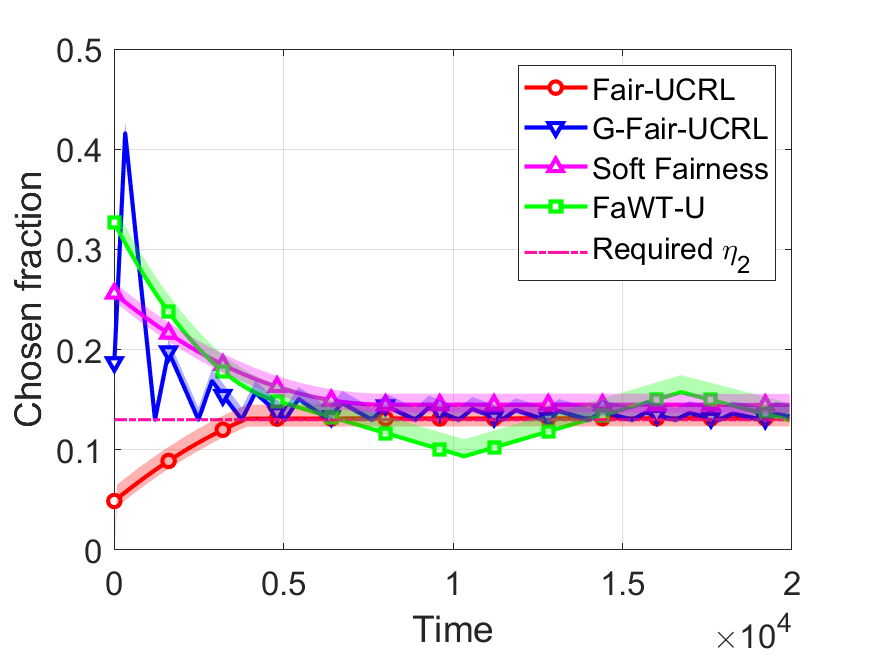}

 \subcaption{Fairness constraint violation for arm 2.}
\label{fig:cpapcon2}
 \end{minipage}\hfill

 \caption{Continuous positive airway pressure therapy.} 
 \end{figure*}


\section{Experiments}\label{sec:exp}

In this section, we first evaluate the performance of \fairucrl in simulated environments, and then demonstrate the utility of \fairucrl by evaluating it under three real-world applications of \rmab. 

\subsection{Evaluation in Simulated Environments}

\textbf{Settings.} We consider $3$  classes of arms, each including 100 duplicates with state space  $\mathcal{S} \in\{0,1,2,3,4,5\}$.  Class-$n$ arm arrives with rate $\lambda_n = 3n$ for $n=1,2,3$, and departs with a fixed rate of $\mu = 5$. We consider a controlled Markov chain in which states evolve as a specific birth-and-death process, i.e., state $s$ only transits to $s+1$ or $s-1$ with probability $P(s,s+1) = {\lambda}/{(\lambda+\mu)}$ or $P(s,s-1) = {\mu}/{(\lambda+\mu)}$, respectively.  Class-$n$ arm generates a random reward $r_n(s) \sim Ber(sp_n)$, with $p_n$ uniformly sampled from $[0.01,0.1]$. The activation budget is set to 100.  The minimum activation fraction $\eta$ is set to be {0.1, 0.2 and 0.3} for the three classes of arms, respectively. We set $K=H=160$. We use Monte Carlo simulations with $1,000$ independent trials.

\noindent\textbf{Baselines.} We compare \fairucrl with three baselines: (1) \textit{FaWT-U} \citep{li2022efficient} activates arms based on their Whittle indices.  If the fairness constraint is not met for an arm after a certain time, FaWT-U always activates that arm regardless of its Whittle index.  (2) \textit{Soft Fairness} \citep{li2022towards} incorporates softmax based value iteration method into the \rmab setting. 
Since both algorithms are designed for infinite-horizon discounted reward settings, we choose the discounted factor to be 0.999 for fair comparisons with our \fairucrl, which is designed for infinite-horizon average-reward settings. 
(3) \gfairucrl: We modify our proposed \fairucrl by greedily enforcing the fairness constraint satisfaction in each episode. Specifically, at the beginning of each episode, \gfairucrl randomly pulls an arm to force each arm $n$ to be pulled $H\eta_n$ times. This greedy exploration will take $\lceil \frac{\sum_{n=1}^N H\eta_n}{B}\rceil$ decision epochs in total in each episode. \gfairucrl then operates in the same manner as \fairucrl in the rest of this episode. More details on \gfairucrl are provided in supplementary materials.

\noindent\textbf{Reward Regret.} The accumulated reward regrets are presented in Figure~\ref{fig:regetsyn}, where we use Monte Carlo simulations with $1,000$ independent trials. \fairucrl achieves the lowest accumulated reward regret. More importantly, this is consistent with our theoretical analysis (see Theorem~\ref{thm:regret}), while neither FaWT-U nor Soft Fairness provides a finite-time analysis, i.e., nor provable regret bound guarantees.

\noindent\textbf{Fairness Constraint Violation.} The activation fraction for each arm over time under different policies are presented in Figures~\ref{fig:syncon1},~\ref{fig:syncon2} and~\ref{fig:syncon3}, respectively.  After a certain amount of time, the minimum activation fraction for each arm under \fairucrl is always satisfied, and a randomized initialization may cause short term fairness violation, for example, after $6,500$ time steps for arm 2, even though the constraint needs to be satisfied on average. 
Similar observations hold for Soft Fairness, while for FaWT-U, fairness constraint violation repeatedly occurs over time for arm 1 and arm 2.

\begin{figure*}[t]
\centering
\begin{minipage}{.33\textwidth}
 \centering
 \includegraphics[width=1\columnwidth]{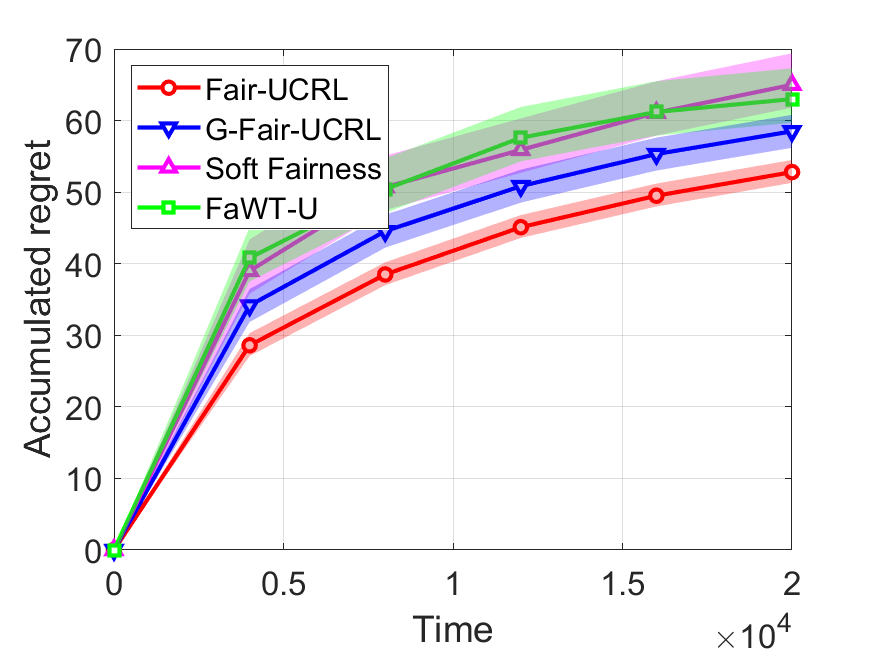}
 
\subcaption{Reward regret.}
\label{fig:schedulingreg}
 \end{minipage}\hfill
   \begin{minipage}{.33\textwidth}
 \centering
 \includegraphics[width=1\columnwidth]{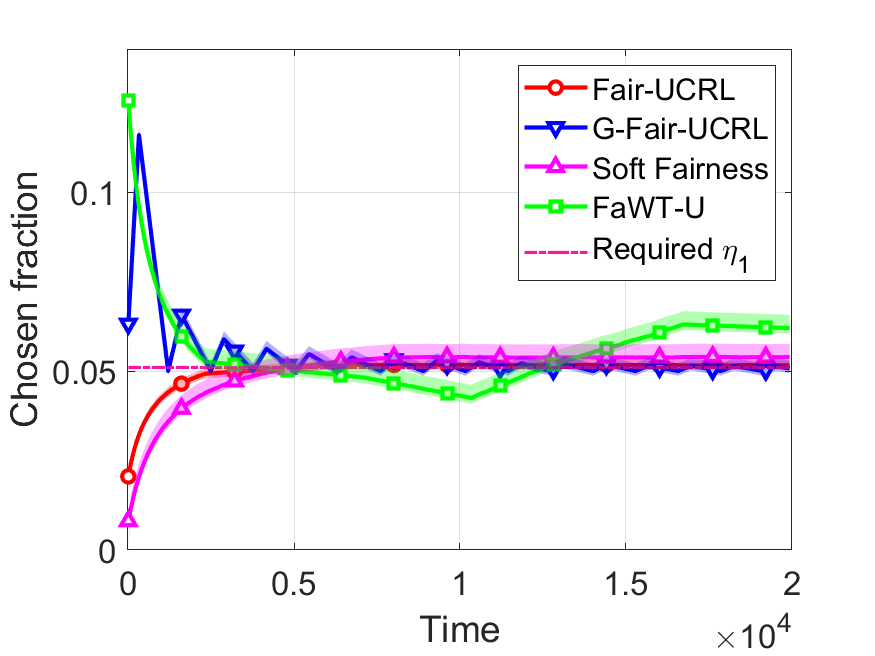}
 
\subcaption{Fairness constraint violation for arm 1.}
\label{fig:schedulingcon1}
 \end{minipage}
 \begin{minipage}{.33\textwidth}
 \centering
 \includegraphics[width=1\columnwidth]{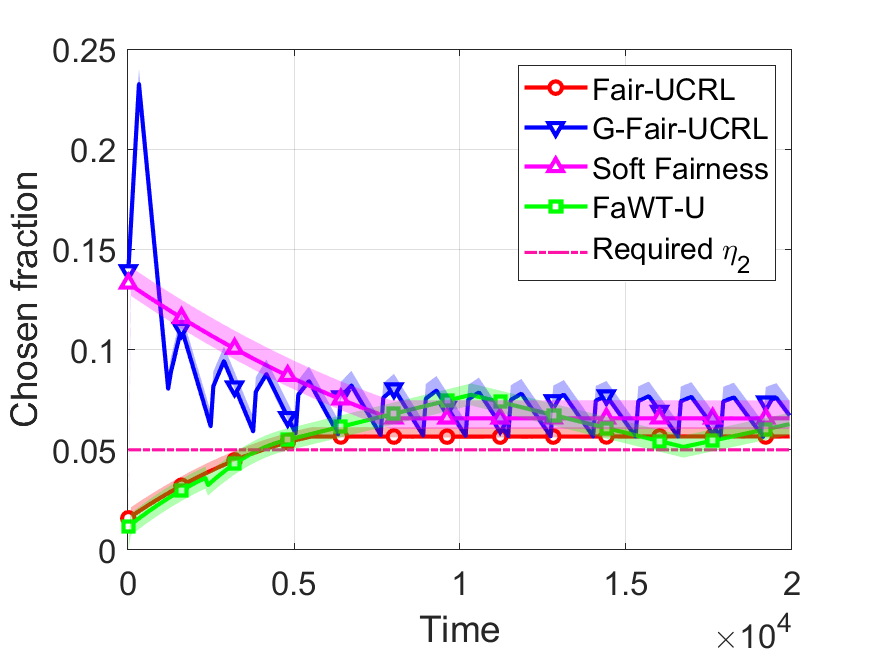}
 
\subcaption{Fairness constraint violation for arm 2.}
\label{fig:schedulingcon2}
 \end{minipage}

 \caption{PASCAL recognizing textual entailment task.}
 
 \end{figure*}

\begin{figure*}[t]
\centering
\begin{minipage}{.33\textwidth}
 \centering
 \includegraphics[width=1\columnwidth]{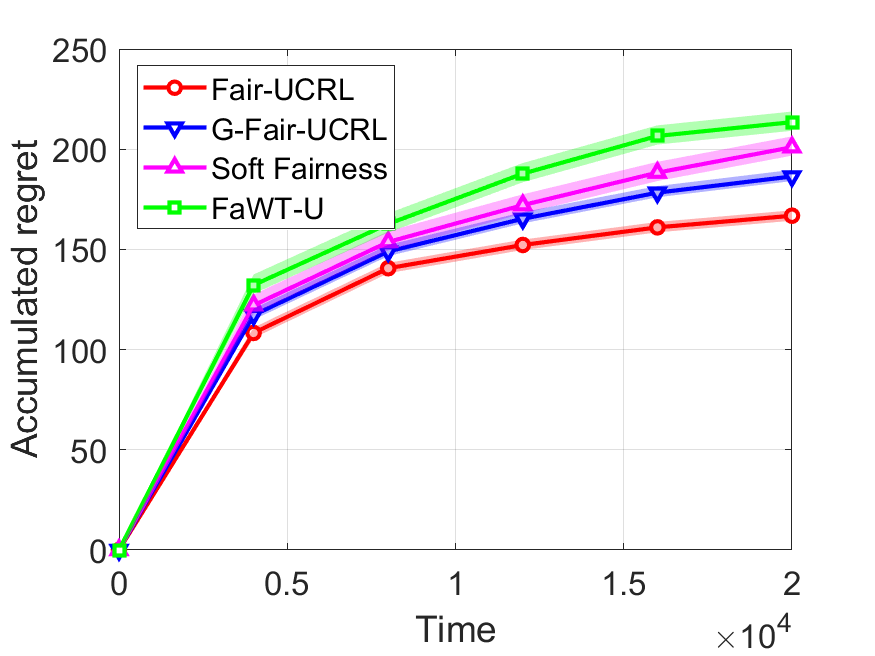}
 
\subcaption{Reward regret.}
\label{fig:landreg}
 \end{minipage}\hfill
   \begin{minipage}{.33\textwidth}
 \centering
 \includegraphics[width=1\columnwidth]{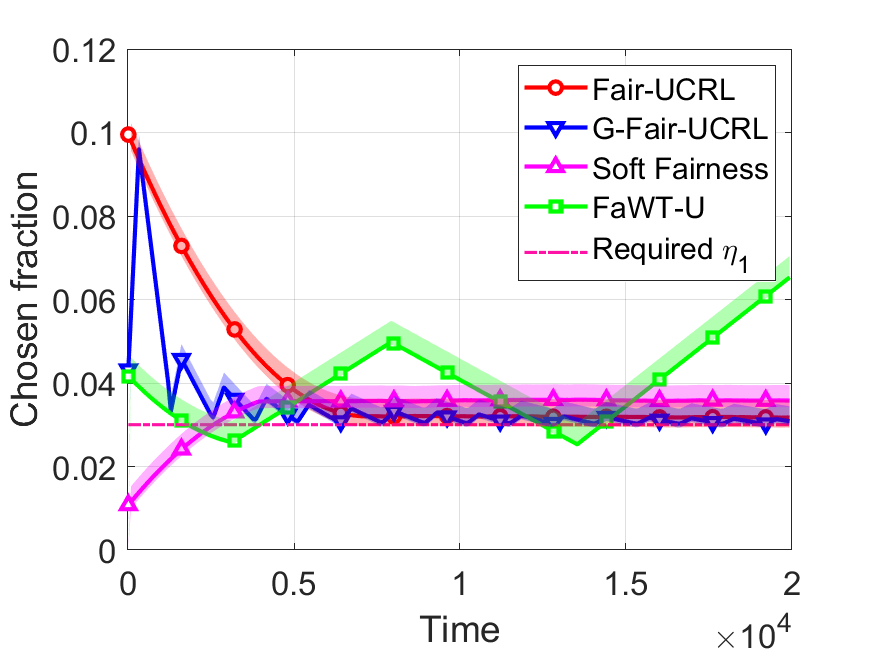}

\subcaption{Fairness constraint violation for arm 1.}
\label{fig:landcon}
 \end{minipage}
 \begin{minipage}{.33\textwidth}
 \centering
 \includegraphics[width=1\columnwidth]{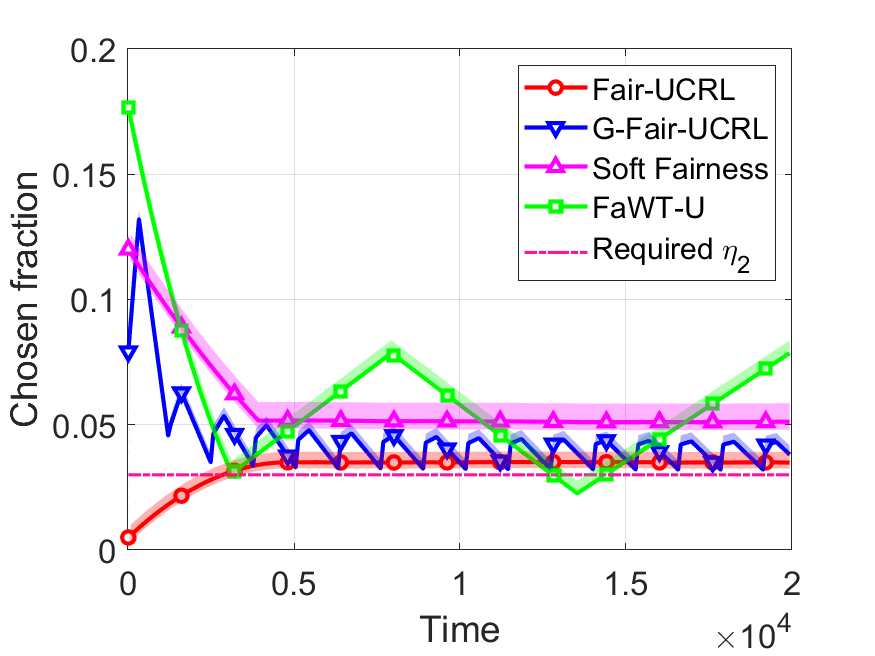}

\subcaption{Fairness constraint violation for arm 2.}
\label{fig:landcon2}
 \end{minipage}
 
 \caption{Land mobile satellite system.}

 \end{figure*}

\subsection{Continuous Positive Airway Pressure Therapy} 
We study the continuous positive airway pressure therapy (CPAP) as in \citet{herlihy2023planning,li2022towards}, which is a highly effective treatment when it is used consistently during the sleeping for adults with obstructive sleep apnea.  Similar non-adherence to CPAP in patients hinders the effectiveness, 
we adapt the Markov model of CPAP adherence behavior \citep{kang2013markov} to a two-state system with the clinical adherence criteria.  Specifically, there are 3 states, representing low, intermediate and acceptable adherence levels. Patients are clustered into two groups, ``Adherence'' and ``Non-Adherence''. 
The first group has a higher probability of staying in a good adherence level.  There are 20 arms/patients with 10 in each group.  
The transition matrix of arms in each group contains a randomized, small noise from the original data. The intervention, which is the action applied to each arm, results in a 5\% to 50\% increase in adherence level. The budget is $B = 5$ and  the fairness constraint is set to be a random number between [0.1, 0.7].  The objective is to maximize the total adherence level. The accumulated reward regret and the activation fraction for two randomly selected arms are presented in Figures~\ref{fig:cpapregret},~\ref{fig:cpapcon1} and~\ref{fig:cpapcon2}, respectively.  Again, we observe that \fairucrl achieves a much smaller reward regret and the fairness constraint is always satisfied after a certain amount of time.

\subsection{PASCAL Recognizing Textual Entailment}
We study the PASCAL recognizing textual entailment task
as in \citet{snow2008cheap}.  Workers are assigned with tasks that determine if \emph{hypothesis} can be inferred from \emph{text}. There are 10 workers. Due to lack of background information, a worker may not be able to correctly annotate a task. We assign a ``successful annotation probability'' to each worker, which is based on the average success rate over 800 tasks in the dataset. Each worker is a MDP with state 1 (correctly annotated) and 0 (otherwise).  The transition probability from state 0 to 1 with $a=1$ is the same as that of staying at state 1 with $a=1$, which is set as the successful annotation probability.  Reward is $1$ if a selected worker successfully annotates the task, and $0$ otherwise.  At each time, 3 tasks are generated (i.e., $B = 3$) and distributed to workers. 
Fairness constraints for all workers are set to be $\eta = 0.05$.  Again, both proposed algorithms outperform two baselines and maintain higher selection fraction as shown in Figures~\ref{fig:schedulingreg}, ~\ref{fig:schedulingcon1} and~\ref{fig:schedulingcon2} for two randomly selected arms, respectively.

\subsection{Land Mobile Satellite System}
We study the land mobile satellite system problem as in \citet{https://doi.org/10.1002/sat.964}, in which the land mobile satellite broadcasts a signal carrying multimedia services to handheld devices. There are 4 arms with different elevation angles ($40^{\circ}, 60^{\circ}, 70^{\circ}, 80^{\circ}$) of the antenna in urban area. Only two states (\emph{Good} and \emph{bad}) are considered and we leverage the same transition matrix as in \citet{https://doi.org/10.1002/sat.964}. Similar, we use the average direct signal mean as the reward function. The budget is  $B=2$. We apply the fairness constraint $\eta = 0.03$ to all angles. Again, \fairucrl outperforms the considered baselines in reward regret (Figure~\ref{fig:landreg}), while satisfies long term average fairness constraint (Figures~\ref{fig:landcon} and~\ref{fig:landcon2} for two randomly selected arms).

\clearpage

\section*{Acknowledgements} 

This work was supported in part by the National Science Foundation (NSF) grants 2148309 and 2315614, and was supported in part by funds from OUSD R\&E, NIST, and industry partners as specified in the Resilient \& Intelligent NextG Systems (RINGS) program. This work was also supported in part by the U.S. Army Research Office (ARO) grant W911NF-23-1-0072, and the U.S. Department of Energy (DOE) grant DE-EE0009341. Any opinions, findings, and conclusions or recommendations expressed in this material are those of the authors and do not necessarily reflect the views of the funding agencies.
\bibliography{aaai24}

\begin{thebibliography}{62}
\providecommand{\natexlab}[1]{#1}

\bibitem[{Akbarzadeh and Mahajan(2022)}]{akbarzadeh2022learning}
Akbarzadeh, N.; and Mahajan, A. 2022.
\newblock On learning Whittle index policy for restless bandits with scalable
  regret.
\newblock \emph{arXiv preprint arXiv:2202.03463}.

\bibitem[{Altman(1999)}]{altman1999constrained}
Altman, E. 1999.
\newblock \emph{Constrained Markov decision processes}, volume~7.
\newblock CRC Press.

\bibitem[{Avrachenkov and Borkar(2022)}]{avrachenkov2022whittle}
Avrachenkov, K.~E.; and Borkar, V.~S. 2022.
\newblock Whittle index based Q-learning for restless bandits with average
  reward.
\newblock \emph{Automatica}, 139: 110186.

\bibitem[{Bhattacharya(2018)}]{bhattacharya2018restless}
Bhattacharya, B. 2018.
\newblock Restless bandits visiting villages: A preliminary study on
  distributing public health services.
\newblock In \emph{Proceedings of the 1st ACM SIGCAS Conference on Computing
  and Sustainable Societies}, 1--8.

\bibitem[{Biswas et~al.(2021)Biswas, Aggarwal, Varakantham, and
  Tambe}]{biswas2021learn}
Biswas, A.; Aggarwal, G.; Varakantham, P.; and Tambe, M. 2021.
\newblock Learn to intervene: An adaptive learning policy for restless bandits
  in application to preventive healthcare.
\newblock In \emph{Proc. of IJCAI}.

\bibitem[{Biswas et~al.(2023)Biswas, Killian, Diaz, Ghosh, and
  Tambe}]{biswas2023fairness}
Biswas, A.; Killian, J.~A.; Diaz, P.~R.; Ghosh, S.; and Tambe, M. 2023.
\newblock Fairness for Workers Who Pull the Arms: An Index Based Policy for
  Allocation of Restless Bandit Tasks.
\newblock \emph{arXiv preprint arXiv:2303.00799}.

\bibitem[{Borkar, Ravikumar, and Saboo(2017)}]{borkar2017index}
Borkar, V.~S.; Ravikumar, K.; and Saboo, K. 2017.
\newblock An index policy for dynamic pricing in cloud computing under price
  commitments.
\newblock \emph{Applicationes Mathematicae}, 44: 215--245.

\bibitem[{Chen, Jain, and Luo(2022)}]{chen2022learning}
Chen, L.; Jain, R.; and Luo, H. 2022.
\newblock Learning Infinite-Horizon Average-Reward Markov Decision Processes
  with Constraints.
\newblock \emph{arXiv preprint arXiv:2202.00150}.

\bibitem[{Chen et~al.(2020)Chen, Cuellar, Luo, Modi, Nemlekar, and
  Nikolaidis}]{chen2020fair}
Chen, Y.; Cuellar, A.; Luo, H.; Modi, J.; Nemlekar, H.; and Nikolaidis, S.
  2020.
\newblock Fair contextual multi-armed bandits: Theory and experiments.
\newblock In \emph{Conference on Uncertainty in Artificial Intelligence},
  181--190. PMLR.

\bibitem[{Cinlar(1975)}]{cinlar1975introduction}
Cinlar, E. 1975.
\newblock Introduction to stochastic processes Prentice-Hall.
\newblock \emph{Englewood Cliffs, New Jersey (420p)}.

\bibitem[{Cohen, Zhao, and Scaglione(2014)}]{cohen2014restless}
Cohen, K.; Zhao, Q.; and Scaglione, A. 2014.
\newblock Restless multi-armed bandits under time-varying activation
  constraints for dynamic spectrum access.
\newblock In \emph{2014 48th Asilomar Conference on Signals, Systems and
  Computers}, 1575--1578. IEEE.

\bibitem[{Dai et~al.(2011)Dai, Gai, Krishnamachari, and Zhao}]{dai2011non}
Dai, W.; Gai, Y.; Krishnamachari, B.; and Zhao, Q. 2011.
\newblock {The Non-Bayesian Restless Multi-Armed Bandit: A Case of
  Near-Logarithmic Regret}.
\newblock In \emph{Proc. of IEEE ICASSP}.

\bibitem[{D'Amour et~al.(2020)D'Amour, Srinivasan, Atwood, Baljekar, Sculley,
  and Halpern}]{d2020fairness}
D'Amour, A.; Srinivasan, H.; Atwood, J.; Baljekar, P.; Sculley, D.; and
  Halpern, Y. 2020.
\newblock Fairness is not static: deeper understanding of long term fairness
  via simulation studies.
\newblock In \emph{Proceedings of the 2020 Conference on Fairness,
  Accountability, and Transparency}, 525--534.

\bibitem[{Duran and Verloop(2018)}]{duran2018asymptotic}
Duran, S.; and Verloop, I.~M. 2018.
\newblock Asymptotic optimal control of Markov-modulated restless bandits.
\newblock \emph{Proceedings of the ACM on Measurement and Analysis of Computing
  Systems}, 2(1): 1--25.

\bibitem[{Efroni, Mannor, and Pirotta(2020)}]{efroni2020exploration}
Efroni, Y.; Mannor, S.; and Pirotta, M. 2020.
\newblock {Exploration-Exploitation in Constrained MDPs}.
\newblock \emph{arXiv preprint arXiv:2003.02189}.

\bibitem[{Fu et~al.(2019)Fu, Nazarathy, Moka, and Taylor}]{fu2019towards}
Fu, J.; Nazarathy, Y.; Moka, S.; and Taylor, P.~G. 2019.
\newblock Towards q-learning the whittle index for restless bandits.
\newblock In \emph{2019 Australian \& New Zealand Control Conference (ANZCC)},
  249--254. IEEE.

\bibitem[{Gast and Bruno(2010)}]{gast2010mean}
Gast, N.; and Bruno, G. 2010.
\newblock A mean field model of work stealing in large-scale systems.
\newblock \emph{ACM SIGMETRICS Performance Evaluation Review}, 38(1): 13--24.

\bibitem[{Glazebrook, Hodge, and Kirkbride(2011)}]{glazebrook2011general}
Glazebrook, K.~D.; Hodge, D.~J.; and Kirkbride, C. 2011.
\newblock General notions of indexability for queueing control and asset
  management.
\newblock \emph{The Annals of Applied Probability}, 21(3): 876--907.

\bibitem[{Herlihy et~al.(2023)Herlihy, Prins, Srinivasan, and
  Dickerson}]{herlihy2023planning}
Herlihy, C.; Prins, A.; Srinivasan, A.; and Dickerson, J.~P. 2023.
\newblock Planning to fairly allocate: Probabilistic fairness in the restless
  bandit setting.
\newblock In \emph{Proceedings of the 29th ACM SIGKDD Conference on Knowledge
  Discovery and Data Mining}, 732--740.

\bibitem[{Hodge and Glazebrook(2015)}]{hodge2015asymptotic}
Hodge, D.~J.; and Glazebrook, K.~D. 2015.
\newblock On the asymptotic optimality of greedy index heuristics for
  multi-action restless bandits.
\newblock \emph{Advances in Applied Probability}, 47(3): 652--667.

\bibitem[{Hu and Frazier(2017)}]{hu2017asymptotically}
Hu, W.; and Frazier, P. 2017.
\newblock {An Asymptotically Optimal Index Policy for Finite-Horizon Restless
  Bandits}.
\newblock \emph{arXiv preprint arXiv:1707.00205}.

\bibitem[{Jaksch, Ortner, and Auer(2010)}]{jaksch2010near}
Jaksch, T.; Ortner, R.; and Auer, P. 2010.
\newblock {Near-Optimal Regret Bounds for Reinforcement Learning}.
\newblock \emph{Journal of Machine Learning Research}, 11(4).

\bibitem[{Jung, Abeille, and Tewari(2019)}]{jung2019thompson}
Jung, Y.~H.; Abeille, M.; and Tewari, A. 2019.
\newblock {Thompson Sampling in Non-Episodic Restless Bandits}.
\newblock \emph{arXiv preprint arXiv:1910.05654}.

\bibitem[{Jung and Tewari(2019)}]{jung2019regret}
Jung, Y.~H.; and Tewari, A. 2019.
\newblock {Regret Bounds for Thompson Sampling in Episodic Restless Bandit
  Problems}.
\newblock \emph{Proc. of NeurIPS}.

\bibitem[{Kalagarla, Jain, and Nuzzo(2021)}]{kalagarla2020sample}
Kalagarla, K.~C.; Jain, R.; and Nuzzo, P. 2021.
\newblock {A Sample-Efficient Algorithm for Episodic Finite-Horizon MDP with
  Constraints}.
\newblock In \emph{Proc. of AAAI}.

\bibitem[{Kang et~al.(2013)Kang, Prabhu, Sawyer, and Griffin}]{kang2013markov}
Kang, Y.; Prabhu, V.~V.; Sawyer, A.~M.; and Griffin, P.~M. 2013.
\newblock Markov models for treatment adherence in obstructive sleep apnea.
\newblock In \emph{IIE Annual Conference. Proceedings}, 1592. Institute of
  Industrial and Systems Engineers (IISE).

\bibitem[{Killian et~al.(2021)Killian, Biswas, Shah, and Tambe}]{killian2021q}
Killian, J.~A.; Biswas, A.; Shah, S.; and Tambe, M. 2021.
\newblock Q-Learning Lagrange Policies for Multi-Action Restless Bandits.
\newblock In \emph{Proc. of ACM SIGKDD}.

\bibitem[{Killian, Perrault, and Tambe(2021)}]{killian2021beyond}
Killian, J.~A.; Perrault, A.; and Tambe, M. 2021.
\newblock {Beyond" To Act or Not to Act": Fast Lagrangian Approaches to General
  Multi-Action Restless Bandits}.
\newblock In \emph{Proc.of AAMAS}.

\bibitem[{Larra{\~n}aga, Ayesta, and Verloop(2014)}]{larranaga2014index}
Larra{\~n}aga, M.; Ayesta, U.; and Verloop, I.~M. 2014.
\newblock {Index Policies for A Multi-Class Queue with Convex Holding Cost and
  Abandonments}.
\newblock In \emph{Proc. of ACM Sigmetrics}.

\bibitem[{Lattimore and Szepesv{\'a}ri(2020)}]{lattimore2020bandit}
Lattimore, T.; and Szepesv{\'a}ri, C. 2020.
\newblock \emph{{Bandit Algorithms}}.
\newblock Cambridge University Press.

\bibitem[{Li and Varakantham(2022{\natexlab{a}})}]{li2022efficient}
Li, D.; and Varakantham, P. 2022{\natexlab{a}}.
\newblock Efficient Resource Allocation with Fairness Constraints in Restless
  Multi-Armed Bandits.
\newblock In \emph{Proc. of UAI}.

\bibitem[{Li and Varakantham(2022{\natexlab{b}})}]{li2022towards}
Li, D.; and Varakantham, P. 2022{\natexlab{b}}.
\newblock Towards Soft Fairness in Restless Multi-Armed Bandits.
\newblock \emph{arXiv preprint arXiv:2207.13343}.

\bibitem[{Li, Liu, and Ji(2019)}]{li2019combinatorial}
Li, F.; Liu, J.; and Ji, B. 2019.
\newblock Combinatorial sleeping bandits with fairness constraints.
\newblock \emph{IEEE Transactions on Network Science and Engineering}, 7(3):
  1799--1813.

\bibitem[{Liu, Liu, and Zhao(2011)}]{liu2011logarithmic}
Liu, H.; Liu, K.; and Zhao, Q. 2011.
\newblock {Logarithmic Weak Regret of Non-Bayesian Restless Multi-Armed
  Bandit}.
\newblock In \emph{Proc of IEEE ICASSP}.

\bibitem[{Liu, Liu, and Zhao(2012)}]{liu2012learning}
Liu, H.; Liu, K.; and Zhao, Q. 2012.
\newblock {Learning in A Changing World: Restless Multi-Armed Bandit with
  Unknown Dynamics}.
\newblock \emph{IEEE Transactions on Information Theory}, 59(3): 1902--1916.

\bibitem[{Mate, Perrault, and Tambe(2021)}]{mate2021risk}
Mate, A.; Perrault, A.; and Tambe, M. 2021.
\newblock {Risk-Aware Interventions in Public Health: Planning with Restless
  Multi-Armed Bandits}.
\newblock In \emph{Proc.of AAMAS}.

\bibitem[{Maurer and Pontil(2009)}]{maurer2009empirical}
Maurer, A.; and Pontil, M. 2009.
\newblock {Empirical Bernstein Bounds and Sample Variance Penalization}.
\newblock \emph{arXiv preprint arXiv:0907.3740}.

\bibitem[{Mitrophanov(2005)}]{mitrophanov2005sensitivity}
Mitrophanov, A.~Y. 2005.
\newblock Sensitivity and convergence of uniformly ergodic Markov chains.
\newblock \emph{Journal of Applied Probability}, 42(4): 1003--1014.

\bibitem[{Nakhleh et~al.(2021)Nakhleh, Ganji, Hsieh, Hou, Shakkottai
  et~al.}]{nakhleh2021neurwin}
Nakhleh, K.; Ganji, S.; Hsieh, P.-C.; Hou, I.; Shakkottai, S.; et~al. 2021.
\newblock NeurWIN: Neural Whittle Index Network For Restless Bandits Via Deep
  RL.
\newblock \emph{Proc. of NeurIPS}.

\bibitem[{Nakhleh, Hou et~al.(2022)}]{nakhleh2022deeptop}
Nakhleh, K.; Hou, I.; et~al. 2022.
\newblock DeepTOP: Deep Threshold-Optimal Policy for MDPs and RMABs.
\newblock In \emph{Proc. of NeurIPS}.

\bibitem[{Ni{\~n}o-Mora(2007)}]{nino2007dynamic}
Ni{\~n}o-Mora, J. 2007.
\newblock {Dynamic Priority Allocation via Restless Bandit Marginal
  Productivity Indices}.
\newblock \emph{Top}, 15(2): 161--198.

\bibitem[{Ortner et~al.(2012)Ortner, Ryabko, Auer, and
  Munos}]{ortner2012regret}
Ortner, R.; Ryabko, D.; Auer, P.; and Munos, R. 2012.
\newblock {Regret Bounds for Restless Markov Bandits}.
\newblock In \emph{Proc. of Algorithmic Learning Theory}.

\bibitem[{Papadimitriou and Tsitsiklis(1994)}]{papadimitriou1994complexity}
Papadimitriou, C.~H.; and Tsitsiklis, J.~N. 1994.
\newblock {The Complexity of Optimal Queueing Network Control}.
\newblock In \emph{Proc. of IEEE Conference on Structure in Complexity Theory}.

\bibitem[{Prieto-Cerdeira et~al.(2010)Prieto-Cerdeira, Perez-Fontan,
  Burzigotti, Bolea-Alamañac, and
  Sanchez-Lago}]{https://doi.org/10.1002/sat.964}
Prieto-Cerdeira, R.; Perez-Fontan, F.; Burzigotti, P.; Bolea-Alamañac, A.; and
  Sanchez-Lago, I. 2010.
\newblock Versatile two-state land mobile satellite channel model with first
  application to DVB-SH analysis.
\newblock \emph{International Journal of Satellite Communications and
  Networking}, 28(5-6): 291--315.

\bibitem[{Puterman(1994)}]{puterman1994markov}
Puterman, M.~L. 1994.
\newblock \emph{{Markov Decision Processes: Discrete Stochastic Dynamic
  Programming}}.
\newblock John Wiley \& Sons.

\bibitem[{Sheng, Liu, and Saigal(2014)}]{sheng2014data}
Sheng, S.-P.; Liu, M.; and Saigal, R. 2014.
\newblock {Data-Driven Channel Modeling Using Spectrum Measurement}.
\newblock \emph{IEEE Transactions on Mobile Computing}, 14(9): 1794--1805.

\bibitem[{Singh, Gupta, and Shroff(2020)}]{singh2020learning}
Singh, R.; Gupta, A.; and Shroff, N.~B. 2020.
\newblock Learning in Markov decision processes under constraints.
\newblock \emph{arXiv preprint arXiv:2002.12435}.

\bibitem[{Snow et~al.(2008)Snow, O’connor, Jurafsky, and Ng}]{snow2008cheap}
Snow, R.; O’connor, B.; Jurafsky, D.; and Ng, A.~Y. 2008.
\newblock Cheap and fast--but is it good? Evaluating non-expert annotations for
  natural language tasks.
\newblock In \emph{Proceedings of the 2008 conference on empirical methods in
  natural language processing}, 254--263.

\bibitem[{Tekin and Liu(2011)}]{tekin2011adaptive}
Tekin, C.; and Liu, M. 2011.
\newblock {Adaptive Learning of Uncontrolled Restless Bandits with Logarithmic
  Regret}.
\newblock In \emph{Proc. of Allerton}.

\bibitem[{Tekin and Liu(2012)}]{tekin2012online}
Tekin, C.; and Liu, M. 2012.
\newblock {Online Learning of Rested and Restless Bandits}.
\newblock \emph{IEEE Transactions on Information Theory}, 58(8): 5588--5611.

\bibitem[{Verloop(2016)}]{verloop2016asymptotically}
Verloop, I.~M. 2016.
\newblock {Asymptotically Optimal Priority Policies for Indexable and
  Nonindexable Restless Bandits}.
\newblock \emph{The Annals of Applied Probability}, 26(4): 1947--1995.

\bibitem[{Wang, Huang, and Lui(2020)}]{wang2020restless}
Wang, S.; Huang, L.; and Lui, J. 2020.
\newblock {Restless-UCB, an Efficient and Low-complexity Algorithm for Online
  Restless Bandits}.
\newblock In \emph{Proc. of NeurIPS}.

\bibitem[{Weber and Weiss(1990)}]{weber1990index}
Weber, R.~R.; and Weiss, G. 1990.
\newblock {On An Index Policy for Restless Bandits}.
\newblock \emph{Journal of Applied Probability}, 637--648.

\bibitem[{Whittle(1988)}]{whittle1988restless}
Whittle, P. 1988.
\newblock {Restless Bandits: Activity Allocation in A Changing World}.
\newblock \emph{Journal of Applied Probability}, 287--298.

\bibitem[{Xiong and Li(2023)}]{xiong2023finite}
Xiong, G.; and Li, J. 2023.
\newblock Finite-Time Analysis of Whittle Index based Q-Learning for Restless
  Multi-Armed Bandits with Neural Network Function Approximation.
\newblock In \emph{Proc. of NeurIPS}.

\bibitem[{Xiong, Li, and Singh(2022)}]{xiong2022reinforcement}
Xiong, G.; Li, J.; and Singh, R. 2022.
\newblock Reinforcement Learning Augmented Asymptotically Optimal Index Policy
  for Finite-Horizon Restless Bandits.
\newblock In \emph{Proc. of AAAI}.

\bibitem[{Xiong, Wang, and Li(2022)}]{xiong2022learning}
Xiong, G.; Wang, S.; and Li, J. 2022.
\newblock Learning Infinite-Horizon Average-Reward Restless Multi-Action
  Bandits via Index Awareness.
\newblock In \emph{Proc. of NeurIPS}.

\bibitem[{Xiong et~al.(2022)Xiong, Wang, Yan, and
  Li}]{xiong2022reinforcementcache}
Xiong, G.; Wang, S.; Yan, G.; and Li, J. 2022.
\newblock {Reinforcement Learning for Dynamic Dimensioning of Cloud Caches: A
  Restless Bandit Approach}.
\newblock In \emph{Proc. of IEEE INFOCOM}.

\bibitem[{Yin et~al.(2023)Yin, Raab, Liu, and Liu}]{yin2023long}
Yin, T.; Raab, R.; Liu, M.; and Liu, Y. 2023.
\newblock Long-Term Fairness with Unknown Dynamics.
\newblock \emph{arXiv preprint arXiv:2304.09362}.

\bibitem[{Zayas-Cab{\'a}n, Jasin, and Wang(2019)}]{zayas2019asymptotically}
Zayas-Cab{\'a}n, G.; Jasin, S.; and Wang, G. 2019.
\newblock {An Asymptotically Optimal Heuristic for General Nonstationary
  Finite-Horizon Restless Multi-Armed, Multi-Action Bandits}.
\newblock \emph{Advances in Applied Probability}, 51(3): 745--772.

\bibitem[{Zhang and Frazier(2021)}]{zhang2021restless}
Zhang, X.; and Frazier, P.~I. 2021.
\newblock Restless Bandits with Many Arms: Beating the Central Limit Theorem.
\newblock \emph{arXiv preprint arXiv:2107.11911}.

\bibitem[{Zou et~al.(2021)Zou, Kim, Lin, and Chiang}]{zou2021minimizing}
Zou, Y.; Kim, K.~T.; Lin, X.; and Chiang, M. 2021.
\newblock {Minimizing Age-of-Information in Heterogeneous Multi-Channel
  Systems: A New Partial-Index Approach}.
\newblock In \emph{Proc. of ACM MobiHoc}.

\end{thebibliography}
\clearpage
\appendix

\section{Related Work}

\textbf{Offline \rmab.} The \rmab was first introduced in \cite{whittle1988restless}, which is known to be computationally intractable and is in general PSPACE hard \cite{papadimitriou1994complexity}.  The state-of-the-art approach to \rmab is the Whittle index policy, which is provably asymptotically optimal \cite{weber1990index}. However, Whittle index is well-defined only when the {indexability condition} is satisfied, which is in general hard to verify.  Furthermore, even when an arm is indexable, finding its Whittle index can still be intractable \cite{nino2007dynamic}.  As a result, Whittle indices of many practical problems remain unknown.  
Exacerbating these limitations is the fact that these Whittle-like index policies fail to provide any guarantee on how activation is distributed among arms.  To address  fairness concerns, there are recent efforts on imposing fairness constraints in \rmab, e.g., \citet{li2022efficient,herlihy2023planning}.  However, they either considered the finite-horizon or infinite-horizon discounted reward setting, and hence cannot be directly applied to the more challenging infinite-horizon average-reward setting with long-term fairness constraints considered in this paper.  In addition, there is no rigorous finite-time performance analysis in terms of regret in \citet{li2022efficient,herlihy2023planning}. The probabilistic fairness guarantees in \citet{herlihy2023planning} are restricted to a 2-state MDP and cannot be easily extended to a general MDP as considered in this paper.

\noindent\textbf{Online \rmab.}
Since the underlying MDPs associated with each arm in \rmab are unknown in practice, it is important to examine \rmab from a learning perspective, e.g., \cite{dai2011non,tekin2011adaptive,liu2011logarithmic,liu2012learning,tekin2012online,ortner2012regret,jung2019regret,jung2019thompson,wang2020restless}. However, these methods either leveraged a heuristic policy and  may not perform close to the offline optimum, or are computationally expensive.   
Recently,  low-complexity RL algorithms have been developed for \rmab, e.g., \citet{fu2019towards,avrachenkov2022whittle,biswas2021learn,killian2021q,wang2020restless,xiong2022reinforcement,xiong2022learning,xiong2022reinforcementcache,nakhleh2021neurwin,nakhleh2022deeptop}.  
However, none of these studies are applicable to  \frmab, where the DM not only needs to maximize the long-term reward under the instantaneous activation constraint as in \rmab, but ensures that a long-term fairness constraint is satisfied for each arm.  Our RL algorithm with rigorous regret analysis on both the reward and the fairness violation further differentiate our work.  

\section{The \fairucrl Algorithm }

\textbf{Algorithm Overview.} 
\fairucrl proceeds in episodes as summarized in Algorithm~\ref{alg:UCRL}.  Let $\tau_k$ be the start time of episode $k$.  Each episode consists of two phases: (i) \textit{\textbf{Optimistic planning}}: 
 At the beginning of each episode (line 4 in Algorithm~\ref{alg:UCRL}), \fairucrl constructs a confidence ball that contains a set of plausible MDPs \citep{jaksch2010near} for each arm $\forall n\in\cN$.  To obtain an optimistic estimate of the true transition kernels and rewards, \fairucrl solves an {optimistic planning} problem with parameters chosen from the constructed confidence ball.  Since the online \frmab is computationally intractable, we first relax the instantaneous activation constraint to a ``long-term activation constraint'', from which we obtain a linear programming (LP) where decision variables are the occupancy measures \cite{altman1999constrained} corresponding to the process associated with $N$ arms.  We refer to the planning problem as {an extended LP} (line 5 in Algorithm~\ref{alg:UCRL}), which is described in details below.  (ii) \textit{\textbf{Policy execution}}: Unfortunately, the solutions to this extended LP is not always feasible to the online \frmab, which requires the instantaneous activation constraint to be satisfied at each decision epoch, rather than in the average sense as in the extended LP.  To address this challenge, \fairucrl constructs a so-called \fair index policy on top of the solutions to the extended LP, which is executed during the policy execution phase of each episode (lines 7-8 in Algorithm~\ref{alg:UCRL}).

\subsubsection{Optimistic Planning.}\label{sec:planning}
\fairucrl maintains two counts for each arm $n$. Let $C_{n}^{k-1}(s,a)$ be the number of visits to state-action pairs $(s,a)$ until $\tau_k$, and $C_{n}^{k-1}(s,a, s^\prime)$ be the number of transitions from $s$ to $s^\prime$ under action $a$ until $\tau_k$. 
At the end of episode $k$, \fairucrl updates these counts as $C_{n}^{k}(s,a)=C_{n}^{k-1}(s,a)+\sum_{h=1}^{H}\mathds{1}(s_{n}^{k}(h)=s, a_{n}^{k}(h)=a),$ and $C_{n}^{k}(s,a, s^\prime)=C_{n}^{k-1}(s,a, s^\prime)+\sum_{h=1}^{H}\mathds{1}(s_{n}^{k}(h+1)=s^\prime|s_{n}^{k}(h)=s, a_{n}^{k}(h)=a)$, $\forall (s,a)\in\cS\times\cA$ and $\forall (s,a,s^\prime)\in\cS\times\cA\times\cS$ for each arm $n$, where $s_{n}^{k}(h)$ is the state of arm $n$ at the $h$-th time frame in episode $k$.  
Then \fairucrl estimates the true transition kernel and the true reward function by the corresponding empirical averages as:
\begin{align}\label{eq:empirical_est_P}
&\hat{P}_{n}^{k}(s^\prime|s,a)\!=\!\frac{C_{n}^{k-1}(s,a,s^\prime)}{\max\{C_{n}^{k-1}(s,a),1\}},\displaybreak[0]\\
&\hat{r}_n^{k}(s,a)\!=\!
\frac{\sum\limits_{l=1}^{k-1}\sum\limits_{h=1}^{H}r_n^l(s,a)\mathds{1}(s_n^{l}(h)= s, a_n^{l}(h)=a)}
{\max\{C_n^{k-1}(s,a),1\}}. 
\end{align}

\fairucrl further defines  confidence intervals for transition probabilities and rewards so that true transition probabilities and  true rewards lie in them with high probabilities, respectively.  Formally, for $\forall (s,a)\in\cS\times\cA$, we define 
\begin{align}
 \mathcal{P}_{n}^{k}(s, a) &:=\{\tilde{P}_{n}^k(s^\prime|s, a), \forall s^\prime: \nonumber\displaybreak[0]\\
  \qquad\qquad &|\tilde{P}_{n}^k(s^\prime|s, a) - \hat{P}_{n}^{k}(s^\prime|s, a)| \leq  \delta_{n}^{k}(s, a)\}, \displaybreak[1]\label{eq:confidence_ball} \\
 \mathcal{R}_{n}^{k}(s, a)& :=\{\tilde{r}_{n}^k(s, a):\nonumber\displaybreak[2] \\
 \qquad\qquad &|\tilde{r}_{n}^k(s, a) - \hat{r}_{n}^{k}(s, a)| \leq  \delta_{n}^{k}(s, a)\}\label{eq:confidence_ballr} ,
\end{align}
where the size of the confidence intervals $\delta_{n}^{k}(s,a)$ is built according to the Hoeffding inequality \citep{maurer2009empirical} for $\epsilon\in(0,1)$ as
\begin{align}\label{eq:confidenceradius}
\delta_{n}^{k}(s,a)=\sqrt{\frac{1}{2C_{n}^{k-1}(s,a)}\log\bigg(\frac{SAN(k-1)H}{\epsilon}\bigg)}.
\end{align}

To this end, we define a set of plausible MDPs associated with the confidence intervals in episode $k$: 
\begin{align}\label{eq:plausible2}
      \mathcal{M}^k &=  \big\{M_n^k  = (\cS, \cA, \tilde{P}_n^k,  \tilde{r}_n^k): 
      |\tilde{P}_n^k(s^\prime|s, a) - \hat{P}_n^{k}(s^\prime|s, a)| \nonumber\\
      &\leq  \delta_n^{k}(s, a), 
      \tilde{r}_n^k(s, a) = \hat{r}_n^{k}(s, a) + \delta_n^{k}(s, a)\big\}.  
\end{align}

\fairucrl computes a policy $\pi^{k,*}$ by performing optimistic planning.  In other words, in each episode $k,$ given the set of plausible MDPs~(\ref{eq:plausible2}), \fairucrl selects an optimistic MDP $M_n^k, \forall n$ and an optimistic policy with respect to \frmab($\tilde{P}_n^k, \tilde{r}_n^k, \forall n$), which is similar to the offline \frmab($P_n, r_n, \forall n$) in~(\ref{eq:FRMAB-obj})-(\ref{eq:FRMAB-constraint2}) by replacing transition and reward functions with $\tilde{P}_n^k(\cdot|\cdot, \cdot)$ and $\tilde{r}_n^k(\cdot, \cdot)$, $\forall s,a,s^\prime,n,k$, respectively, in the confidence intervals~(\ref{eq:confidence_ball}),(\ref{eq:confidence_ballr}) due to the fact that the corresponding true values are not available.

As aforementioned, it is well known that solving \frmab($\tilde{P}_n^k, \tilde{r}_n^k, \forall n$) is intractable even in the offline setting \citep{whittle1988restless}.  
To address this challenge, we first relax the instantaneous activation constraint so as to achieve a ``long-term activation constraint'', i.e., the activation.  For simplicity, we call \frmab($\tilde{P}_n^k, \tilde{r}_n^k, \forall n$) with the long-term activation constraint as ``the relaxed \frmab($\tilde{P}_n^k, \tilde{r}_n^k, \forall n$)''. It turns out that this relaxed \frmab($\tilde{P}_n^k, \tilde{r}_n^k, \forall n$) can be equivalently transformed into a linear programming (LP) via replacing all random variables in the relaxed \frmab($\tilde{P}_n^k, \tilde{r}_n^k, \forall n$) with the occupancy measure corresponding to each arm $n$ \citep{altman1999constrained}.

\textbf{Extended LP.}
We cannot solve this LP since we have no knowledge about true transition kernels and rewards.  Thus we further rewrite it as an extended LP by leveraging \textit{state-action-state occupancy measure} $z_{n}^k(s, a, s^\prime)$ 
to express confidence intervals of transition probabilities: given a policy $\pi$ and transition functions $\tilde{P}_n^k,$ the occupancy measure $z_{n}^k(s, a, s^\prime)$ induced by $\pi$ and $\tilde{P}_n^k$ is that $\forall n, s,s^\prime, a, k$:
    $z_n^k(s,a, s^\prime)\!:=\! \lim_{H\rightarrow \infty}\frac{1}{H}\mathbb{E}_\pi [ \sum_{h=1}^{H-1} \mathds{1} (s_n(h)\!=\! s, a_n(h)\!=\!a, s_n(h+1)= s^\prime)].$

The extended LP over $z^k:=\{z_{n}^k(s, a, s^\prime), \forall n\in\cN\}$ is then given as $\textbf{ELP}(\mathcal{M}^k, z^k):$
\begin{align}
\min_{z^k}&~\sum_{n=1}^{N}\sum_{(s,a,s^\prime)} z_{n}^k(s,a, s^\prime)\tilde{r}_n^k(s,a) \label{eq:UCB_extended-obj}\\
\mbox{ s.t. } &~\sum_{n=1}^{N}\sum_{s\in\cS}\sum_{s^\prime\in\cS} z_{n}^k(s,1,s^\prime) \le B , \displaybreak[0]\label{eq:UCB_extended-cons1}\\
&~\sum_{s\in\cS}\sum_{s^\prime\in\cS}  z_{n}^k(s,1,s^\prime) \geq \eta_n, ~\forall n\in \cN, \displaybreak[1]\label{eq:UCB_extended-cons2}\\
&~\sum_{a\in\cA}\sum_{s^\prime\in\cS} z_{n}^k(s,a,s^\prime)=\sum_{s^\prime\in\cS}\sum_{a^\prime\in\cA}z_{n}^k(s^\prime, a^\prime, s), \displaybreak[2] \label{eq:UCB_extended-cons3}\\ 
&~\sum_{s\in\cS}\sum_{a\in\cA}\sum_{s^\prime\in\cS} z_{n}^k(s,a, s^\prime)=1,~\forall n\in\cN, \displaybreak[3]\label{eq:UCB_extended-cons4}\\
&~\frac{z_{n}^k(s,a,s^\prime)}{\sum_y z_{n}^k(s,a,y)}\!-\!(\hat{P}_{n}^k(s^\prime|s,a)\!+\!\delta_{n}^k(s,a))\leq 0, \label{eq:UCB_extended-cons5}\\ 
&\!-\!\frac{z_{n}^k(s,a,s^\prime)}{\sum_y z_{n}^k(s,a,y)}\!+\!(\hat{P}_{n}^k(s^\prime|s,a)\!-\!\delta_{n}^k(s,a))\leq0, \label{eq:UCB_extended-cons6} 
\end{align}
where constraints~(\ref{eq:UCB_extended-cons1})-(\ref{eq:UCB_extended-cons2}) are restatements of ``long-term activation constraint'' and ``long-term fairness constraint'', respectively; constraint~(\ref{eq:UCB_extended-cons3}) represents the fluid transition of occupancy measure; and constraint~(\ref{eq:UCB_extended-cons4}) holds since the occupancy measure is a probability measure; and the last two constraints enforce that transition probabilities are inside of the confidence ball~(\ref{eq:plausible2}). Denote the optimal solution to the extended LP as $z^{k,*}=\{z_n^{k,*}, \forall n\in\cN\}$.

\subsubsection{Policy Execution.} 
One challenge for online \frmab($\tilde{P}_n^k, \tilde{r}_n^k, \forall n$) is that the instantaneous activation constraint must be satisfied at each decision epoch, rather than in the average sense as in the extended LP~(\ref{eq:UCB_extended-obj})-(\ref{eq:UCB_extended-cons6}).  As a result, the solution to the above extended LP is not always feasible to the online \frmab($\tilde{P}_n^k, \tilde{r}_n^k, \forall n$).   Inspired by \cite{xiong2022learning}, we construct an index policy, which is feasible for the online \frmab($\tilde{P}_n^k, \tilde{r}_n^k, \forall n$).  Specifically, we derive our index policy on top of the optimal solution $z^{k,*}=\{z_n^{k,*}, \forall n\in\cN\}$.  Since $\cA=\{0,1\}$, i.e, an arm can be either active or passive at time $t$, we define the index assigned to arm $n$ in state $s_n(t)=s$ at time $t$ to be as
\begin{align}\label{eq:fair-index}
\omega_{n}^{k,*}(s):=\frac{\sum_{s^\prime}z_{n}^{k,*}(s,1,s^\prime)}{\sum_{a,s^\prime}z_{n}^{k,*}(s,a,s^\prime)}, \quad\forall  n \in\cN. 
\end{align}
We call this \textit{the fair index} since $\omega_{n}^{k,*}(s)$ represents the probability of activating arm $n$ in state $s$ towards maximizing the total rewards while guarantees fairness for arm $n$ in online \frmab.   To this end, we rank all arms according to their indices in~(\ref{eq:fair-index}) in a non-increasing order, and activate the set of $B$ highest indexed arms, denoted as $\cN(t)\subset\cN$ such that $\sum_{n\in\cN(t)} a_n^*(t)\leq B$. All remaining arms are kept passive at time $t$. We denote the resultant index-based policy, which we call the \fair index policy as $\pi^{k,*}:=\{\pi_{n}^{k,*}, \forall n\in\cN\}$, and execute this policy in this episode.

\begin{remark}\label{remark:alg}
\fairucrl draws inspiration from the infinite-horizon UCRL 
\cite{jaksch2010near}, which uses the sampled trajectory of each episode to update the plausible MDPs of next episode.  However, there exist two major differences. First, \fairucrl modifies the principle of optimism in the face of uncertainty for making decisions which is utilized by UCRL based algorithms, to not only maximize the long-term rewards but also to satisfy the long-term fairness constraint in our \frmab.  This difference is further exacerbated since the objective of conventional regret analysis, e.g., colored-UCRL2 \cite{ortner2012regret} for \rmab is to bound the reward regret, while due to the long-term fairness constraint, we also need to bound the fairness violation regret for each arm for \fairucrl, which will be discussed in details in Theorem~\ref{thm:regret}.  
Second, \fairucrl deploys the proposed \fair index policy at each episode, and thus results in solving a low-complexity extended LP, which is exponentially better than that of UCRL \cite{jaksch2010near} that need to solve extended value iterations.  We note that the design of our \fair index policy is largely inspired by the LP based approach in \citet{xiong2022learning} for online \rmab.  However, \citet{xiong2022learning} only considered the instantaneous activation constraint, and hence is not able to address the new dilemma faced by our online \frmab, which also needs to ensure the long-term fairness constraints.   
\end{remark}

\section{Proof of Theorem \ref{thm:regret}}

In this section, we  present the detailed proof for Theorem \ref{thm:regret}. As mentioned, the proof shares same structure as UCRL type of proof \citep{jaksch2010near}, and is organized in the following steps: (i) We show both reward and fairness violation regrets can be decomposed into the sum of episodic regret (Lemma \ref{lem:11}); (ii) We compute the fairness violation regret when true MDP does not belong to confidence ball, followed by the  regret when true MDP falls into confidence ball (Lemma \ref{lemma:fairfail}, \ref{lemma:goodeventfair}); and (iii) We complete the proof by presenting the reward regret when confidence ball fails to fall into confidence ball/belongs to the confidence ball (Lemma \ref{lemma:failregret}, \ref{lemma:goodevent}).

\subsection{Regret decomposition}
We begin by showing that the cumulative regret can be decomposed into the sum of regrets incurred during each episode. We use reward regret $\Delta^R_k\{\pi^{*,k}\}$ as an example, while the fairness violation regret is essentially the same. For simplicity, we denote 
$c_n^k(s,a):=\sum_{h=1}^{H}\mathds{1}(s_n^{k}(h)=s,a_n^{k}(h)=a)$  as the state-action counts for $(s,a)$ in episode $k$ for arm $n$. Then, under policy $\pi^{*,k}$, we define the regret during episode $k$ as follows: 
\begin{align*}
\Delta^R_k\{\pi^{*,k}\}:=&H\mu^{*}-\sum_{(s,a)}\sum_n c_n^k(s,a)\bar{r}_n(s,a),
\end{align*}
where $\mu^*$ is the average reward per step by the optimal policy. The relation between the total regret $\Delta^R_T\{\pi^{*,k}, \forall k\}$ and the episodic regrets $\Delta^R_k\{\pi^{*,k}\}$ is given as follows:

\subsubsection{Proof of Lemma \ref{lem:11}.}

Let $C_n^k(s,a)$ (note this is different to $c_n^k$, which is the count within episode $k$) be the total number of visits to $(s,a)$ until episode $k$ under policy $\{\pi^{*,k}\}$, and denote $r(\{\pi^{*,k}, \forall k\}, T)$ as the reward until $T$ under policy $\{\pi^{*,k},\forall k\}$.  Using Chernoff-Hoeffding's inequality, we have

\begin{align*}
&\mathbb{P}\Bigg(r(\{\pi^{*,k}, \forall k\}, T)\leq \\
&\qquad \sum\limits_n\sum\limits_{(s,a)} c_n^k(s,a)\bar{r}_n(s,a)- \sqrt{\frac{1}{4}T\log\frac{SANT}{\epsilon}}\Bigg)\\
&\qquad \leq  \exp{\bigg\{-\frac{2 \frac{1}{4} T \log{\frac{SANT}{\epsilon}}}{T (1-0)}\bigg\}} \\
& \qquad =  \exp{\bigg\{-\frac{1}{2}\log{\frac{SANT}{\epsilon}}\bigg\}} \\
& \qquad = \left(\frac{\epsilon}{SANT}\right)^{\frac{1}{2}}.
\end{align*}

Therefore, we obtain 
\begin{align*}
\Delta^R_T\{\pi^{*,k}, \forall k\}&=T\mu^{*}-r(\{\pi^{*,k}, \forall k\}, T)\displaybreak[0]\\
&\leq T\mu^{*}-\sum_k\sum_n\sum_{(s,a)} c_n^k(s,a)\bar{r}_n(s,a) \displaybreak[1]\\
\qquad \qquad & \qquad \qquad + \sqrt{\frac{1}{4}T\log\frac{SANT}{\epsilon}}\displaybreak[2]\\
&=\sum\limits_{k=1}^{K} \Delta^R_k\{\pi^{*,k}\}+ \sqrt{\frac{1}{4}T\log\frac{SANT}{\epsilon}},
\end{align*}
with probability as least $1-(\frac{\epsilon}{{SANT}})^{\frac{1}{2}}$.

\subsection{Fairness violation regret when true MDP does not belong to confidence ball}

The fairness violation regret for arm $n$ until time  $T$ is defined as 
\begin{align*}
    \Delta_T^{n,F} &=T \eta_n-\mathbb{E}_{\pi^*}\Bigg[\sum_{t=1}^Ta_n(t)\Bigg],~\forall n\in\cN. 
\end{align*}

By dividing the regret into episodes, we have the following alternative expression of fairness violation regret
\begin{align*}
    \Delta_T^{n,F} =  \mathbb{E}\bigg[\sum_{k=1}^K \sum_{h=1}^H (\eta_n - a_n^{k,h}(t))\bigg], \forall n\in\cN
\end{align*}
where $a_n^{k,h}$ is the action taken in episode $k$, time frame $h$ for arm $n$. 
According to Lemma \ref{lem:11}, we will decompose the fairness violation regret:
\begin{align}
    \Delta_T^{n,F} &\leq \sum\limits_{k=1}^{K} \Delta_k^{n,F}+ \sqrt{\frac{1}{4}T\log\frac{SANT}{\epsilon}} \nonumber\\
    & = \sum\limits_{k=1}^{K} \Delta_k^{n,F} \mathds{1}((M\in \mathcal{M}_k) + (M\notin \mathcal{M}_k))\nonumber\\
    & \qquad \qquad \qquad + \sqrt{\frac{1}{4}T\log\frac{SANT}{\epsilon}} \label{eq:fairregretdecom}
\end{align}

Next we consider the event that true MDP does not belong to confidence ball, i.e. $M \notin \mathcal{M_k}$. This event happens when
\begin{align*}
    \exists (s,a),n,\text{s.t.}&~ |{P}_{n}^k(s^\prime|s, a) - \hat{P}_{n}^{k}(s^\prime|s, a)| >  \delta_{n}^{k}(s, a).
\end{align*}
The probability of such event is characterized in the following lemma.
\begin{lemma}\label{lemma:confifail}
\text{The probability of confidence ball fails is} $$P(M\notin \mathcal{M}_k) \leq \frac{\epsilon}{kH},$$ \text{where} $$\delta_{n}^{k}(s,a)=\sqrt{\frac{1}{2C_{n}^{k-1}(s,a)}\log{\frac{SANkH}{\epsilon}}}.$$
\end{lemma}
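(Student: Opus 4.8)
\textbf{Proof plan for Lemma~\ref{lemma:confifail}.}
The plan is to bound the failure probability of a single state-action-arm coordinate via the Hoeffding inequality, and then apply a union bound over all coordinates together with a union bound over the possible values of the visit count $C_n^{k-1}(s,a)$. First I would fix an episode $k$, an arm $n$, a pair $(s,a)$, and a target next-state $s'$, and condition on $C_n^{k-1}(s,a)=m$ for some $m\ge 1$. Given $m$ observed transitions out of $(s,a)$, the empirical estimate $\hat P_n^{k}(s'|s,a)$ is an average of $m$ i.i.d.\ Bernoulli$(P_n(s'|s,a))$ indicators, so Hoeffding's inequality gives
\begin{align*}
\mathbb{P}\Big(\big|\hat P_n^{k}(s'|s,a)-P_n(s'|s,a)\big|>\sqrt{\tfrac{1}{2m}\log\tfrac{SANkH}{\epsilon}}\ \Big|\ C_n^{k-1}(s,a)=m\Big)\le 2\exp\Big(-2m\cdot\tfrac{1}{2m}\log\tfrac{SANkH}{\epsilon}\Big)=\tfrac{2\epsilon}{SANkH}.
\end{align*}
Since the radius $\delta_n^{k}(s,a)=\sqrt{\tfrac{1}{2C_n^{k-1}(s,a)}\log\tfrac{SANkH}{\epsilon}}$ is exactly the Hoeffding width at the realized count, the bound $\tfrac{2\epsilon}{SANkH}$ holds for every value $m\ge 1$, hence also unconditionally (and when $C_n^{k-1}(s,a)=0$ the event is handled by convention since $\hat P_n^{0}=1/S$ and the radius is taken to be large / the event is declared not to fail, so it contributes nothing).

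Next I would take a union bound. The failure event $M\notin\mathcal M_k$ occurs iff there exist $n\in\cN$, $(s,a)\in\cS\times\cA$, $s'\in\cS$ with the deviation exceeding $\delta_n^{k}(s,a)$ (one also needs the analogous reward event, bounded identically). The number of triples $(n,s,a)$ is $SAN$; absorbing the extra $s'$-union and the reward term into constants, the crude union bound gives a failure probability at most on the order of $SAN\cdot\tfrac{\epsilon}{SANkH}$. To get the clean statement $P(M\notin\mathcal M_k)\le \tfrac{\epsilon}{kH}$ one chooses the constants in the log term (note the radius in the lemma uses $\log\tfrac{SANkH}{\epsilon}$, which already carries the $SAN$ factor inside the exponent) so that the per-coordinate bound is $\le \tfrac{\epsilon}{SAN\cdot kH}$, and then the union over the $SAN$ coordinates yields exactly $\tfrac{\epsilon}{kH}$; summing later over $k=1,\dots,K$ with $T=KH$ gives the $\epsilon/H\cdot\log$-type total that appears in Lemmas~\ref{lemma:fairfail} and~\ref{lemma:failregret}.

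The one genuinely delicate point — the main obstacle — is that $C_n^{k-1}(s,a)$ is itself a random variable correlated with the samples used to form $\hat P_n^{k}$, so one cannot apply Hoeffding naively to a "random number of samples." The standard fix, which I would spell out, is the conditioning-on-the-count argument above (a peeling / stratification over $m=1,2,\dots$, or equivalently a martingale/stopping-time version of Hoeffding), using the fact that the width $\delta_n^{k}$ is chosen to match whatever count was realized; this makes the per-$m$ bound uniform in $m$ so the union over counts is free. Everything else — the Bernoulli identification of each transition indicator, the reward term via $r_n\in[0,1]$, and the final union bound over $(n,s,a,s')$ — is routine and parallels \citet{jaksch2010near} and \citet{xiong2022learning}.
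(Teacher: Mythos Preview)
Your proposal is correct and follows essentially the same route as the paper: a per-coordinate Hoeffding bound giving probability $\le \epsilon/(SANkH)$, followed by a union bound over the $SAN$ triples $(n,s,a)$ to obtain $\epsilon/(kH)$. You are in fact more careful than the paper's own argument, since you explicitly address the random-count issue via conditioning/peeling on $C_n^{k-1}(s,a)=m$ and you flag the extra $s'$-union and the two-sided factor of~$2$, both of which the paper silently absorbs into constants.
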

Proof: By Chernoff-Hoeffding's inequality, we have
\begin{align*}
    P(|{P}_{n}^k(s^\prime|s, a) - \hat{P}_{n}^{k}(s^\prime|s, a)| >  \delta_{n}^{k}(s, a)) \leq \frac{\epsilon}{SANkH}.
\end{align*}
Summing over all state-action pairs and different arms, the following bound is given:
\begin{align*}
    P(M\notin \mathcal{M}_k) &\leq \sum_n\sum_{(s,a)}P(|{P}_{n}^k(s^\prime|s, a) - \hat{P}_{n}^{k}(s^\prime|s, a)| \\
    &\leq \frac{\epsilon}{kH}.
\end{align*}

\subsubsection{Proof of Lemma \ref{lemma:fairfail}.}
For the case that confidence ball fails, we can bound regret as following:
\begin{align*}
    \sum\limits_{k=1}^{K} \Delta_k^{n,F} \mathds{1} (M\notin \mathcal{M}_k)&\leq \sum_{k=1}^K \sum_{h=1}^H \eta_n \mathds{1} (M\notin \mathcal{M}_k)\\
    &\leq \sum_{k=1}^K H \eta_n \frac{\epsilon}{kH}\\
    &\leq \eta_n\epsilon \sum_{k=1}^K \frac{1}{k}\\
    &\leq \eta_n\epsilon \log{K} = \frac{1}{2} \eta_n  \epsilon \log{T} 
\end{align*}
with probability at least $1 - (\frac{\epsilon}{SANT})^{\frac{1}{2}} $, where the first inequality is because $a_{n}^{k,h}(t)\geq 0$, second is due to Lemma \ref{lemma:confifail}, last equality holds by setting $K=H=\sqrt{T}.$ 

\subsection{Fairness violation regret when true MDP belongs to confidence ball}

Next, we will discuss the dominant part of fairness violation regret $\sum\limits_{k=1}^{K} \Delta_k^{n,F} \mathds{1}(M\in \mathcal{M}_k)$, which is when the true MDP belongs to the confidence ball.

\subsubsection{Proof of Lemma \ref{lemma:goodeventfair}.}

Define $\overline{F}_n(\pi^k,p) = \frac{1}{T} \lim_{T\rightarrow\infty} (\sum_{t=1}^T a_n(t)|\pi^k,p)$ as the long term average fairness variable under policy $\pi^k$ for arm $n$ with MDP that has true transition probability matrix $p$ (i.e. this is the long term average fairness variable if we apply the policy $\pi^k$ to all time slots with transition $p$), and define $\overline{F}_n(\pi^k,\theta) = \frac{1}{T} \lim_{T\rightarrow\infty} (\sum_{t=1}^T a_n(t)|\pi^k,\theta)$ as the fairness variable under policy $\pi$ in episode $k$ for arm $n$ with MDP whose transition matrix $\theta$ belongs in the confidence ball. Before we moving forward, we need to introduce the \emph{unichain} assumption and a corollary from \cite{mitrophanov2005sensitivity} (Corollary 3.1): 
\begin{assumption}\label{assum:unichain}
Assume that all MDPs in our \frmab is unichain, i.e., for MDP with transition $p$ with a stationary policy $\pi$, there exists positive constants $\rho>0$ and $C< \infty$, s.t.
\begin{align*}
    ||P_{\pi,p,s}^t - P_{\pi,p}||_{TV}\leq C \rho^t, t= 1,2,... 
\end{align*}
where $P_{\pi,p,s}^t$ is the transition distribution after $t$ time steps starting from state $s$, $P_{\pi,p}$ is the stationary distribution for policy $\pi$ under transition $p$.
\end{assumption}
\begin{corollary}\label{coro:sensitivity}\citep{mitrophanov2005sensitivity}
Define $P^{(1)}_{\hat{p},s} (s)$ as the transition starting from state $s $ applying policy based on $\hat{p}$, which is the transition estimation based on state, action count (and $P^{(1)}_{p,s}$ from true transition $p$, $P^{(1)}_{\theta,s}$ from $\theta$, respectively). By definition, $\overline{F}_n(\pi^k,p)$ is an invariant measure for chain with transition $P^{(1)}_{p,s}$, $\overline{F}_n(\pi^k,\theta)$ is a invariant measure for chain with transition $P^{(1)}_{\theta,s}$, 
\begin{align*}
    ||\overline{F}_n\!(\pi^k\!,\!p\!)\! -\! \overline{F}_n(\!\pi^k\!,\!\theta\!)||_{TV}\! \leq\! \bigg(\! \hat{n}\! +\! \frac{C\rho^{\hat{n}}}{1-\rho}\! \bigg)\!||P^{(1)}_{p,s}\! -\! P^{(1)}_{\theta,s}||_{TV},
\end{align*}
where $\hat{n} = \lceil \log_\rho{C^{-1}} \rceil$ is a constant. 
\end{corollary}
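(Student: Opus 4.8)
\emph{Proof plan.} The plan is to obtain Corollary~\ref{coro:sensitivity} as a direct specialization of the perturbation bound of \citet{mitrophanov2005sensitivity} (their Corollary~3.1) to the Markov chains that arise inside a single episode of \fairucrl, so that essentially all the work is in checking hypotheses and matching constants. Fix the arm $n$, the episode $k$, and the policy $\pi^k$, which is stationary throughout the episode; view the state of arm $n$ as a time-homogeneous Markov chain on $\mathcal S$ driven by $\pi^k$, with one-step kernel $P^{(1)}_{p,s}$ under the true dynamics $p$ and $P^{(1)}_{\theta,s}$ under a plausible $\theta\in\mathcal M^k$. The crucial structural point is that both kernels are built from the \emph{same} policy $\pi^k$ and differ only through the transition probabilities, so $P^{(1)}_{p,s}-P^{(1)}_{\theta,s}$ is controlled by $p-\theta$. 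By Assumption~\ref{assum:unichain} the nominal kernel $P^{(1)}_{p,s}$ is uniformly (geometrically) ergodic with a unique stationary regime, and $\overline F_n(\pi^k,p)$ is the long-run activation frequency in that stationary regime — which, as the statement records, is an invariant quantity of the chain $P^{(1)}_{p,s}$; the same identification holds for $\overline F_n(\pi^k,\theta)$ relative to $P^{(1)}_{\theta,s}$. This recasts the left-hand side of the claimed inequality as the total-variation distance between the invariant objects of a nominal and a perturbed uniformly ergodic chain.

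With this setup I would invoke \citet{mitrophanov2005sensitivity}: for a Markov operator $P$ with invariant law $\mu_P$ satisfying $\|P^t(s,\cdot)-\mu_P\|_{TV}\le C\rho^t$ for all $t\ge1$, and any perturbed operator $\tilde P$ with invariant law $\mu_{\tilde P}$, one has, for every integer $t\ge1$,
\begin{align*}
\|\mu_P-\mu_{\tilde P}\|_{TV}\;\le\;\Bigl(t+\tfrac{C\rho^{t}}{1-\rho}\Bigr)\,\|P-\tilde P\|_{TV}.
\end{align*}
This is the standard telescoping estimate: write $\mu_{\tilde P}-\mu_P=\mu_{\tilde P}(\tilde P^{t}-P^{t})+\mu_{\tilde P}(P^{t}-\mu_P)$, bound the first summand by $t\,\|P-\tilde P\|_{TV}$ via $\tilde P^{t}-P^{t}=\sum_{j=0}^{t-1}\tilde P^{j}(\tilde P-P)P^{t-1-j}$, and the second by the geometric tail $\sum_{m\ge t}C\rho^{m}=\tfrac{C\rho^{t}}{1-\rho}$; I would simply cite it. Taking $t=\hat n:=\lceil\log_\rho C^{-1}\rceil$, $P=P^{(1)}_{p,s}$, $\tilde P=P^{(1)}_{\theta,s}$, and noting that $\|P^{(1)}_{p,s}-P^{(1)}_{\theta,s}\|_{TV}$ is exactly the norm appearing in the corollary, yields the claimed bound with prefactor $\hat n+\tfrac{C\rho^{\hat n}}{1-\rho}$. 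A one-line check, $C\rho^{t}\le1\iff t\ge\log_\rho C^{-1}$ (using $\log\rho<0$), confirms that $\hat n$ is the smallest integer making the geometric estimate drop below $1$, and in particular $\tfrac{C\rho^{\hat n}}{1-\rho}\le\tfrac1{1-\rho}$, which is what feeds the $\sqrt T\,\tfrac{C}{1-\rho}$-type terms appearing downstream in Lemma~\ref{lemma:goodeventfair}.

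The one genuine subtlety — and the step I would be most careful about — is the \emph{uniformity} of the ergodicity constants $(\rho,C)$. The corollary is later applied simultaneously over all episodes $k$, all plausible kernels $\theta\in\mathcal M^k$ visited by \fairucrl, and (through the eventual $\max_s$ over starting states) uniformly in the initial state, so $(\rho,C)$ must be a single pair valid for every (stationary policy, transition) combination that can occur in the \frmab. This is precisely what Assumption~\ref{assum:unichain} postulates; under it the specialization above goes through verbatim, and the constant $\hat n+\tfrac{C\rho^{\hat n}}{1-\rho}$ is exactly the one that propagates into $C_0$ in Theorem~\ref{thm:regret}. Everything else — the identity $\tilde P^{t}-P^{t}=\sum_j\tilde P^{j}(\tilde P-P)P^{t-1-j}$, summing a geometric series, and the ceiling arithmetic for $\hat n$ — is routine.
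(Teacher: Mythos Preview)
Your proposal is correct and matches the paper's approach: the paper does not prove this corollary at all but simply imports it verbatim as Corollary~3.1 of \citet{mitrophanov2005sensitivity}, invoking Assumption~\ref{assum:unichain} to supply the uniform ergodicity constants $(C,\rho)$. Your sketch of the telescoping argument and the choice $t=\hat n=\lceil\log_\rho C^{-1}\rceil$ is exactly the content of Mitrophanov's result, so you have in fact supplied more detail than the paper itself.
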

With Assumption \ref{assum:unichain}, the fairness violation regret when confidence ball fails for episode $k$ is characterized in the following lemma.
\begin{lemma}\label{lemma:consrewrite}
The event that the true MDP belongs to the confidence ball in fairness violation for episode $k$ is upper bounded by 
$$\Delta_k^{n,F} \mathds{1}(M\in \mathcal{M}_k)\leq \mathbb{E} \bigg[ H\eta_n - \overline{F}_n(\pi_n^k,p) \bigg]  + \frac{C}{1-\rho}. $$
\end{lemma}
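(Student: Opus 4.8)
\emph{Proof plan for Lemma~\ref{lemma:consrewrite}.} The plan is to rewrite the episodic fairness violation on the good event $\{M\in\mathcal M_k\}$ as a gap between the per-episode target $H\eta_n$ and the number of activations of arm $n$ actually accrued in episode $k$, and then to swap that \emph{transient} quantity for its \emph{stationary} surrogate $\overline{F}_n(\pi^k,p)$, paying only the $O(1)$ mixing price $\tfrac{C}{1-\rho}$ permitted by Assumption~\ref{assum:unichain}.

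\textbf{Step 1 (unfold the episodic regret).} Reading $\Delta_k^{n,F}\mathds{1}(M\in\mathcal M_k)$ as $\mathbb E[\sum_{h=1}^{H}(\eta_n-a_n^{k,h})\,\mathds{1}(M\in\mathcal M_k)]$ and noting that the indicator is $\mathcal F_{\tau_k}$-measurable (it depends only on the counts up to $\tau_k$), I would condition on $\mathcal F_{\tau_k}$, which freezes both the \fair index policy $\pi^k$ and the state distribution at $\tau_k$. Then
\[
\Delta_k^{n,F}\mathds{1}(M\in\mathcal M_k)=\mathbb E\Big[\mathds{1}(M\in\mathcal M_k)\Big(H\eta_n-\mathbb E\big[\textstyle\sum_{h=1}^{H}a_n^{k,h}\,\big|\,\mathcal F_{\tau_k}\big]\Big)\Big],
\]
where the inner expectation is over a single Markov chain run by the fixed stationary policy $\pi^k$ under the true kernel $p$.

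\textbf{Step 2 (transient $\to$ stationary, then assemble).} For that chain, $\mathbb E[a_n^{k,h}\mid\mathcal F_{\tau_k}]$ is the probability of being in an ``activate'' state at step $h$; because $a_n\in\{0,1\}$ this probability differs from its stationary value by at most $\|P^{\,h}_{\pi^k,p,s}-P_{\pi^k,p}\|_{TV}$. Summing over $h=1,\dots,H$ and applying the geometric ergodicity of Assumption~\ref{assum:unichain},
\[
\mathbb E\big[\textstyle\sum_{h=1}^{H}a_n^{k,h}\,\big|\,\mathcal F_{\tau_k}\big]\;\ge\;\overline{F}_n(\pi^k,p)-\sum_{h=1}^{H}C\rho^{h}\;\ge\;\overline{F}_n(\pi^k,p)-\frac{C}{1-\rho}.
\]
Substituting into Step~1, using $\mathds{1}(M\in\mathcal M_k)\le 1$ and that $H\eta_n$ is deterministic, and handling the indicator termwise, gives $\Delta_k^{n,F}\mathds{1}(M\in\mathcal M_k)\le\mathbb E[H\eta_n-\overline{F}_n(\pi^k,p)]+\tfrac{C}{1-\rho}$, which is the claim.

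\textbf{Expected main obstacle.} The delicate point is Step~2: Assumption~\ref{assum:unichain} (and Corollary~\ref{coro:sensitivity}) is stated for a \emph{fixed} stationary policy, whereas $\pi^k$ is data-dependent and the episode does not start in stationarity. This is overcome by the conditioning on $\mathcal F_{\tau_k}$ together with the fact that the $(\pi^k,p)$-chain is again unichain, so the same constants $\rho,C$ apply and the transient error is exactly the geometric tail $\sum_{h\ge 1}C\rho^{h}$. One also has to be slightly careful when dropping the indicator (harmless since it appears only on the upper-bound side). The surviving term $\mathbb E[H\eta_n-\overline{F}_n(\pi^k,p)]$ may still be positive because $\pi^k$ is optimistic with respect to $\tilde P^k$ rather than $p$; bounding it is deferred to Lemma~\ref{lemma:goodeventfair} via Corollary~\ref{coro:sensitivity}, and is not addressed here.
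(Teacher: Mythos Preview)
Your proposal is correct and follows essentially the same route as the paper: both decompose $\Delta_k^{n,F}\mathds{1}(M\in\mathcal M_k)$ as $H\eta_n$ minus the expected number of activations in the episode, then replace the transient activation count by its stationary counterpart $\overline F_n(\pi^k,p)$ and absorb the discrepancy via the geometric-ergodicity bound $\sum_{h\ge 1}C\rho^h\le C/(1-\rho)$ from Assumption~\ref{assum:unichain}. Your explicit conditioning on $\mathcal F_{\tau_k}$ and discussion of the data-dependence of $\pi^k$ are, if anything, more careful than the paper's argument; note only that your Step~2 display should read $H\,\overline F_n(\pi^k,p)$ rather than $\overline F_n(\pi^k,p)$ (each of the $H$ summands has stationary value $\overline F_n$), which matches the version actually derived in the paper's proof and is what is used downstream.
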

\begin{proof}
For the fairness violation regret in episode $k$, we have 
\begin{align*}
    \Delta_k^{n,F}\mathds{1}(M\in \mathcal{M}_k) &= \mathbb{E} \bigg[ H\eta_n - \sum_{h=1}^H a_n^{k,h}(t)\bigg]\\
    &=  H\mathbb{E} \bigg[ \eta_n - \overline{F}_n(\pi_n^k,p) \bigg] \\
    & \qquad  + \mathbb{E} \bigg[ \sum_{h=1}^H ( \overline{F}_n(\pi^k,p) - a_n^{k,h}(t))\bigg]\\
    &\leq H\mathbb{E} \bigg[ \eta_n - \overline{F}_n(\pi_n^k,p) \bigg]  + \frac{C}{1-\rho},
\end{align*}
where the last inequality holds because of the unichain assumption if we consider $\overline{F}_n(\pi_n^k,p)$ as the true transition distribution and $\sum_{h=1}^H a_n^{k,h}(t)$ as the transition distribution after $H$ time steps. 
\end{proof}

With Assumption \ref{assum:unichain} and Corollary \ref{coro:sensitivity}, we have
\begin{lemma}\label{lemma:consrange}
Under policy $\pi$, for $\theta \in \mathcal{M}_k$, 
\begin{align*}
    &|\overline{F}_n(\pi^k,p) - \overline{F}_n(\pi^k,\theta)| \leq\\
&\qquad \qquad \qquad2 \bigg( \hat{n} + \frac{C\rho^{\hat{n}}}{1-\rho} \bigg)\max_{s} \sum_a \pi(s,a) \delta_n^k(s,a).
\end{align*} 
\end{lemma}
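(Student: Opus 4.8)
\emph{Proof plan.} The plan is to obtain Lemma~\ref{lemma:consrange} by feeding a one–step kernel perturbation bound into the stationary–measure sensitivity estimate of Corollary~\ref{coro:sensitivity}. First I would make explicit that $\overline{F}_n(\pi^k,p)$ and $\overline{F}_n(\pi^k,\theta)$ are the long–run activation fractions (equivalently, read off the stationary distributions) of the Markov chains on $\cS$ whose one–step kernels, under the stationary policy $\pi=\pi^k$, are $P^{(1)}_{p,s}(\cdot)=\sum_a \pi(s,a)\,P_n(\cdot|s,a)$ and $P^{(1)}_{\theta,s}(\cdot)=\sum_a \pi(s,a)\,\theta_n(\cdot|s,a)$. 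Applying Corollary~\ref{coro:sensitivity} (with the one–step kernel difference measured uniformly over the starting state) then reduces the claim to bounding $\max_s \|P^{(1)}_{p,s}-P^{(1)}_{\theta,s}\|_{TV}$ by $\max_s \sum_a \pi(s,a)\,\delta_n^k(s,a)$, up to the constant $2$.

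Second, for a fixed $s$ I would use the triangle inequality and convexity of the total–variation norm in its arguments to write $\|P^{(1)}_{p,s}-P^{(1)}_{\theta,s}\|_{TV}\le \sum_a \pi(s,a)\,\|P_n(\cdot|s,a)-\theta_n(\cdot|s,a)\|_{TV}$, and then bound each term by passing through the empirical estimate: since the true MDP lies in the confidence ball, $P_n(\cdot|s,a)$ is within $\delta_n^k(s,a)$ of $\hat{P}_n^k(\cdot|s,a)$, and by definition of $\mathcal{M}^k$ in~\eqref{eq:plausible2} so is $\theta_n(\cdot|s,a)$, hence $\|P_n(\cdot|s,a)-\theta_n(\cdot|s,a)\|_{TV}\le 2\,\delta_n^k(s,a)$. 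This is precisely where the factor $2$ in the statement originates. Taking the maximum over $s$ and substituting into the estimate from Corollary~\ref{coro:sensitivity} yields the stated bound $2\big(\hat{n}+\tfrac{C\rho^{\hat{n}}}{1-\rho}\big)\max_s\sum_a\pi(s,a)\delta_n^k(s,a)$, which is exactly the quantity $\beta_n^k(\pi_k)$ that enters the proof of Lemma~\ref{lemma:goodeventfair}.

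The main obstacle — and the step I would treat most carefully in a full write-up — is the legitimacy of invoking Corollary~\ref{coro:sensitivity}: it presumes that the \emph{perturbed} chain driven by $\theta\in\mathcal{M}^k$ under $\pi^k$ is itself uniformly ergodic, with the same geometric–mixing constants $\rho,C$ (and hence the same $\hat{n}=\lceil\log_\rho C^{-1}\rceil$) as in Assumption~\ref{assum:unichain} for the true chain, so that $\overline{F}_n(\pi^k,\theta)$ is a well-defined invariant measure and the sensitivity constant is uniform over $\mathcal{M}^k$. I would argue this by reading Assumption~\ref{assum:unichain} as a property of the whole class of plausible MDPs (or, equivalently, noting that once the confidence radii are small the perturbed chains inherit the unichain structure of the true one), and I would also pin down which norm Corollary~\ref{coro:sensitivity} uses so that comparing the scalar activation fractions, rather than the full state distributions, is justified. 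Everything else reduces to the elementary triangle-inequality and convexity manipulations above.
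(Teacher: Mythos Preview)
Your proposal is correct and follows essentially the same route as the paper: both arguments pass through the empirical kernel $\hat{P}$ via the triangle inequality to pick up the factor~$2$ on the one–step kernel perturbation, then invoke Corollary~\ref{coro:sensitivity} and take the maximum over $s$. Your extra care about the hypotheses of Corollary~\ref{coro:sensitivity} (uniform ergodicity constants for the perturbed chain) and about which norm is being compared is well placed—the paper simply asserts these points without further justification.
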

\begin{proof}
    Since both the true transition $p$ and $\theta$ are in confidence ball, we obtain
\begin{align*}
    &||P^{(1)}_{\hat{p},s} - P^{(1)}_{p,s}||_{TV}\leq \sum_a \pi(s,a) \delta_n^k(s,a),\displaybreak[0]\\
    &||P^{(1)}_{\hat{p},s} - P^{(1)}_{\theta,s}||_{TV} \leq  \sum_a \pi(s,a) \delta_n^k(s,a),
\end{align*}
with triangle inequality, we immediately obtain 
\begin{align*}
    ||P^{(1)}_{p,s} - P^{(1)}_{\theta,s}||_{TV} \leq 2  \sum_a \pi(s,a) \delta_n^k(s,a).
\end{align*}
Combining with Corollary~\ref{coro:sensitivity}, we have 
\begin{align*}
    &||\overline{F}_n(\pi^k,p) - \overline{F}_n(\pi^k,\theta)||_{TV} \\
    &\qquad \leq  2 \bigg( \hat{n} + \frac{C\rho^{\hat{n}}}{1-\rho} \bigg)  \sum_a \pi(s,a) \delta_n^k(s,a) \\
    &\qquad \leq 2 \bigg( \hat{n} + \frac{C\rho^{\hat{n}}}{1-\rho} \bigg)\max_{s} \sum_a \pi(s,a) \delta_n^k(s,a)
\end{align*}
The result follows since $|\overline{F}_n(\pi^k,p) - \overline{F}_n(\pi^k,\theta)|$ is the value difference and $||\overline{F}_n(\pi^k,p) - \overline{F}_n(\pi^k,\theta)||_{TV}$ is in the total variation norm.

\end{proof}

\begin{lemma}\label{lemma:consbound}
Define $\beta_n^k(\pi_k) := 2 \bigg( \hat{n} + \frac{C\rho^{\hat{n}}}{1-\rho} \bigg)\max_{s} \sum_a \pi(s,a) \delta_n^k(s,a)$. Then for any feasible policy $\pi^k$, we have  $$\overline{F}_n(\pi^k,p)\geq \eta_n - \beta_n^k(\pi_n^k).$$
\end{lemma}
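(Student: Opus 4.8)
\emph{Proof plan.} The claim is essentially immediate once feasibility is combined with the perturbation bound of Lemma~\ref{lemma:consrange}, so the plan is short. First I would unpack what ``feasible'' means here: the policy $\pi^k$ in question is the \fair index policy $\pi^{k,*}$ built on top of the optimal ELP solution $z^{k,*}$, and feasibility for the relaxed \frmab($\tilde P_n^k,\tilde r_n^k,\forall n$) is exactly the fairness constraint~(\ref{eq:UCB_extended-cons2}), namely $\sum_{s,s'} z_n^{k,*}(s,1,s')\ge\eta_n$. Since $z_n^{k,*}$ is by definition the state-action-state occupancy measure induced by $\pi^{k,*}$ on the MDP with optimistic kernel $\theta=\tilde P_n^k$, its activation marginal $\sum_{s,s'} z_n^{k,*}(s,1,s')$ equals the long-term average activation fraction $\overline{F}_n(\pi^k,\theta)$. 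Hence $\overline{F}_n(\pi^k,\theta)\ge\eta_n$.

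Second, I would invoke Lemma~\ref{lemma:consrange}: on the event $M\in\mathcal M_k$ both the true kernel $p$ and the optimistic kernel $\theta$ lie in the confidence ball, so
\[
\bigl|\overline{F}_n(\pi^k,p)-\overline{F}_n(\pi^k,\theta)\bigr|\le 2\Bigl(\hat n+\tfrac{C\rho^{\hat n}}{1-\rho}\Bigr)\max_s\sum_a\pi(s,a)\,\delta_n^k(s,a)=\beta_n^k(\pi_k).
\]
Combining this with the previous display via the triangle inequality gives
\[
\overline{F}_n(\pi^k,p)\ \ge\ \overline{F}_n(\pi^k,\theta)-\beta_n^k(\pi_k)\ \ge\ \eta_n-\beta_n^k(\pi_k),
\]
which is the asserted bound.

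The only genuinely delicate point, and the one I would spend the most care on, is the identification $\overline{F}_n(\pi^k,\theta)=\sum_{s,s'} z_n^{k,*}(s,1,s')$ — that executing the \fair index policy on the optimistic MDP reproduces the ELP occupancy measure in the per-arm long-term sense. This is where the relaxation matters: replacing the instantaneous budget by a long-term activation constraint decouples the arms, the index~(\ref{eq:fair-index}) is precisely the stationary activation probability $\pi^{k,*}_n(s,1)$ prescribed by $z_n^{k,*}$ at state $s$, and under the unichain Assumption~\ref{assum:unichain} this stationary randomized policy has a unique invariant occupancy measure, namely $z_n^{k,*}$. I would also note that $\pi_k$ entering $\beta_n^k$ is this same stationary policy, so no mismatch arises, and that the bound holds uniformly over feasible $\pi^k$ since $\beta_n^k$ depends on the policy only through per-state activation probabilities whose deviation is controlled by the confidence radii $\delta_n^k(s,a)$ regardless of the particular feasible choice.
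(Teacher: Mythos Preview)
Your proposal is correct and follows essentially the same route as the paper. The paper phrases the argument by contradiction---assuming $\overline{F}_n(\pi^k,p)<\eta_n-\beta_n^k(\pi_n^k)$, applying Lemma~\ref{lemma:consrange} to conclude $\overline{F}_n(\pi^k,\theta)<\eta_n$ for every $\theta$ in the ball, and then observing this conflicts with feasibility---whereas you give the direct contrapositive: feasibility supplies one $\theta$ with $\overline{F}_n(\pi^k,\theta)\ge\eta_n$, and Lemma~\ref{lemma:consrange} transfers this to $p$ at cost $\beta_n^k$. The logical content is identical; your version is simply more explicit about the identification $\overline{F}_n(\pi^k,\theta)=\sum_{s,s'} z_n^{k,*}(s,1,s')$, which the paper leaves implicit.
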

\begin{proof}
Assume there exists $\overline{F}_n(\pi^k,p) < \eta_n - \beta_n^k(\pi_n^k)$. Since $|\overline{F}_n(\pi^k,p) - \overline{F}_n(\pi^k,\theta)| \leq \beta_n^k(\pi_n^k)$, we have $\overline{F}_n(\pi^k,\theta) < \eta_n$ for any policy $\pi^k$. This means under policy $\pi^k$, any transition probability $\theta$ within confidence ball will not fulfill the fairness constraint. This contradicts to $\pi^k$ for true transition $p$ is feasible. 
\end{proof}

Next we introduce an auxiliary lemma, i.e., Lemma 3 in \cite{singh2020learning}.
\begin{lemma}[\cite{singh2020learning}]\label{lemma:mixing}
Define $T_{\text{mix}}^n$ as the mixing time for MDP correponding to arm $n$ as $$\mathbb{E}[{c_n^k(s,a)}]\leq \bigg\lfloor \frac{H}{2T_{\text{mix}}^n} \bigg\rfloor \frac{\pi_n^k(a|s)}{2}. $$
\end{lemma}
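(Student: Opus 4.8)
\emph{Proof Sketch:} The plan is to derive this visitation lower bound from the geometric mixing of the unichain MDP of arm $n$ (Assumption~\ref{assum:unichain}) by a block decomposition of episode $k$. Throughout, $T_{\text{mix}}^n$ denotes the mixing time of the Markov chain induced by the stationary policy $\pi_n^k$ on the true kernel $p$, which is finite under Assumption~\ref{assum:unichain} since $C\rho^t\to 0$. Because $\pi_n^k$ is held fixed within episode $k$, I would first write $\mathbb{E}[c_n^k(s,a)]=\pi_n^k(a\,|\,s)\sum_{h=1}^{H}\mathbb{P}(s_n^k(h)=s)$, which reduces the claim to lower bounding $\sum_{h=1}^{H}\mathbb{P}(s_n^k(h)=s)$ by $\lfloor H/(2T_{\text{mix}}^n)\rfloor$ times (half) the stationary state probability $\nu_n^{\pi_n^k}(s)$.

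Next I would partition the $H$ frames of episode $k$ into $J:=\lfloor H/(2T_{\text{mix}}^n)\rfloor$ disjoint blocks of length $2T_{\text{mix}}^n$, discarding the leftover frames, and inside block $j$ single out the frame $h_j$ lying $T_{\text{mix}}^n$ steps past that block's start. Invoking the mixing property of Assumption~\ref{assum:unichain} with $T_{\text{mix}}^n$ taken as the mixing time, the law of $s_n^k(h_j)$ is close enough to $\nu_n^{\pi_n^k}$ that $\mathbb{P}(s_n^k(h_j)=s)\ge \tfrac12\,\nu_n^{\pi_n^k}(s)$ no matter what the state at the start of block $j$ is, the factor $\tfrac12$ being the slack we allow. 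Since every other frame contributes a nonnegative probability, summing these $J$ midpoint estimates gives $\sum_{h=1}^{H}\mathbb{P}(s_n^k(h)=s)\ge \lfloor H/(2T_{\text{mix}}^n)\rfloor\,\nu_n^{\pi_n^k}(s)/2$; multiplying by $\pi_n^k(a\,|\,s)$ produces the claimed bound, with the stationary factor $\nu_n^{\pi_n^k}(s)$ folded into the occupancy-measure normalization used downstream in Lemma~\ref{lemma:goodeventfair}.

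The one genuine subtlety I anticipate is that the state at the start of each block is itself random and correlated with the history, so the per-block estimate cannot be applied naively; the remedy is that the total-variation contraction in Assumption~\ref{assum:unichain} holds for \emph{every} deterministic starting state, hence it survives averaging over the block-start state via the tower rule, after which the burn-in frames inside each block and the $H-2T_{\text{mix}}^nJ<2T_{\text{mix}}^n$ leftover frames are discarded for free because probabilities are nonnegative. Executing this arm by arm is exactly Lemma~3 of \cite{singh2020learning}, and nothing beyond the block decomposition and the uniform mixing estimate of Assumption~\ref{assum:unichain} is needed.
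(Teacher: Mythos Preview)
The paper does not prove this lemma at all; it is simply quoted verbatim as ``Lemma~3 in \cite{singh2020learning}'' with no argument given. Your block-decomposition sketch---partition the $H$ frames into $\lfloor H/(2T_{\text{mix}}^n)\rfloor$ blocks, use a $T_{\text{mix}}^n$-step burn-in inside each block so the marginal law at the midpoint is within total-variation distance $\tfrac12$ of $\nu_n^{\pi_n^k}$, condition via the tower rule to handle the random block-start state, and sum the nonnegative contributions---is exactly the standard way such visitation bounds are established and is correct in outline.

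Two points worth recording. First, you rightly treat the statement as a \emph{lower} bound even though the displayed inequality reads $\leq$; the way the lemma is invoked at step~(a) in the proof of Lemma~\ref{lemma:goodeventfair} requires $\mathbb{E}[c_n^k(s)]\,\pi(a\,|\,s)\ge \lfloor H/(2T_{\text{mix}}^n)\rfloor\,\pi(a\,|\,s)/2$, so the $\leq$ in the paper is a typographical slip. Second, your argument naturally produces an additional factor $\nu_n^{\pi_n^k}(s)$ on the right-hand side that the paper's statement omits; you are correct to flag this, and your remark that the missing factor is ``folded into the occupancy-measure normalization used downstream'' is the honest resolution---the discrepancy is in the paper's transcription of the cited result, not in your reasoning.
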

Using Lemmas~\ref{lemma:consrange}, ~\ref{lemma:consbound}, ~\ref{lemma:consrewrite} and~\ref{lemma:mixing}, we can further bound the fairness violation regret in episode $k$ as follows
\begin{align*}
    &\Delta_k^{n,F}\mathds{1}(M\in \mathcal{M}_k) \\
    &\leq  \beta_n^k(\pi_k) H + \frac{C}{1-\rho}\displaybreak[0] \\
   & \leq \sum_{(s,a)} H \bigg( \hat{n} + \frac{C\rho^{\hat{n}}}{1-\rho} \bigg) \pi(s,a) \delta_n^k(s,a) + \frac{C}{1-\rho}\displaybreak[0]\\
  & = 4 T_{\text{Mix}}^n \bigg( \hat{n} + \frac{C\rho^{\hat{n}}}{1-\rho} \bigg)\sum_{(s,a)} \frac{H}{2T_{\text{Mix}}} \frac{1}{2} \pi(s,a) \delta_n^k(s,a)+ \frac{C}{1-\rho}\displaybreak[0]\\ 
  & = 4 T_{\text{Mix}}^n \bigg( \hat{n} + \frac{C\rho^{\hat{n}}}{1-\rho} \bigg)\sum_{(s,a)} \mathbb{E} [c_n^k(s)] \pi(s,a) \delta_n^k(s,a) \displaybreak[0]\\
  & + 4 T_{\text{Mix}}^n\! \bigg(\! \hat{n}\! +\! \frac{\!C\!\rho^{\hat{n}}}{1-\rho}\! \bigg)\!\sum_{(s,a)}\! (\!\frac{H}{2T_{\text{Mix}}} \!\frac{1}{2}\! -\!\mathbb{E}\! [c_n^k(s)])  \pi(s,a) \delta_n^k(s,a) \displaybreak[0]\\
  & + \frac{C}{1-\rho}\displaybreak[0]\\
  & \overset{(a)}\leq 4 T_{\text{Mix}}^n \bigg(\! \hat{n}\! +\! \frac{C\rho^{\hat{n}}}{1-\rho} \bigg)\sum_{(s,a)} \mathbb{E}[c_n^k(s)] \pi(s,a) \delta_n^k(s,a)\!+\! \frac{C}{1-\rho}\displaybreak[0]\\
  & = 4 T_{\text{Mix}}^n\! \bigg(\! \hat{n}\! +\! \frac{C\rho^{\hat{n}}}{1-\rho}\! \bigg)\!\sum_{(s,a)}\! \mathbb{E} [c_n^k(s,a)]  \delta_n^k(s,a)\!+\! \frac{C}{1-\rho} \displaybreak[0]\\
  & = 4 T_{\text{Mix}}^n\! \bigg(\! \hat{n}\! +\! \frac{C\rho^{\hat{n}}}{1-\rho}\! \bigg)\!\sum_{(s,a)} \!\mathbb{E} [c_n^k(s,a)] \sqrt{\frac{\log{\frac{SANkH}{\epsilon}}}{2C_{n}^{k-1}(s,a)}}\!+\! \frac{C}{1-\rho}\displaybreak[0]\\
  & = 4 T_{\text{Mix}}^n \bigg( \hat{n} + \frac{C\rho^{\hat{n}}}{1-\rho} \bigg)\underset{\text{Term 1}}{\underbrace{\sum_{(s,a)} \sqrt{\log{\frac{SANkH}{\epsilon}}} \frac{c_n^k(s,a) }{\sqrt{2C_{n}^{k-1}(s,a)}}}}\displaybreak[0]\\
  & +\! 4\! T_{\text{Mix}}^n\! \bigg(\! \hat{n} +\! \frac{C\rho^{\hat{n}}}{1-\rho}\! \bigg)\!\underset{\text{Term 2}}{\underbrace{\!\sum_{(s,a)}\! \sqrt{\log{\!\frac{\!S\!A\!N\!k\!H\!}{\epsilon}}} \frac{\mathbb{E} [c_n^k(s,a)] - c_n^k(s,a)}{\sqrt{2C_{n}^{k-1}(s,a)}}}}\\\displaybreak[0]
  &+ \frac{C}{1-\rho},
\end{align*}
where (a) is because of Lemma~\ref{lemma:mixing}.

In order to bound Term 1, we leverage Lemma 19 in \cite{jaksch2010near}. However, there exists a major difference in the settings between UCRL2 \cite{jaksch2010near} and \fairucrl.  In \emph{UCRL2}, the episode stopping criteria implies $c_n^k(s,a) \geq C_n^{k-1}(s,a)$, while we use a fixed episode length $H$ in \fairucrl. 
\begin{lemma}\label{lemma:coverc}
    For any sequence of numbers $w_1,w_2,...,w_n$ with $0\leq w_k$, define $W_{k}:=\sum_{i=1}^k w_i$,
    \begin{align*}
        \sum_{k=1}^n \frac{w_k}{\sqrt{W_{k-1}}} \leq (\sqrt{2} +1)\sqrt{W_n}.
    \end{align*}
\end{lemma}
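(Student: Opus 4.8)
\textbf{Proof proposal for Lemma~\ref{lemma:coverc}.}

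The plan is to prove the bound $\sum_{k=1}^n \frac{w_k}{\sqrt{W_{k-1}}} \leq (\sqrt{2}+1)\sqrt{W_n}$ by induction on $n$, where we adopt the convention $W_0 = 0$ and interpret $w_1/\sqrt{W_0}$ appropriately (the standard fix is either to treat $\max\{W_0,1\}$ in the denominator or to note $w_1/\sqrt{W_0}\le (\sqrt2+1)\sqrt{w_1}$ holds whenever $w_1\le$ some normalization; I will assume the paper's convention that $W_{k-1}$ is replaced by $\max\{W_{k-1},1\}$, which is what appears in $C_n^{k-1}$ in the main text). The base case $n=1$ is immediate since $w_1/\sqrt{\max\{W_0,1\}} = w_1 \le (\sqrt2+1)\sqrt{w_1}$ when $w_1 \le (\sqrt2+1)^2$, and more robustly one simply starts the induction at the first index where counts are nonzero.

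For the inductive step, assume the claim holds for $n-1$, i.e. $\sum_{k=1}^{n-1}\frac{w_k}{\sqrt{W_{k-1}}} \le (\sqrt2+1)\sqrt{W_{n-1}}$. Then
\begin{align*}
\sum_{k=1}^{n}\frac{w_k}{\sqrt{W_{k-1}}} \le (\sqrt2+1)\sqrt{W_{n-1}} + \frac{w_n}{\sqrt{W_{n-1}}},
\end{align*}
and it suffices to show the right-hand side is at most $(\sqrt2+1)\sqrt{W_n} = (\sqrt2+1)\sqrt{W_{n-1}+w_n}$. Writing $x = W_{n-1}$ and $y = w_n \ge 0$, this reduces to the elementary inequality $(\sqrt2+1)\sqrt{x} + y/\sqrt{x} \le (\sqrt2+1)\sqrt{x+y}$, equivalently $(\sqrt2+1)(\sqrt{x+y}-\sqrt{x}) \ge y/\sqrt{x}$. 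Rationalizing, $\sqrt{x+y}-\sqrt{x} = y/(\sqrt{x+y}+\sqrt{x})$, so the inequality becomes $(\sqrt2+1)/(\sqrt{x+y}+\sqrt{x}) \ge 1/\sqrt{x}$, i.e. $(\sqrt2+1)\sqrt{x} \ge \sqrt{x+y}+\sqrt{x}$, i.e. $\sqrt2\,\sqrt{x} \ge \sqrt{x+y}$, i.e. $2x \ge x+y$, i.e. $y \le x$. This is precisely the condition $w_n \le W_{n-1}$, which is exactly where UCRL2's doubling-episode stopping rule is used.

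The main obstacle — and the reason Lemma~\ref{lemma:coverc} is stated separately rather than just citing \cite{jaksch2010near} — is that \fairucrl uses a \emph{fixed} episode length $H$, so the condition $w_n \le W_{n-1}$ (i.e. $c_n^k(s,a) \le C_n^{k-1}(s,a)$) need \emph{not} hold. I would handle this by splitting the sum: for each state-action pair, separate the episodes into those where $c_n^k(s,a) \le C_n^{k-1}(s,a)$ (handle via the induction above) and those where $c_n^k(s,a) > C_n^{k-1}(s,a)$. Since $c_n^k(s,a) \le H$ always, the latter can happen at most $O(\log(HK))$ times per pair (the count more than doubles each such episode), and each contributes at most $\sqrt{H}$ to the sum; summing over all $SAN$ pairs this yields only a lower-order $\tilde{\mathcal O}(\sqrt{SANH}\log T)$ correction, which is absorbed into the $\tilde{\mathcal O}(\cdot)$ of Theorem~\ref{thm:regret}. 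Alternatively, a cleaner route is to reindex the summands so that within each bucket the relevant counts are monotone; I expect the paper takes the splitting approach and folds the overflow term into the stated constants.
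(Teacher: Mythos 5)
Your inductive step is, in skeleton, exactly the paper's: the paper also proves Lemma~\ref{lemma:coverc} by induction on $n$, and your reduction of the step to the elementary inequality $(\sqrt{2}+1)\sqrt{x}+y/\sqrt{x}\le(\sqrt{2}+1)\sqrt{x+y}\iff y\le x$ is correct. Where you and the paper diverge is the interesting part. The paper's inductive step opens with the ``identity''
\[
\sum_{k=1}^n \frac{w_k}{\sqrt{W_{k-1}}}=\sum_{k=1}^{n-1}\frac{w_k}{\sqrt{W_{k-1}}}+\frac{w_n}{\sqrt{W_{n}}},
\]
whose last denominator is $\sqrt{W_n}$ rather than the correct $\sqrt{W_{n-1}}$. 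Since $W_{n-1}\le W_n$, this replacement strictly decreases the quantity, so the line is not an equality, and the remainder of the paper's chain (which only ever uses the trivially true facts $w_n\le W_n$ and $W_{n-1}\le W_n$) does not bound the sum that is actually needed. Your version, which keeps $\sqrt{W_{n-1}}$ and honestly extracts the requirement $w_n\le W_{n-1}$, explains why: without that hypothesis the lemma as stated is false. Taking $w_1=1$ and $w_2=M$ gives $\sum_k w_k/\sqrt{\max\{W_{k-1},1\}}\ge M/\sqrt{1}=M$, which exceeds $(\sqrt{2}+1)\sqrt{M+1}$ for large $M$. So you have located a genuine gap, but it is in the paper's proof (and in the lemma's statement, which drops the hypothesis $w_k\le W_{k-1}$ of Lemma 19 in \cite{jaksch2010near}), not in your analysis of the induction.

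Two caveats on your proposed repair. First, it necessarily proves a different statement (the bound plus an additive overflow term), which is unavoidable given the counterexample, but it means the constants propagated into Lemmas~\ref{lemma:goodeventfair} and~\ref{lemma:goodevent} would have to change. Second, your accounting of the overflow is optimistic: in an episode with $c_n^k(s,a)>C_n^{k-1}(s,a)$ the summand $c_n^k(s,a)/\sqrt{C_n^{k-1}(s,a)}$ can be as large as $H/\sqrt{1}=H$, not $\sqrt{H}$, when the prior count is small; summing the geometrically decaying contributions over the $O(\log T)$ doubling episodes gives $O(H)$ per state--action pair per arm, hence an additive $O(SANH)=O(SAN\sqrt{T})$ overall, which has a strictly worse dependence on $S$, $A$, $N$ than the main $\sqrt{SANT}$ term. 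A slightly cleaner patch is to use the unconditional bound $\sum_k w_k/\sqrt{\max\{W_k,1\}}\le 2\sqrt{W_n}$ and pay the ratio $\sqrt{W_k/W_{k-1}}\le\sqrt{2}$ only on non-doubling episodes, but some additive loss of the above type appears unavoidable with fixed-length episodes, so the lemma cannot be salvaged exactly as stated.
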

\begin{proof}
The proof follows by induction. 
When $n=1$, it is true as $1\leq\sqrt{2}+1$.
Assume for all $k\leq n-1$, the inequality holds, then we have the following:
\begin{align*}
    &\sum_{k=1}^n \frac{w_k}{\sqrt{W_{k-1}}}= \sum_{k=1}^{n-1} \frac{w_k}{\sqrt{W_{k-1}}} + \frac{w_n}{\sqrt{W_{n}}} \displaybreak[0]\\
    &\leq (\sqrt{2} +1) \sqrt{W_{n-1}} + \frac{w_n}{\sqrt{W_{n}}} \displaybreak[0]\\
    &=\sqrt{{(\sqrt{2}+1)}^2 W_{n-1} + 2(\sqrt{2}+1)w_n \sqrt{\frac{W_{n-1}}{W_n}} + \frac{{w_n}^2}{W_n}} \displaybreak[0]\\
    &\leq \sqrt{{(\sqrt{2}+1)}^2 W_{n-1} + 2(\sqrt{2}+1)w_n \sqrt{\frac{W_{n-1}}{W_{n-1}}} + \frac{{w_n}W_n}{W_n}} \displaybreak[0]\\
    & = \sqrt{{(\sqrt{2}+1)}^2 W_{n-1} + (2(\sqrt{2}+1)+1)w_n} \displaybreak[0]\\
    & = (\sqrt{2}+1)\sqrt{(W_{n-1}+ w_n)}\displaybreak[0]\\
    & = (\sqrt{2}+1)\sqrt{W_n}.
\end{align*}
\end{proof}
Using Lemma \ref{lemma:coverc}, we have
\begin{align}
&\sum_{k=1}^K \sum_{(s,a)} \sum_n \frac{c_n^k(s,a)}{C_n^{k-1}(s,a)} \nonumber\\
&\leq (\sqrt{2}+1) \sum_{(s,a)} \sum_n\sqrt{C_n^k(s,a)}\nonumber\\
&\overset{(a)}{\leq} (\sqrt{2}+1)  \sqrt{SANT} , \label{eq:converc}
\end{align}
where (a) is from Jensen's inequality.
Then we can bound Term 1 with $(\sqrt{2}+1)\sqrt{SANT} \sqrt{\log{\frac{SANkH}{\epsilon}}}.$ For Term 2, since $\mathbb{E} [c_n^k(s,a)] - c_n^k(s,a) \leq H, C_{n}^{k-1}(s,a) \geq 1$, we have $ \frac{\mathbb{E} [c_n^k(s,a)] - c_n^k(s,a)}{\sqrt{2C_{n}^{k-1}(s,a)}} \leq \frac{H}{\sqrt{2}}.$

Taking $\sum_{k=1}^K  \sum_{(s,a)} \frac{\mathbb{E} [c_n^k(s,a)] - c_n^k(s,a)}{\sqrt{2C_{n}^{k-1}(s,a)}}$ as the sum of martingale sequential difference where the sequence difference is $X(k)- X(k-1) =  \sum_{(s,a)}  \frac{\mathbb{E} [c_n^k(s,a)] - c_n^k(s,a)}{\sqrt{2C_{n}^{k-1}(s,a)}}$. According to Azuma-Hoeffding’s inequality, by setting $K = H = \sqrt{T}$, we have 
\begin{align*}
    &P\!\bigg(\!\sum_{k=1}^K \!  \sum_{(s,a)}\! \frac{\mathbb{E} [c_n^k(s,a)] - c_n^k(s,a)}{\sqrt{2C_{n}^{k-1}(s,a)}}\! \geq\! \sqrt{T}\! \sqrt{\frac{1}{4}\log{\frac{SANT}{\epsilon}}}\bigg) \\
    &\leq \exp{\frac{-T\frac{1}{4}\log{{\frac{SANT}{\epsilon}}}}{\frac{1}{2} T^{3/2}}}\! =\! \exp{\frac{\log{\frac{\epsilon}{SANT}}^{1/2}}{\ T^{1/2}}}\\
    &\leq\! (\frac{\epsilon}{SANT})^{1/2}.
\end{align*}
This means with probability of $1 - (\frac{\epsilon}{SANT})^{1/2}$, the summation of $k$ episodes of Term 2 can be bounded as:
\begin{align*}
    T^{1/2} \sqrt{\frac{1}{4}\log{\frac{SANT}{\epsilon}}} \sqrt{\log{\frac{SANT}{\epsilon}}} = \frac{1}{2} \sqrt{T} \log{\frac{SANT}{\epsilon}}
\end{align*}

With probability $1-(\frac{\epsilon}{SANT})^{1/2}$ we have the dominant term (true MDP belongs to the confidence ball) in fairness violation regret as
\begin{align}
    &\Delta_T^{n,F}\mathds{1}(M\in \mathcal{M}_k) \leq 4T_{\text{Mix}}^n \bigg( \hat{n} + \frac{C\rho^{\hat{n}}}{1-\rho} \bigg)\nonumber\\
    &\bigg(\sqrt{\log{\frac{SANT}{\epsilon}}} (\sqrt{2}+1)\sqrt{SANT}+\frac{1}{2} \sqrt{T} \log{\frac{SANT}{\epsilon}} \bigg)\nonumber\\
    &+ \sqrt{T}\frac{C}{1-\rho}. \label{eq:fairsuccess}
\end{align}
\subsection{Total fairness violation regert}

Substituting \eqref{eq:fairsuccess}, Lemma \ref{lemma:fairfail} into \eqref{eq:fairregretdecom}, and for simplicity, define $C_0 := 4 (\sqrt{2} +1)\bigg( \hat{n} + \frac{C\rho^{\hat{n}}}{1-\rho} \bigg),$ we obtain the final bound for fairness violation regret:
\begin{align*}
    \Delta_T^{n,F} &\leq  \frac{1}{2} \eta_n  \epsilon \log{T} + \frac{C}{1-\rho} \sqrt{T} + \sqrt{\frac{1}{4}T\log\frac{SANT}{\epsilon}} \nonumber\\
    &+ 4T_{\text{Mix}}^n \bigg( \hat{n} + \frac{C\rho^{\hat{n}}}{1-\rho} \bigg)\\
    &\bigg(\sqrt{\log{\frac{SANT}{\epsilon}}}\! (\!\sqrt{2}\!+\!1\!)\!\sqrt{SANT}\!+\!\frac{1}{2}\! \sqrt{T} \log{\frac{SANT}{\epsilon}} \bigg) \nonumber\\
    & = \tilde{\mathcal{O}} \Bigg(  \eta_n  \epsilon \log{T}   +   C_0 T_{\text{Mix}}^n    \sqrt{SANT}\log\!{\frac{SANT}{\epsilon}}  \Bigg), 
\end{align*}
with probability at least $(1 - (\frac{\epsilon}{SANT})^{\frac{1}{2}})^2 $.

\subsection{Reward regret when the confidence ball fails} \label{sec:confidencefail}

Using Lemma \ref{lemma:confifail}, we can show the regret bound for failing confidence ball.

\subsubsection{Proof of Lemma \ref{lemma:failregret}}

By Lemma \ref{lemma:confifail}, we have 
\begin{align*}
    \sum_{k=1}^K \Delta_k \mathds{1}(M\notin \mathcal{M}_k) &\leq \sum_{k=1}^K  \sum_n \sum_{(s,a)} c_n^k(s,a) \mathds{1}(M\notin \mathcal{M}_k)\\
    &= \sum_{k=1}^K HB \mathds{1}(M\notin \mathcal{M}_k)\\
    & \leq \sum_{k=1}^K HB \frac{\epsilon}{kH}= B\epsilon \sum_{k=1}^K \frac{1}{k}\\
    &\leq B\epsilon \sum_{1=2}^T \frac{1}{(t-1)} \\&
    \leq B\epsilon \int_{t=1}^ T \frac{1}{t} dt  \leq B\epsilon \log{T}.
\end{align*}

\subsection{Reward regret when true MDP belongs to the confidence ball} \label{sec:confidence belongs}
\subsubsection{Proof of Lemma \ref{lemma:goodevent}}

Given $M\in \mathcal{M}_k$, we bound the regret in episode $k$ as follows:
\begin{align} \nonumber
 &\Delta^R_k\{\pi^{*,k}\} \mathds{1} (M\in \mathcal{M}_k)\nonumber\\
 &= H \mu^* - \sum_{(s,a)}\sum_n c_n^k(s,a)\bar{r}_n(s,a)\nonumber \displaybreak[0]\\
& \overset{(a)} =  \sum_{(s,a)}\sum_n c_n^k(s,a) \frac{\mu^*}{B}  - \sum_{(s,a)}\sum_n c_n^k(s,a)\bar{r}_n(s,a)\nonumber\displaybreak[0]\\
&= \sum_{(s,a)} \sum_n  c_{n}^k(s,a)(\mu^* /B - \bar{r}_n(s,a)) \nonumber\displaybreak[0] \\
&= \sum_{(s,a)} \sum_n  c_{n}^k(s,a)(\mu^* /B - \tilde{r}_n(s,a)) \nonumber \\
&+ \sum_{(s,a)} \sum_n  c_{n}^k(s,a) (\tilde{r}_n(s,a) - \bar{r}_n(s,a))\nonumber\displaybreak[0] \\
&\overset{(b)}\leq \sum_{(s,a)} \sum_n  c_{n}^k(s,a)(\mu^* /B - \tilde{r}_n(s,a)) \nonumber \\
&+ \sum_{(s,a)} \sum_n c_{n}^k(s,a) 2 \sqrt{\frac{1}{2C_{n}^{k-1}(s,a)}\log{\frac{SANkH}{\epsilon}}} \nonumber\displaybreak[0]\\
&\overset{(c)}\leq  \sqrt{2\log{\frac{SANT}{\epsilon}}}   \sum_{(s,a)}  \sum_n \frac{c_{n}^k(s,a)}{\sqrt{C_{n}^{k-1}(s,a)}}, \label{eq:Regret_goodevent}
\end{align}
where (a) holds because $\sum_{(s,a)} \sum_n c_n^k (s,a) = HB$; (b) is due to $M\in \mathcal{M}_k$; and (c) holds due to the fact that for any episode $k$, the optimistic average reward $\tilde{r}_n (s,a)$ of the optimistically chosen policy ${\tilde{\pi}}_k$ is larger than the true optimal average reward $\mu^*$.\\

Substituting \eqref{eq:converc} into \eqref{eq:Regret_goodevent} leads to the result in Lemma \ref{lemma:goodevent}.

\subsection{Total reward regret} 

Combining all bounds in Lemmas \ref{lem:11}, \ref{lemma:failregret} and \ref{lemma:goodevent}, we can obtain the total regret as:
\begin{align*}
    &\Delta^R_T\{\pi^{*,k}, \forall k\}\\
    &=\sum\limits_{k=1}^{K} \Delta^R_k\{\pi^{*,k}\}+ \sqrt{\frac{1}{4}T\log\frac{SANT}{\epsilon}}\displaybreak[0]\\
    &= \sum_{k=1}^K \bigg( \Delta^R_k\{\pi^{*,k}\} (\mathds{1}(M\in \mathcal{M}_k) + \mathds{1}(M\notin \mathcal{M}_k))\bigg) \\
    &+ \sqrt{\frac{1}{4}T\log\frac{SANT}{\epsilon}} \displaybreak[1]\\
    &\leq B\epsilon\log{T} + (\sqrt{2}+2) \sqrt{\log{\frac{SANT}{\epsilon}}}   \sqrt{SANT}\\
    &+ \sqrt{\frac{1}{4}T\log\frac{SANT}{\epsilon}}\displaybreak[2]\\
    &=\tilde{\mathcal{O}}\Bigg(  B\epsilon\log{T} + (\sqrt{2}+2) \sqrt{SANT} \sqrt{\log{\frac{SANT}{\epsilon}}} \Bigg).
\end{align*}

\begin{algorithm}[t]
\caption{\gfairucrl}
\label{alg:bfUCRL}
\begin{algorithmic}[1]
	\State {\bfseries Require:} Initialize $C_n^{0}(s,a)=0,$ and  $\hat{P}_n^{0}(s^\prime|s,a)=1/S$, $\forall n\in\cN, s,s^\prime\in\cS, a\in\cA$.	
	\For{$k=1,2,\cdots,K$}	
 \State $//** $\textit{Greedy Exploration for Fairness}$ **//$
   \State In the first $\lceil \frac{\sum_{n=1}^N H\eta_n}{B}\rceil$ decision epochs, select each arm to meet the fairness constraint requirement;
	\State $//** $\textit{Optimistic Planning}$ **//$
	\State Construct the set of plausible MDPs $\cM^k$ as in \eqref{eq:plausible2}; 
	\State Relaxed the instantaneous activation constraint in \frmab($\tilde{P}_n^k, \tilde{r}_n^k, \forall n$) to be ``long-term activation constraint'', and transform the relaxed problem into 
 $\textbf{ELP}^1(\mathcal{M}^k, z^k)$~(\ref{eq1:UCB_extended-obj})-(\ref{eq1:UCB_extended-cons6});
	\State $//** $\textit{Policy Execution}$ **//$
	\State Establish the \fair index policy $\pi^{k,*}$ on top of the solutions to the extended LP and execute it.  
	\EndFor
\end{algorithmic}
\end{algorithm}

\section{\gfairucrl and Regret Analysis}
Our \fairucrl strictly meets the instantaneous activation constraint, i.e., it operates in a way that exactly $B$ arms are activated at each decision epoch, which is guaranteed by the index policy.  Though \fairucrl provably achieves sublinear bounds on both reward and fairness violation regrets, some applications (e.g., healthcare) may have a stricter requirement on   fairness. Now, we show that it is possible to design an episodic RL algorithm with no fairness violation.

\subsection{The \gfairucrl Algorithm}

\gfairucrl proceeds in episodes as summarized in Algorithm~\ref{alg:bfUCRL}. Different from \fairucrl, in each episode, our \gfairucrl starts with \textbf{a greedy exploration} to first ensure that the fairness requirement $\eta_n$ for each arm $n$ is satisfied. Specifically, we guarantee this in a greedy manner, i.e., at the beginning of each episode $\tau_k$, \gfairucrl randomly pulls an arm to force each arm $n$ to be pulled $H\eta_n$ times. This greedy exploration will take $\lceil \frac{\sum_{n=1}^N H\eta_n}{B}\rceil$ decision epochs in total. Similar to \fairucrl, our \gfairucrl follows the phases of optimistic planning and policy execution after the greedy exploration. The major difference is that \gfairucrl will take the samples from the greedy exploration into account when constructing the set of plausible MDPs as in~(\ref{eq:plausible21}), but no longer need the fairness constraint when solving the extended LP in~(\ref{eq:ELP}). The new extended LP over $z^k:=\{z_{n}^k(s, a, s^\prime), \forall n\in\cN\}$ is then given as $\textbf{ELP}^1(\mathcal{M}^k, z^k):$
\begin{align}
\min_{z^k}&~\sum_{n=1}^{N}\sum_{(s,a,s^\prime)} z_{n}^k(s,a, s^\prime)\tilde{r}_n^k(s,a) \label{eq1:UCB_extended-obj}\\
\mbox{ s.t. } &~\sum_{n=1}^{N}\sum_{s\in\cS}\sum_{s^\prime\in\cS} z_{n}^k(s,1,s^\prime) \le B , \displaybreak[0]\label{eq1:UCB_extended-cons1}\\
&~\sum_{a\in\cA}\sum_{s^\prime\in\cS} z_{n}^k(s,a,s^\prime)=\sum_{s^\prime\in\cS}\sum_{a^\prime\in\cA}z_{n}^k(s^\prime, a^\prime, s), \displaybreak[2] \label{eq1:UCB_extended-cons3}\\ 
&~\sum_{s\in\cS}\sum_{a\in\cA}\sum_{s^\prime\in\cS} z_{n}^k(s,a, s^\prime)=1,~\forall n\in\cN, \displaybreak[3]\label{eq1:UCB_extended-cons4}\\
&~\frac{z_{n}^k(s,a,s^\prime)}{\sum_y z_{n}^k(s,a,y)}\!-\!(\hat{P}_{n}^k(s^\prime|s,a)\!+\!\delta_{n}^k(s,a))\leq 0, \label{eq1:UCB_extended-cons5}\\ 
&\!-\!\frac{z_{n}^k(s,a,s^\prime)}{\sum_y z_{n}^k(s,a,y)}\!+\!(\hat{P}_{n}^k(s^\prime|s,a)\!-\!\delta_{n}^k(s,a))\leq0, \label{eq1:UCB_extended-cons6} 
\end{align}

\subsection{Regret Analysis of \gfairucrl}\label{sec:bffairregret}
The next theoretical contribution is the regret analysis for \gfairucrl.
\begin{theorem}\label{thm:fairregret}
With the same size of confidence intervals $\delta_{n}^{k}(s,a)$ as in Theorem \ref{thm:regret}, and
with probability at least $ 1-(\frac{\epsilon}{{SANT}})^{\frac{1}{2}}$, \gfairucrl achieves the reward regret as
\begin{align*}
\Delta_T^{R} \leq \tilde{\mathcal{O}} \Bigg(& \Big\lceil \frac{\sum_n \eta_n}{B}\Big\rceil T  + B\epsilon\log{T}\\
&+ (\sqrt{2}+2) \sqrt{SANT} \sqrt{\log{\frac{SANT}{\epsilon}}} \Bigg).
\end{align*}
and guarantees 0 fairness violation regret.
\end{theorem}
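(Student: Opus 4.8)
I would prove the two claims separately, the fairness part first since it is immediate. By construction the greedy exploration phase of each episode activates every arm $n$ at least $H\eta_n$ times, and this phase genuinely fits inside an episode because feasibility (Assumption~1) forces $\sum_n \eta_n \le B$ — averaging the instantaneous constraint~\eqref{eq:FRMAB-constraint1} over $t$ and invoking~\eqref{eq:FRMAB-constraint2} — so that $\lceil \sum_n H\eta_n/B\rceil \le H$. Summing over the $K$ episodes, arm $n$ is activated at least $KH\eta_n = T\eta_n$ times, whence $\Delta_T^{n,F} = T\eta_n - \mathbb{E}_\pi[\sum_{t=1}^T a_n(t)] \le 0$ for every $n$. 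Since this holds deterministically, no additional concentration event is spent on fairness, which is why the success probability for the whole statement stays $1-(\epsilon/SANT)^{1/2}$ rather than its square as for the fairness bound in Theorem~\ref{thm:regret}.

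For the reward regret I would start from Lemma~\ref{lem:11}, which on the event of probability $1-(\epsilon/SANT)^{1/2}$ reduces $\Delta_T^R$ to $\sum_{k=1}^K \Delta_k^R + \sqrt{\tfrac{1}{4} T\log\tfrac{SANT}{\epsilon}}$, and then split each $\Delta_k^R$ at the greedy/exploitation boundary, $\Delta_k^R = \Delta_k^{R,\mathrm{grd}} + \Delta_k^{R,\mathrm{exp}}$. The greedy part is bounded trivially: the reward collected there is nonnegative, so $\Delta_k^{R,\mathrm{grd}} \le \mu^*\lceil \sum_n H\eta_n/B\rceil$, and summing with $K\lceil\sum_n H\eta_n/B\rceil \le KH\lceil\sum_n\eta_n/B\rceil = T\lceil\sum_n\eta_n/B\rceil$ produces the leading term $\tilde{\mathcal O}(\lceil\sum_n\eta_n/B\rceil T)$.

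The exploitation part is where I would replay the analysis of Theorem~\ref{thm:regret} essentially verbatim on the $H - \lceil\sum_n H\eta_n/B\rceil$ remaining frames of each episode. The plausible set $\mathcal{M}^k$ is built from all samples gathered so far, including the greedy ones; since these extra samples only shrink the radii $\delta_n^k$ (or, conservatively, one keeps the same $\delta_n^k$), Lemma~\ref{lemma:confifail} and the confidence-ball split carry over unchanged. On the failure event $M\notin\mathcal{M}^k$, which has probability $\le \epsilon/kH$, the calculation of Lemma~\ref{lemma:failregret} gives $\sum_k \Delta_k^{R,\mathrm{exp}}\mathds{1}(M\notin\mathcal{M}^k)\le B\epsilon\log T$; on the good event the optimistic value produced by $\textbf{ELP}^1$ still dominates $\mu^*$, because dropping the fairness constraint~\eqref{eq:UCB_extended-cons2} only enlarges the feasible set of occupancy measures and hence cannot lower the attainable optimistic reward, which already upper-bounds $\mu^*$ by the argument behind Lemma~\ref{lemma:goodevent}. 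Thus executing the \fair index policy on the remaining frames costs at most $(\sqrt2+2)\sqrt{SANT}\sqrt{\log\tfrac{SANT}{\epsilon}}$ via Lemma~\ref{lemma:coverc} and~\eqref{eq:converc}, whose counting argument does not care how the visits arose. Adding the three contributions and the Lemma~\ref{lem:11} term, and absorbing lower-order terms into $\tilde{\mathcal O}(\cdot)$, yields the claimed bound.

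The main obstacle is precisely this optimism step for the constraint-relaxed $\textbf{ELP}^1$: one must check that the true optimal average reward $\mu^*$ of the fairness-\emph{constrained} offline \frmab remains a lower bound for the optimistic value of $\textbf{ELP}^1$, even though $\textbf{ELP}^1$ optimizes over a strictly larger feasible region than $\textbf{ELP}$ does. This follows from monotonicity of the optimum under enlargement of the feasible set together with optimism in rewards and transitions, but it is the one place the argument is not a literal copy of the proof of Theorem~\ref{thm:regret}; a secondary, purely bookkeeping point is to confirm that the greedy-phase transitions are folded into the counts $C_n^k(s,a)$ so that Lemmas~\ref{lemma:confifail} and~\ref{lemma:coverc} transfer without change.
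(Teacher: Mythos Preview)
Your proposal is correct and follows essentially the same route as the paper: decompose via Lemma~\ref{lem:11}, split each episodic regret into the greedy exploration block and the remaining policy-execution block, bound the greedy block trivially by its length (the paper writes $\Delta_{k,bf}^{R}\le\lceil\sum_n H\eta_n/B\rceil$ and sums over $K$), and reuse Lemmas~\ref{lemma:failregret} and~\ref{lemma:goodevent} verbatim on the execution block. You are in fact more careful than the paper on two points it leaves implicit or garbled: (i) you explicitly justify zero fairness violation by counting the forced activations, which the paper's proof section omits entirely; and (ii) you flag and resolve the optimism issue for $\textbf{ELP}^1$ (dropping~\eqref{eq:UCB_extended-cons2} enlarges the feasible region, so the optimistic value still dominates $\mu^*$), which the paper never addresses. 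The paper's own write-up also contains arithmetic slips in the final aggregation (its displayed bound shows $\sqrt{T}$ where the theorem statement has $T$), whereas your chain $K\lceil\sum_n H\eta_n/B\rceil\le KH\lceil\sum_n\eta_n/B\rceil=T\lceil\sum_n\eta_n/B\rceil$ is clean and matches the stated theorem.
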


\subsection{Proof of Theorem \ref{thm:fairregret}}

Similar to reward regret proof for \fairucrl, we can divide the regret to episodes, which contains the first $\lceil\frac{\sum_n H\eta_n}{B}\rceil$ time steps and remaining $\lceil H-\frac{\sum_n H \eta_n}{B} \rceil$ for policy execution. The reward regret satisfies: 
\begin{align*}
\Delta^R_T\{\pi^{*,k}, \forall k\}:&=T\mu^{*}-r(\{\pi^{*,k}, \forall k\}, T)\nonumber\\
&\leq \sum\limits_{k=1}^{K} \Delta^R_k\{\pi^{*,k}\}+ \sqrt{\frac{1}{4}T\log\frac{SANT}{\epsilon}}\nonumber\\
&=\sum_{k=1}^K ( \Delta^R_{k,bf}\{\pi^{*,k}\} + \Delta^R_{k,pe}\{\pi^{*,k}\} )\\
&+\sqrt{\frac{1}{4}T\log\frac{SANT}{\epsilon}}\nonumber\\
\end{align*}
While the latter term is the same, we will have following term for brutal force fairness:
\begin{align*}
\Delta^R_{k,bf}\{\pi^{*,k}\}=&\lceil\frac{\sum_n H\eta_n}{B}\rceil\mu^{*}-\sum_{(s,a)}\sum_n c_{n,bf}^k(s,a)\bar{r}_n(s,a) \\
&\leq \lceil\frac{\sum_n H\eta_n}{B}\rceil
\end{align*}
where $c_{n,bf}^k(s,a)$ is the count of state-action pair for arm $n$ in the brutal force time period. This is because for the first $\lceil\frac{\sum_n H\eta_n}{B}\rceil$ time steps, the upper bound for optimal per step reward and randomized policy reward will be 1. Since we set $K=H=\sqrt{T}$, the sum of $K$ episodes of 
\begin{align*}
    &\Delta^R_T\{\pi^{*,k}, \forall k\}\leq  \lceil \frac{\sum_n \eta_n}{B}\rceil \sqrt{T} + B\epsilon\log{T} \\
    &+ (\sqrt{2}+2) \sqrt{\log{\frac{SANT}{\epsilon}}}   \sqrt{SANT}+ \sqrt{\frac{1}{4}\log\frac{SANT}{\epsilon}}\bigg) \\
    &= \tilde{\mathcal{O}} \Bigg( \lceil \frac{\sum_n \eta_n}{B}\rceil \sqrt{T}  + B\epsilon\log{T}+    \lceil \frac{\sum_n \eta_n}{B}\rceil \sqrt{T}  \\
    &+ (\sqrt{2}+2) \sqrt{SANT} \sqrt{\log{\frac{SANT}{\epsilon}}} \Bigg).
\end{align*}

\section{\frmab relaxation and linear programming}\label{sec:apprelaxLP}
We relax the instantaneous activation constraint~(\ref{eq:FRMAB-constraint1}) to be satisfied on long term average \citep{whittle1988restless}, which leads to the following ``relaxed \frmab'' (\refrmab) problem: 
\begin{align}
  \textbf{\refrmab}:\max_{\pi}&\liminf_{T \rightarrow \infty} \frac{1}{T}\mathbb{E}\Bigg[ \sum_{t=1}^T \sum_{n=1}^N r_n(t) \Bigg] \displaybreak[0]\label{eq:relaxed-FRMAB}\\
    \text{subject to}& \limsup_{T \rightarrow \infty}\! \frac{1}{T} \mathbb{E}\Bigg[\!\sum_{t=1}^T\!\sum_{n=1}^N a_{n}(t)\! \Bigg]\!\!\leq\! B,\displaybreak[1] \label{eq:relaxed-FRMAB-constraint1} \\
    & \liminf_{T \rightarrow \infty}\! \frac{1}{T} \mathbb{E}\Bigg[\!\sum_{t=1}^T a_n(t)\!\Bigg]\!\!\geq\!\eta_n, \forall n. \label{eq:relaxed-FRMAB-constraint2} 
\end{align}
One can easily check that the optimal reward achieved by \refrmab in~(\ref{eq:relaxed-FRMAB})-(\ref{eq:relaxed-FRMAB-constraint2}) is an upper bound of that achieved by \frmab in~(\ref{eq:FRMAB-obj})-(\ref{eq:FRMAB-constraint2}) due to the relaxation.  More importantly, \refrmab can be equivalently reduced to a LP in occupancy measures \citep{altman1999constrained}. 
We then equivalently rewrite the resultant problem for episode $k$ into the following linear programming (LP): 
\begin{align}
    \max_{\zeta_n\in\Omega_\pi}~&  \sum_{n=1}^N \sum_{s\in\cS}\sum_{a\in\cA} \zeta^k_{n}(s,a) {\tilde{r}}_n^k(s,a)\displaybreak[0]\label{eq:lp-obj}\\
    \text{s.t.}~&\sum_{n=1}^N \sum_{s\in\cS} \zeta^k_{n}(s,1) \leq B,\displaybreak[1]\label{eq:lp-constraint1} \\
    &\sum_{s\in\cS} \zeta^k_n(s,1) \geq \eta_n, \forall n \in\cN,\displaybreak[2]\label{eq:lp-constraint2}  \\
    &\sum_{s^\prime\in\cS}\sum_{a\in\cA} \zeta^k_{n}(s,a)\tilde{P}_n^k(s^\prime|s,a) \nonumber\\
    & =\sum_{s^\prime\in\cS}\sum_{a^\prime\in\cA}\zeta^k_{n}(s^\prime,a^\prime)\tilde{P}_n^k(s|s^\prime,a^\prime), 
\forall n\in\cN,\displaybreak[3]\label{eq:lp-constraint3} \\
    &\sum_{s\in\cS}\sum_{a\in\cA}\zeta^k_n(s,a)=1,\forall n\in\cN. \label{eq:lp-constraint4} 
\end{align}
One can arrive at the the above LP via replacing all random variables in \frmab($\tilde{P}_n, \tilde{r}_n, \forall n$) with the long-term activation constraint with the occupancy measure corresponding to each arm $n$ \citep{altman1999constrained}.  Specifically, in episode $k$, the occupancy measure $\Omega^k_{\pi}$ of a stationary policy $\pi$ for the infinite-horizon MDP is defined as the expected average number of visits to each state-action pair $(s,a)$, i.e.,
\begin{align}\label{eq:occupancy-measuresupp}
    &\Omega^k_\pi =\Bigg\{ \zeta^k_n(s,a)\triangleq \lim_{H\rightarrow \infty}\frac{1}{H} \mathbb{E}_\pi \nonumber\\
    & \Bigg[ \sum_{h=1}^H \mathds{1} (s_n(h)= s, a_n(h)=a)\Bigg] \Bigg\vert \forall n\in\cN, s\in \cS, a\in \cA \Bigg\},
\end{align}
which satisfies $\sum_{s\in\cS}\sum_{a\in\cA}\zeta^k_n(s,a)=1,\forall n\in\cN$, and hence $\zeta^k_n,\forall n\in\cN$ is a probability measure.  Therefore,~(\ref{eq:lp-constraint1}) and~(\ref{eq:lp-constraint2}) are restatements of the ``long-term activation constraint'' and ``long-term fairness constraint'', respectively;~(\ref{eq:lp-constraint3}) represents the fluid transition of occupancy measure; and~(\ref{eq:lp-constraint4}) holds since the occupancy measure is a probability measure. 

\section{\fair index policy properties}\label{sec:appfairppt}
In this section, we discuss the properties of \fair index policy if we have full knowledge of transition and reward functions. In such case, the extended LP (\ref{eq:UCB_extended-obj}) to (\ref{eq:UCB_extended-cons6}) us equivalent to LP defined above (\ref{eq:lp-obj}) to (\ref{eq:lp-constraint4}) by letting $z_{n}^k(s,a, s^\prime) = P_n^k(s'|s,a)\zeta^k_{n}(s,a)$. (\ref{eq:UCB_extended-obj}) to (\ref{eq:UCB_extended-cons4}) are naturally satisfied and (\ref{eq:UCB_extended-cons5},\ref{eq:UCB_extended-cons6}) come from the construction of confidence ball. 

Due to the knowledge of true transition probabilities and reward functions, the optimal results of (\ref{eq:UCB_extended-obj}) to (\ref{eq:UCB_extended-cons6}) of any episode $k$ are the same, denoted by $\Omega_{\pi^*}=\{\zeta_n^*(s,a),\forall n\in\cN, s\in\cS, a\in\cA\}$, and the corresponding optimal value as $V^*:=\sum_{n=1}^N \sum_{s\in\cS}\sum_{a\in\cA} \zeta_{n}^*(s,a) r_n(s,a)$. The corresponding \emph{fair index} can be written as in~(\ref{eq:fair-index}). 

Next we rank all arms according to their indices in~(\ref{eq:fair-index}) in a non-increasing order, and activate the set of $B$ highest indexed arms, denoted as $\cN(t)\subset\cN$ such that $\sum_{n\in\cN(t)} a_n^*(t)\leq B$. All remaining arms are kept passive at time $t$. We denote the resultant index-based policy as $\pi^*=\{\pi_n^*, n\in\cN\}$, and call it the \fair index policy with full knowledge. 

\begin{algorithm}[t]
\caption{The Optimal \fair Index Algorithm for Offline \frmab}
\label{alg:offindex}
\begin{algorithmic}[1]
	\State {\bfseries Require:} Transition probability and reward function $P_n(s^\prime|s,a), r_n(s,a)$, $\forall n\in\cN, s,s^\prime\in\cS, a\in\cA$.	
	\State Solve (\ref{eq:lp-obj}) to (\ref{eq:lp-constraint4}) to achieve $\zeta^*_n(s,a)$;
	\State Compute optimal \fair index policy $\omega^*_n(s,a) = \frac{\zeta^*_n (s,1)}{\sum_{a}\zeta^*_n(s,a)}$
	\State Sort all \fair index $\omega^*_n(s,a)$, activate top $B$ arms with largest \fair index.
\end{algorithmic}
\end{algorithm}

\begin{remark}\label{remark:index}
Unlike Whittle-based policies \citep{whittle1988restless,hodge2015asymptotic,glazebrook2011general,zou2021minimizing,killian2021q}, our \fair index policy does not require the indexability condition, which is often hard to establish when the transition kernel of the underlying MDP is convoluted \citep{nino2007dynamic}. Like Whittle policies, our \fair index policy is computationally efficient since it is merely based on solving an LP.  A line of works \citep{hu2017asymptotically,zayas2019asymptotically,zhang2021restless,xiong2022reinforcement} designed index policies without indexability requirement for finite-horizon \rmab, and hence cannot be directly applied to our infinite-horizon average-cost formulation for \frmab in~(\ref{eq:FRMAB-obj})-(\ref{eq:FRMAB-constraint2}).  Finally, none of the aforementioned index policies guarantees fairness among arms, this is because these policies are only been well defined for maximizing the total reward~(\ref{eq:FRMAB-obj}) under the activation constraint~(\ref{eq:FRMAB-constraint1}), without taking the fairness constraint~(\ref{eq:FRMAB-constraint2}) into account. 
\end{remark}

\subsection{Asymptotic Optimality} 
We show that with perfect knowledge, the \fair index policy is asymptotically optimal in the same asymptotic regime as that in state-of-the-art \rmab literature \citep{whittle1988restless,weber1990index,verloop2016asymptotically}.  For abuse of notation, we denote the number of arms as $\rho N$, the activation constraint  as $\rho B$ in the asymptotic regime with $\rho\rightarrow \infty$.  In other words, we consider $N$ classes of arms and will be interested in this fluid-scaling process with parameter $\rho.$ Let $X_n^\rho(\pi^*, s, a, t)$ be the number of class-n arms at state $s$ taking action $a$ at time $t$ under the \fair index policy $\pi^*$.  Denote the long-term reward as $V_{\pi^*}^\rho=\liminf_{T\rightarrow\infty}\mathbb{E}_{\pi^*}\sum_{t=1}^T\sum_{n=1}^N\sum_{(s, a)}r_n(s,a)\frac{X_n^\rho(\pi^*, s, a, t)}{\rho}$.  Then our \fair index policy $\pi^*$ is asymptotically optimal if and only if $V_{\pi^*}^\rho\geq V_{\pi}^\rho, \forall \pi$.

\begin{definition}\label{def:global-attractor}
An equilibrium point $X^{\rho,*}/\rho$ under the \fair index policy $\pi^*$ is a global attractor for the process ${X^\rho(\pi^*;t)}/{\rho}$, if, for any initial point ${X^\rho(\pi^*;0)}/{\rho}$,
the process ${X^\rho(\pi^*;t)}/{\rho}$ converges to $X^{\rho,*}/\rho.$
\end{definition}

\begin{remark}\label{remark:global-attractor}
The global attractor indicates that all trajectories converge to $X^{\rho,*}$.  Though it may be difficult to establish analytically that a fixed point is a global attractor for the process \citep{verloop2016asymptotically}, such assumption has been widely made in \rmab literature \citep{weber1990index,hodge2015asymptotic,verloop2016asymptotically,zou2021minimizing,duran2018asymptotic}, and is only verified numerically.  Our experimental results in Section~\ref{sec:exp} show that such convergence indeed occurs for our \fair index policy $\pi^*$.
\end{remark}

\begin{theorem}\label{thm:asy-optimality}
 \fair index policy $\pi^*$ is asymptotically optimal under Definition~\ref{def:global-attractor}, i.e., $\lim_{\rho\rightarrow\infty} V_{\pi^*}^\rho-V_{\pi^{opt}}^\rho=0$. 
\end{theorem}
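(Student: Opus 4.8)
The plan is to prove asymptotic optimality by showing that both $V_{\pi^*}^\rho$ and $V_{\pi^{opt}}^\rho$ converge, as $\rho\to\infty$, to the common limit $V^*$, the optimal value of the class-level LP~\eqref{eq:lp-obj}--\eqref{eq:lp-constraint4} evaluated at the true $P_n,r_n$. I will establish two bounds: (i) $V_{\pi^{opt}}^\rho\le V^*$ for every $\rho$; and (ii) $\lim_{\rho\to\infty}V_{\pi^*}^\rho = V^*$. Since $\pi^*$ is (asymptotically) feasible for \frmab, because it activates at most $\rho B$ arms at every epoch and, as the fluid analysis below shows, its per-class activation fraction converges to $\sum_s\zeta_n^*(s,1)\ge\eta_n$, optimality of $\pi^{opt}$ gives $V_{\pi^{opt}}^\rho\ge V_{\pi^*}^\rho - o(1)$; combined with (i)--(ii) this forces $V_{\pi^{opt}}^\rho\to V^*$, whence $V_{\pi^*}^\rho-V_{\pi^{opt}}^\rho\to 0$.

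Step (i) is the relaxation bound. Replacing the instantaneous activation constraint by its long-term average counterpart yields \refrmab (with $\rho N$ arms and budget $\rho B$), whose optimum upper-bounds that of \frmab. Since the $\rho$ copies of a class are exchangeable and all quantities are linear in the occupancy measures, an averaging/convexity argument lets one take the optimal occupancy measure symmetric over the copies of each class; after the $1/\rho$ normalization the relaxed value then collapses exactly to the class-level LP~\eqref{eq:lp-obj}--\eqref{eq:lp-constraint4}, i.e.\ to $V^*$, independently of $\rho$. Hence $V_{\pi^{opt}}^\rho\le V^*$.

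Step (ii), the achievability bound, is the heart of the argument. I would study the fluid-scaled occupancy process $x^\rho(t)=\{X_n^\rho(\pi^*,s,a,t)/\rho\}$ under the \fair index policy. A functional law-of-large-numbers / propagation-of-chaos argument shows that, as $\rho\to\infty$, $x^\rho(\cdot)$ converges uniformly on compact time intervals to the deterministic solution of a mean-field ODE whose drift is governed by the kernels $P_n$ together with the rule ``rank by $\omega_n^*(s)$ and activate the top $\rho B$ mass''. I then verify that the point $X^{\rho,*}/\rho$ associated with the LP optimizer $\zeta^*$ is an equilibrium of this ODE: there the mass of class-$n$ state-$s$ arms is $\sum_a\zeta_n^*(s,a)$, and since $\sum_n\sum_s\zeta_n^*(s,1)\le B$ by~\eqref{eq:lp-constraint1}, the greedy priority rule activates precisely the fraction $\omega_n^*(s)=\zeta_n^*(s,1)/\sum_a\zeta_n^*(s,a)$ of that mass, reproducing the action split of $\zeta^*$; the balance equations~\eqref{eq:lp-constraint3} then make the drift vanish. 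Invoking the global-attractor hypothesis (Definition~\ref{def:global-attractor}), every trajectory of the mean-field ODE converges to $X^{\rho,*}/\rho$, so the long-run time-average of $\sum_n\sum_{(s,a)} r_n(s,a)X_n^\rho(\pi^*,s,a,t)/\rho$ tends to $\sum_n\sum_{(s,a)} r_n(s,a)\zeta_n^*(s,a)=V^*$; boundedness of the rewards in $[0,1]$ provides the uniform integrability needed to interchange the $T\to\infty$ time-average with the $\rho\to\infty$ mean-field limit. This gives $\lim_{\rho\to\infty}V_{\pi^*}^\rho=V^*$ and, along the way, the asymptotic feasibility of $\pi^*$ used above.

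I expect step (ii) to be the main obstacle: showing that the fluid process genuinely converges to a well-posed mean-field ODE despite the discontinuous priority rule — ties at the activation-budget boundary must be resolved so that at most one class--state is fractionally activated in the limit — and justifying that the stationary ($T\to\infty$) average commutes with the mean-field ($\rho\to\infty$) limit, for which the global-attractor assumption together with bounded rewards is exactly the enabling ingredient. These are the same components underpinning the asymptotic-optimality proofs for Whittle-type index policies in~\citep{weber1990index,verloop2016asymptotically}, which I would adapt to the present fairness-constrained LP-index policy.
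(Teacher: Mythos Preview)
Your proposal is correct and follows essentially the same route as the paper's own proof: a sandwich argument combining the LP relaxation upper bound (your step~(i) is the paper's Lemma~\ref{lem:UpperBound}) with a mean-field/fluid limit showing that under the global-attractor hypothesis the stationary occupancy of $X^\rho(\pi^*;t)/\rho$ concentrates on the LP optimizer $\zeta^*$, so $V_{\pi^*}^\rho\to V^*$ (your step~(ii)). The paper casts step~(ii) in the density-dependent population process framework of \citet{gast2010mean} (their Lemmas~8--9) rather than the Weber--Weiss/Verloop formulation you cite, but the content is the same; if anything, your proposal is more explicit than the paper about verifying that $\zeta^*$ is an equilibrium of the mean-field ODE and about the asymptotic-feasibility caveat needed to invoke optimality of $\pi^{opt}$.
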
 

\begin{lemma}\label{lemma:fair-constraint}
\fair index policy $\pi^*$ satisfies long term fairness constraint (\ref{eq:FRMAB-constraint2}) in asymptotic regime.
\end{lemma}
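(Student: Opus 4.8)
The plan is to use the global attractor assumption of Definition~\ref{def:global-attractor} to reduce the verification of~(\ref{eq:FRMAB-constraint2}) to a statement about the equilibrium of the fluid-scaled process, and then to identify that equilibrium with the optimal occupancy measure $\Omega_{\pi^*}=\{\zeta_n^*(s,a)\}$ of the LP~(\ref{eq:lp-obj})--(\ref{eq:lp-constraint4}), which \emph{by construction} satisfies the long-term fairness constraint~(\ref{eq:lp-constraint2}). The key point is that the \fair index in~(\ref{eq:fair-index}) is precisely the LP-implied activation probability $\omega_n^*(s)=\zeta_n^*(s,1)/\sum_a\zeta_n^*(s,a)$, so the policy ``remembers'' the fairness constraint baked into the LP.

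First I would write the fluid (mean-field) dynamics of $X_n^\rho(\pi^*,s,a,t)/\rho$ in the regime $\rho\to\infty$. Since arms of a class are exchangeable and $\omega_n^*(s)$ depends only on the state, the top-$B$ selection rule degenerates, in the limit, to a threshold rule on indices: class-state pairs with index above a threshold $\bar\omega$ are activated, those below are left passive, and the mass at $\bar\omega$ is split so that the aggregate activation rate equals $B$. I would then verify that $\Omega_{\pi^*}$ is a fixed point of these dynamics: the flow-balance equations~(\ref{eq:lp-constraint3}) are exactly the stationarity equations of the fluid map, the normalization~(\ref{eq:lp-constraint4}) is preserved, the budget equality coincides with the LP constraint~(\ref{eq:lp-constraint1}) that is active at the optimum, and the threshold induced by ranking the $\omega_n^*(s)$ reproduces $\Omega_{\pi^*}$ itself. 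By Definition~\ref{def:global-attractor}, the scaled process converges to this equilibrium from any starting point, so the stationary class-$n$ occupancy under $\pi^*$ is $\zeta_n^*(\cdot,\cdot)$; this is the same identification used in the proof of Theorem~\ref{thm:asy-optimality}, which can be reused almost verbatim.

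Given this, the long-term activation fraction of a class-$n$ arm under $\pi^*$ equals $\sum_{s\in\cS}\zeta_n^*(s,1)$ because $\sum_{s,a}\zeta_n^*(s,a)=1$ by~(\ref{eq:lp-constraint4}). Constraint~(\ref{eq:lp-constraint2}) then yields $\sum_{s\in\cS}\zeta_n^*(s,1)\ge\eta_n$ for every $n$, hence $\liminf_{T\to\infty}\frac1T\mathbb{E}[\sum_{t=1}^T a_n(t)]\ge\eta_n$ for all $n$, i.e.,~(\ref{eq:FRMAB-constraint2}) holds in the asymptotic regime, which is the claim.

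The main obstacle is exactly the identification in the middle paragraph: at finite $\rho$ the deterministic top-$B$ rule does not activate a class-$n$ arm in state $s$ with probability exactly $\omega_n^*(s)$, so one must show that the $O(1/\rho)$ discrepancy between empirical activation fractions and the LP activation rates vanishes, uniformly over the states that sit at the index threshold, and that the fluid dynamics has no other feasible stationary point to which the process could drift. This is the analogue of the delicate tie-breaking-at-the-threshold step in classical asymptotic-optimality proofs of Whittle-type index policies, and it is precisely where the global attractor assumption is indispensable.
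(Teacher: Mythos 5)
Your proposal is correct and follows essentially the same route as the paper: both arguments invoke the convergence of the fluid-scaled process $X_n^\rho(\pi^*;s,a,t)/\rho$ to the LP-optimal occupancy measure $\zeta_n^*$ (established in the proof of Theorem~\ref{thm:asy-optimality} under the global attractor assumption), identify the limiting per-class activation fraction with $\sum_{s}\zeta_n^*(s,1)$ using the normalization~(\ref{eq:lp-constraint4}), and then read off the fairness guarantee directly from the LP constraint~(\ref{eq:lp-constraint2}). Your additional discussion of the threshold/tie-breaking subtlety is a fair observation about where the global attractor assumption is doing the work, but it does not change the substance of the argument.
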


\subsection{Proof of Theorem~\ref{thm:asy-optimality}}

\begin{lemma}\label{lem:UpperBound}
The optimal value achieved by 
LP in $(\ref{eq:lp-obj})$-$(\ref{eq:lp-constraint4})$ is an upper bound of that achieved by \frmab in $(\ref{eq:FRMAB-obj})$-$(\ref{eq:FRMAB-constraint2})$. 
\end{lemma}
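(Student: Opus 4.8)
The plan is to interpose the relaxed problem \refrmab of (\ref{eq:relaxed-FRMAB})--(\ref{eq:relaxed-FRMAB-constraint2}) between \frmab and the LP, and to establish the two facts (i) $\mathrm{val}(\frmab)\le \mathrm{val}(\refrmab)$ and (ii) $\mathrm{val}(\refrmab)=\mathrm{val}(\text{LP of }(\ref{eq:lp-obj})\text{--}(\ref{eq:lp-constraint4}))$ separately, then chain them. For (i): let $\pi$ be any policy feasible for \frmab. The instantaneous constraint (\ref{eq:FRMAB-constraint1}), namely $\sum_n a_n(t)\le B$ for every $t$ almost surely, gives $\frac1T\mathbb{E}[\sum_{t=1}^T\sum_n a_n(t)]\le B$ for all $T$, hence $\limsup_{T\to\infty}\frac1T\mathbb{E}[\sum_{t=1}^T\sum_n a_n(t)]\le B$, which is exactly (\ref{eq:relaxed-FRMAB-constraint1}); and the fairness constraint (\ref{eq:FRMAB-constraint2}) coincides verbatim with (\ref{eq:relaxed-FRMAB-constraint2}). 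So every policy feasible for \frmab is feasible for \refrmab with the identical objective, and the feasibility assumption guarantees both feasible sets are nonempty; taking suprema yields (i).

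For (ii), I would invoke the standard occupancy-measure reformulation of a constrained average-reward MDP in the sense of \citet{altman1999constrained}. Since each arm $n$ is a unichain MDP, for any stationary policy the long-run state--action frequencies $\zeta_n(s,a)$ exist, are independent of the initial state, and lie in the polytope cut out by the flow-balance equations (\ref{eq:lp-constraint3}) and the normalization (\ref{eq:lp-constraint4}); conversely every point of that polytope is realized by some stationary (randomized) policy. Under this correspondence the long-term reward equals $\sum_n\sum_{s,a}\zeta_n(s,a)r_n(s,a)$, the long-term total activation equals $\sum_n\sum_s\zeta_n(s,1)$, and the per-arm activation fraction equals $\sum_s\zeta_n(s,1)$, so the constraints (\ref{eq:relaxed-FRMAB-constraint1})--(\ref{eq:relaxed-FRMAB-constraint2}) become (\ref{eq:lp-constraint1})--(\ref{eq:lp-constraint2}) exactly and the objective becomes (\ref{eq:lp-obj}). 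The remaining point is that the supremum in \refrmab over all history-dependent policies is not larger than the supremum over stationary ones, which is again classical for constrained unichain average-cost MDPs; the coupling across arms is harmless since the arms interact only through the single linear budget constraint, already expressed in the shared occupancy variables. Combining (i) and (ii) gives $\mathrm{val}(\text{LP})=\mathrm{val}(\refrmab)\ge \mathrm{val}(\frmab)$, which is the claim.

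The main obstacle is step (ii), and specifically the reduction from history-dependent to stationary policies for the relaxed constrained average-reward problem: this is where the unichain Assumption~\ref{assum:unichain} and the constrained-MDP machinery of \citet{altman1999constrained} are genuinely needed, together with the observation that marginalizing the joint occupancy measure of any policy onto a single arm $n$ still satisfies arm $n$'s own balance equations (\ref{eq:lp-constraint3})--(\ref{eq:lp-constraint4}), because arm $n$'s chain evolves autonomously given its own action sequence and the marginal of a flow-conserving measure is flow-conserving. Everything else --- the inequality in (i) and the term-by-term translation of the objective and constraints into occupancy-measure form --- is routine bookkeeping.
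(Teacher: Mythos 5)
Your proposal is correct and follows essentially the same route as the paper's proof: interpose the relaxed problem \refrmab, observe that the instantaneous budget constraint implies the long-term average budget constraint so the feasible region only expands, and invoke the occupancy-measure equivalence of \citet{altman1999constrained} to identify \refrmab with the LP. You spell out the occupancy-measure correspondence and the reduction to stationary policies in more detail than the paper, which simply cites \citet{altman1999constrained} for that step, but the argument is the same.
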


\begin{proof}
According to \citep{altman1999constrained}, the LP in $(\ref{eq:lp-obj})$-$(\ref{eq:lp-constraint4})$ is equivalent to the relaxed problem in $(\ref{eq:relaxed-FRMAB})$-$(\ref{eq:relaxed-FRMAB-constraint2})$. 
It is sufficient to show that the relaxed problem achieves no less average reward than that of the original problem in $(\ref{eq:FRMAB-obj})$-$(\ref{eq:FRMAB-constraint2})$.
The proof is straightforward since the constraints in the relaxed problem  expand the feasible region of the original problem \frmab in $(\ref{eq:FRMAB-obj})$-$(\ref{eq:FRMAB-constraint2})$. Denote the feasible region of the original problem  as
\begin{align*}
    \Gamma:=\Bigg\{a_n(t), \forall t\Bigg\vert\sum_{n=1}^{N}a_n(t)\leq B\Bigg\},
\end{align*}
and the feasible region of the relaxed problem as 
\begin{align*}
\Gamma^\prime:=\left\{a_n(t), \forall t\Bigg\vert\limsup_{T\rightarrow \infty} \frac{1}{T}\mathbb{E}\left\{\sum_{t=1}^T\sum_{n=1}^{N} a_n(t) \right\}\leq B\right\}.
\end{align*}
{It is clear that the relaxed problem expands the feasible region of original problem in $(\ref{eq:FRMAB-obj})$-$(\ref{eq:FRMAB-constraint2})$, i.e., $\Gamma\subseteq\Gamma^\prime.$  Therefore, the relaxed problem  achieves an objective value no less than that of the original problem in $(\ref{eq:FRMAB-obj})$-$(\ref{eq:FRMAB-constraint2})$ because the original optimal solution is also inside the relaxed feasibility set. This indicates the LP in $(\ref{eq:lp-obj})$-$(\ref{eq:lp-constraint4})$ achieves an optimal value no less than that of $(\ref{eq:FRMAB-obj})$-$(\ref{eq:FRMAB-constraint2})$.}
\end{proof}

 We redefine  the expected long-term average reward with scaling parameter $\rho$ of \fair index policy $\pi^*$ in a continuous-time domain as 
\begin{align*}
    &V_{\pi^*}^\rho:=\liminf_{T\rightarrow \infty} \frac{1}{T} \mathbb{E}_{\pi^*} \nonumber \\
    & \qquad \left(\int_{t=1}^T\sum_{n=1}^{N}\sum_{(s,a)} r_n(s,a)\frac{X_n^\rho(\pi^*,s,a;t)}{\rho}dt\right).
\end{align*}
The key of this proof relies on showing that {the fluid process $\frac{X^\rho_n(\pi^*;t)}{\rho}$} converges to $\{\zeta_n^*, \forall n\}$ under the proposed \fair index policy $\pi^*$ when $\rho\rightarrow\infty$. Since the $\{\zeta_n^*, \forall n\}$ is an optimal solution of the LP $(\ref{eq:lp-obj})$-$(\ref{eq:lp-constraint4})$,  according to Lemma \ref{lem:UpperBound}, the proposed \fair index policy $\pi^*$ achieves no worse average reward compared to the optimal policy $\pi^{opt}$, i.e., $\lim_{\rho\rightarrow\infty} V_{\pi^*}^\rho-V_{\pi^{opt}}^\rho\geq 0.$ One the other hand, it is always true that 
 $\lim_{\rho\rightarrow\infty} V_{\pi^*}^\rho-V_{\pi^{opt}}^\rho\leq 0$ by the definition of $\pi^{opt}.$ This will give rise to the desired result. 
To prove Theorem 1, we first introduce some auxiliary definition and lemmas. 
\begin{definition}[Density dependent population process \citep{gast2010mean}]\label{def:DDPP}
A sequence of Markov process $X^\rho$ on $\frac{1}{\rho}\mathbb{N}^d (d\geq 1)$ is called a density dependent population process if there exists a finite number of transitions, say $\mathcal{L}\subset \mathbb{N}^d$, such that for each $\ell\in\mathcal{L}$, the rate of transition from $X^\rho$ to $X^\rho+\ell/\rho$ is $\rho f_\ell(X^\rho)$, where $f_\ell(\cdot)$ does not depend on $\rho$.
\end{definition}

To show the convergence of the density dependent population process, we consider $F$ the function $F(x)=\sum_{\ell\in\mathcal{L}}\ell f_\ell(x)$ and the following ordinary differential equation $x(0)=x_0$ and $\dot{x}_{x_0}(t)=F(x_{x_0}(t))$. The following lemma shows that the stochastic process $X^{\rho}(t)$ converges to the deterministic $x(t)$.

\begin{lemma}\citep{gast2010mean}
Assume for all compact set $E\subset \mathbb{R}^d, \sum_{\ell}|\ell|\sup_x f_{\ell}(x)<\infty,$ and $F$ is Lipschitz on $E$. If $\lim_{\rho\rightarrow\infty} X^\rho(0)=x_0$ in probability, then for all $t>0$:
\begin{align*}
    \lim_{\rho\rightarrow\infty} \sup_{s\leq t}|X^\rho(s)-x(s)|=0,~ \text{in probability}.
\end{align*}

\end{lemma}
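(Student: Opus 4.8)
The plan is to establish this as a Kurtz-type mean-field (fluid) limit through a martingale decomposition combined with Gr\"onwall's inequality. First I would write the infinitesimal generator of the density dependent population process of Definition~\ref{def:DDPP}: for a smooth coordinate test function $g$,
\[
\mathcal{A}^\rho g(x)=\sum_{\ell\in\mathcal{L}}\rho f_\ell(x)\big(g(x+\ell/\rho)-g(x)\big).
\]
Applying this to the identity map $g(x)=x$ gives $\mathcal{A}^\rho x=\sum_{\ell}\rho f_\ell(x)(\ell/\rho)=\sum_{\ell}\ell f_\ell(x)=F(x)$, which is exactly the drift of the limiting ODE $\dot{x}_{x_0}(t)=F(x_{x_0}(t))$. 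Dynkin's formula then furnishes the decomposition
\[
X^\rho(t)=X^\rho(0)+\int_0^t F(X^\rho(s))\,ds+M^\rho(t),
\]
where $M^\rho$ is a vector-valued martingale with $M^\rho(0)=0$.

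Next I would control the fluctuation term $M^\rho$. Each jump of $X^\rho$ has size $\ell/\rho$ and occurs at rate $\rho f_\ell(X^\rho)$, so the predictable quadratic variation is
\[
\langle M^\rho\rangle_t=\int_0^t\sum_{\ell\in\mathcal{L}}\rho f_\ell(X^\rho(s))\frac{|\ell|^2}{\rho^2}\,ds=\frac{1}{\rho}\int_0^t\sum_{\ell}|\ell|^2 f_\ell(X^\rho(s))\,ds.
\]
On a compact set the finiteness of the transition set $\mathcal{L}$ together with the hypothesis $\sum_\ell|\ell|\sup_x f_\ell(x)<\infty$ bounds $\sum_\ell|\ell|^2 f_\ell$ uniformly, whence $\mathbb{E}\langle M^\rho\rangle_t=O(t/\rho)$. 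Doob's maximal inequality then yields $\mathbb{E}\big[\sup_{s\le t}|M^\rho(s)|^2\big]\le 4\,\mathbb{E}\langle M^\rho\rangle_t=O(t/\rho)$, so $\sup_{s\le t}|M^\rho(s)|\to 0$ in probability.

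Then I would compare $X^\rho$ with the deterministic solution. Subtracting $x(t)=x_0+\int_0^t F(x(s))\,ds$ from the decomposition and using the Lipschitz bound $|F(u)-F(v)|\le L|u-v|$ on the relevant compact, for every $s\le t$,
\[
|X^\rho(s)-x(s)|\le|X^\rho(0)-x_0|+\sup_{u\le t}|M^\rho(u)|+L\int_0^s|X^\rho(u)-x(u)|\,du.
\]
Gr\"onwall's inequality gives $\sup_{s\le t}|X^\rho(s)-x(s)|\le\big(|X^\rho(0)-x_0|+\sup_{u\le t}|M^\rho(u)|\big)e^{Lt}$. Since the first term vanishes in probability by the hypothesis $\lim_{\rho\to\infty}X^\rho(0)=x_0$ and the second by the martingale estimate, the left-hand side converges to $0$ in probability, which is the claim.

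The hard part will be the localization needed to make the compact-set hypotheses bite: both the second-moment bound and the Lipschitz constant $L$ live only on a compact $E$, whereas a priori $X^\rho(s)$ could wander out of $E$. I would handle this by fixing a compact neighborhood $E$ of the trajectory $\{x(s):s\le t\}$, introducing the stopping time $\tau^\rho=\inf\{s:X^\rho(s)\notin E\}$, and running all of the estimates above up to $\tau^\rho$. The Gr\"onwall bound then shows that on $\{\tau^\rho\ge t\}$ the process stays $O(1/\sqrt{\rho})$-close to $x$; choosing $E$ with the trajectory in its interior and combining with the convergence of the initial condition forces $\mathbb{P}(\tau^\rho<t)\to 0$, which removes the confinement and completes the argument.
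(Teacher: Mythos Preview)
The paper does not supply its own proof of this lemma: it is quoted verbatim as a result of \citet{gast2010mean} and used as a black box in the asymptotic-optimality argument. There is therefore no in-paper proof to compare against.

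Your proposal is the standard Kurtz fluid-limit argument (martingale decomposition via Dynkin, $O(1/\rho)$ quadratic variation, Doob's maximal inequality, then Gr\"onwall), and it is correct; this is precisely the machinery underlying the cited result. The localization step you flag as ``the hard part'' is handled the right way: run the estimates up to the exit time $\tau^\rho$ from a compact neighborhood $E$ of $\{x(s):s\le t\}$, use the Gr\"onwall bound on $[0,\tau^\rho\wedge t]$ to show that with high probability $X^\rho$ stays within $\mathrm{dist}(x([0,t]),\partial E)$ of the trajectory, and conclude $\mathbb{P}(\tau^\rho<t)\to 0$. One small remark: your bound on $\sum_\ell|\ell|^2 f_\ell$ relies on $\mathcal{L}$ being finite (so that $\max_\ell|\ell|<\infty$), which is indeed part of Definition~\ref{def:DDPP} in this paper but is worth making explicit since the general Kurtz theorem allows countable $\mathcal{L}$.
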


The following lemma shows that under the global attractor property of function $F$, the stationary distribution of the stochastic density population process converges to the dirac measure of the global attractor. 
\begin{lemma}\label{lemma_dirac}
\citep{gast2010mean} If $F$ has a unique stationary point $x^*$ to which all trajectories converge, then the stationary measures $\zeta^\rho$ concentrate around $x^*$ as $\rho$ goes to infinity:
\begin{align*}
    \lim_{\rho\rightarrow\infty} \zeta^\rho\rightarrow \delta_{x^*},
\end{align*}
where $\delta_{x^*}$ is the dirac measure in $x^*$.
\end{lemma}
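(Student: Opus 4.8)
The plan is to follow the standard mean-field argument for density-dependent population processes used in \citet{gast2010mean}: combine the finite-horizon convergence lemma stated just above (that $\sup_{s\le t}|X^\rho(s)-x_{x_0}(s)|\to 0$ in probability whenever $X^\rho(0)\to x_0$) with a compactness-plus-invariance argument for the stationary measures $\zeta^\rho$. First I would observe that, in our setting, each fluid-scaled process $X^\rho(\pi^*;t)/\rho$ takes values in a product of probability simplices (the class-$n$ coordinates $\{X_n^\rho(\pi^*,s,a;t)/\rho\}_{s,a}$ sum to the fraction of class-$n$ arms), which is a fixed compact set independent of $\rho$. Hence the family $\{\zeta^\rho\}_\rho$ of stationary laws is tight, and by Prokhorov's theorem it suffices to show that every weak limit point of $\{\zeta^\rho\}$ equals $\delta_{x^*}$.

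Second --- and this is the crux --- I would show that any weak limit $\zeta^\infty$ of a subsequence $\zeta^{\rho_j}$ is invariant under the deterministic semiflow $\Phi_t: x_0\mapsto x_{x_0}(t)$ generated by $\dot x = F(x)$, $F(x)=\sum_{\ell\in\mathcal L}\ell f_\ell(x)$. Fix $t>0$ and a bounded continuous test function $g$. Starting the $\rho$-th chain from its stationary law gives $\int g\,d\zeta^\rho=\mathbb{E}[g(X^\rho(t))]$ for every $\rho$. On the other hand, since the initial law $\zeta^{\rho_j}$ converges weakly to $\zeta^\infty$, I would pass to a Skorokhod representation so that the random initial states converge almost surely to a $\zeta^\infty$-distributed limit, apply the finite-horizon lemma pathwise, and use boundedness of $g$ together with continuity of $x_0\mapsto\Phi_t(x_0)$ on the compact state space to get $\mathbb{E}[g(X^{\rho_j}(t))]\to\int g(\Phi_t(x))\,d\zeta^\infty(x)$. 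Equating the two limits yields $\int g\,d\zeta^\infty=\int g\circ\Phi_t\,d\zeta^\infty$ for all $t$ and all bounded continuous $g$, i.e. $\zeta^\infty$ is $\Phi_t$-invariant for every $t$.

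Third, I would invoke the global-attractor hypothesis on $F$: since $x^*$ is the unique stationary point and every ODE trajectory converges to $x^*$, the support of any $\Phi_t$-invariant probability measure must be contained in $\bigcap_{t\ge 0}\Phi_t(\mathrm{supp}\,\zeta^\infty)$, which lies in the $\omega$-limit set of every point, namely $\{x^*\}$. Therefore $\zeta^\infty=\delta_{x^*}$. Since this identifies every weak limit point, the full sequence converges: $\zeta^\rho\Rightarrow\delta_{x^*}$ as $\rho\to\infty$, which is the claim.

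I expect the main obstacle to be the interchange of limits in the second step: the finite-horizon lemma is stated for a \emph{deterministic} sequence of initial conditions converging to a fixed $x_0$, whereas here the initial condition is random with the varying law $\zeta^\rho$. Making this rigorous requires either the Skorokhod coupling sketched above or a continuity-in-initial-condition estimate for $\Phi_t$ that is uniform over the compact state space (which follows from Lipschitzness of $F$ via Gr\"onwall), so that the $\rho\to\infty$ limit and the fixed time-$t$ evaluation can be exchanged. Compactness of the population simplex is exactly what supplies both the tightness in step one and this uniform control.
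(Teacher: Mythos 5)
The paper does not prove this lemma at all --- it is imported verbatim from \citet{gast2010mean} as a black-box auxiliary result --- so there is no in-paper proof to compare against. Your argument (tightness of $\{\zeta^\rho\}$ on the fixed compact simplex, invariance of every weak limit under the semiflow $\Phi_t$ via the finite-horizon convergence lemma, then collapse to $\delta_{x^*}$ from the global-attractor hypothesis) is exactly the standard proof of such mean-field concentration results and matches the argument in the cited reference; the limit-interchange issue you flag is real and both of your proposed fixes (Skorokhod representation, or Gr\"onwall-based uniform continuity of $x_0\mapsto\Phi_t(x_0)$) are standard and sufficient. One small refinement: in your third step the support-based argument is looser than necessary; it is cleaner to take any continuous $g\ge 0$ with $g(x^*)=0$, write $\int g\,d\zeta^\infty=\int g\circ\Phi_t\,d\zeta^\infty$ from invariance, and let $t\to\infty$ with dominated convergence (using that every trajectory converges to $x^*$) to conclude $\int g\,d\zeta^\infty=0$ and hence $\zeta^\infty=\delta_{x^*}$.
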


Provided these lemmas, we are now ready to prove Theorem~\ref{thm:asy-optimality}. 
\begin{proof}
{ We denote $A_n^{\pi^*}(s)$ as the set of all combinations $(m,j), m\in\cN, j\in\cS$ such that class-$m$ arms in state $j$ have larger indices than those of class-$n$ arms in state $s$ under the \fair policy $\pi^*$.}  The transition rates of the process $X^\rho(\pi^*;t)/\rho$ are then defined as
\begin{align}\label{eq:fluid_transition}
     x\rightarrow x-\frac{e_{n,s}}{\rho}+\frac{e_{n,s^\prime}}{\rho}~\text{at rate} \sum_{a}{P_n(s,a,s^\prime)}x_n^\rho(s,a), 
\end{align}
where $\sum_{a\in\cA\setminus\{0\}}ax_n^\rho(s,a)=\min\left(\rho B-\sum_{(m,j)\in A_n^{\pi^*}(s)}\sum_{a\in\cA\setminus\{0\}}ax^\rho_m(j,a), 0\right),$ {and $e_{n,s}\in\mathbb{R}^{S\times 1}$ is the unit vector with the $s$-th position being $1$.}

It follows that there exists a continuous function $ f_\ell(x)$ to model the transition rate of the process $X^\rho(\pi^*;t)$ from state $x$ to $x+\ell/\rho, \forall \ell\in \mathcal{L}$ according to \eqref{eq:fluid_transition}, with $\mathcal{L}$ being the set composed of a finite number of vectors in $\mathbb{N}^{SN}$. Hence, the process $X^\rho(\pi^*;t)/\rho$ is a density dependent population processes according to Definition \ref{def:DDPP}.

Note that the process $X^\rho(\pi^*;t)$
can be expressed 
\begin{align*}
    \frac{d X^\rho(\pi^*;t)}{dt}=F(X^\rho(\pi^\star;t)),
\end{align*}
with $F(\cdot)$ being Lipschitz continuous and satisfying $F(X^\rho(\pi^*;t))=\sum_{\ell\in\mathcal{L}}\ell f_\ell(X^\rho(\pi^*;t)).$  Under the condition that the considered MDP is unichain, such that the process $\frac{X^\rho(\pi^*;t)}{\rho}$ has a unique invariant probability distribution $\zeta^\rho_{\pi^*},$ which is tight \citep{verloop2016asymptotically}. Thus, we have $\zeta^\rho_{\pi^*}\left(\frac{X^\rho(\pi^*;t)}{\rho}\right)$ converge to the Dirac measure in $X^{\rho, *}/\rho$ when $\rho\rightarrow\infty$, which is a global attractor of $\frac{X^\rho(\pi^*;t)}{\rho}$ according to Lemma \ref{lemma_dirac}.
Therefore, according to the ergodicity theorem \citep{cinlar1975introduction}, we have
\begin{align*}
    \lim_{\rho\rightarrow\infty} V_{\pi^*}^\rho&=  \lim_{\rho\rightarrow\infty}\sum_{n=1}^{N}\sum_{(s,a)\in\cS\times\mathcal{A}}{\sum_{\frac{X^\rho(\pi^*, s,a)}{\rho}\in\mathcal{X}}}\\
    &\qquad \zeta_{\pi^*}^\rho\left(\frac{X_n^\rho(\pi^*,s,a)}{\rho}\right) r_n(s,a)\frac{X_n^\rho(\pi^*,s,a)}{\rho}\displaybreak[0]\\
    &=\sum_{n=1}^{N}\sum_{(s,a)\in\cS\times\cA} r_n(s,a)\omega^*_n(s,a)\\
    &\geq\lim_{\rho\rightarrow\infty} V_{\pi^{opt}}^\rho,
\end{align*}
where the second equality is due to the fact that $\zeta_{\pi^*}^\rho\left(\frac{X_n^\rho(\pi^*,s,a)}{\rho}\right)$ converges to the Dirac measure in $X^{\rho, *}/\rho$ when $\rho\rightarrow\infty$ under the global attractor condition. 

\end{proof}

\subsection{Proof of Lemma~\ref{lemma:fair-constraint}}
Recall in the proof of Theorem~\ref{thm:asy-optimality}, we have the following results: 
$\frac{X^\rho_n(\pi^*;s,a,t)}{\rho}$ converges to $\{\zeta_n^*, \forall n\}$ under the proposed \fair index policy $\pi^*$ when $\rho\rightarrow\infty$, where $X^\rho_n(\pi^*;s,a,t)$ is the number of class $n$ arms at state $s$ taking action $a$ at time $t$ under \fair $\pi^*$. Define $a_{n,j}(t), j\in\{1,2,...,\rho\}$ as the action of $j$th arm in class $n$ at time $t$ under \fair $\pi^*$,
\begin{align*}
 \frac{ \sum_{j=1}^\rho a_{n,j}(t)}{\rho} &= \frac{ \sum_{j=1}^\rho a_{n,j}(t)| a_{n,j}(t)=1}{\rho}\\
 &= \frac{\sum_s X^\rho_n(\pi^*;s,1,t)}{\rho}\\
 &= \sum_s \zeta_n^*(s,1), \forall t.
\end{align*}
 By taking average over $T$ we have
\begin{align*}
     \frac{1}{T} \sum_{t=1}^T \frac{ \sum_{j=1}^\rho a_{n,j}(t)}{\rho} &= \frac{1}{T} \sum_{t=1}^T \frac{\sum_s X^\rho_n(\pi^*;s,a,t)}{\rho} \\
     &= \sum_s \zeta_n^*(s,1). 
\end{align*}
According to constraint \ref{eq:lp-constraint2}, $\sum_s \zeta_n^*(s,1) \geq \eta_n, \forall n$, 
\begin{align*}
    \frac{1}{T} \sum_{t=1}^T \frac{ \sum_{j=1}^\rho a_{n,j}(t)}{\rho} \geq \eta_n, \forall n.
\end{align*}
When $\rho, T \rightarrow\infty $,
\begin{align*}
    \liminf_{T \rightarrow \infty} \frac{1}{T} \mathbb{E}\Bigg[\sum_{t=1}^T a_n(t)\Bigg]\geq\eta_n, \forall n,
\end{align*}
which shows Lemma~\ref{lemma:fair-constraint} holds in asymptotic regime.

\begin{figure*}[t]
 \centering
    \begin{minipage}{.48\textwidth}
 \centering
 \includegraphics[width=1\columnwidth]{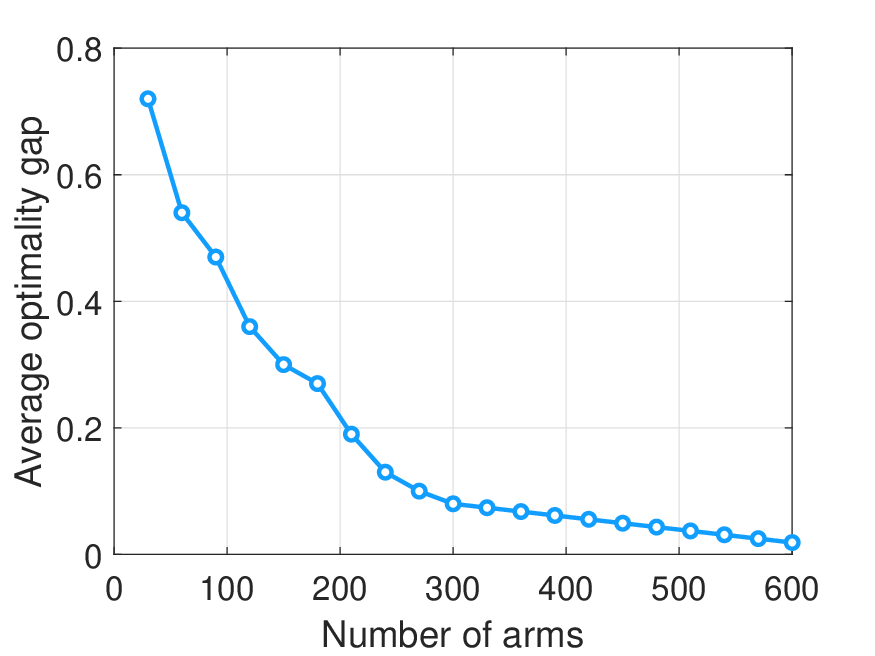}
 \vspace{-0.2in}
\caption{Asymptotic optimality of \fair index policy.}
\label{fig:asymptotic}
 \end{minipage}\hfill
 \begin{minipage}{.48\textwidth}
 \centering
 \includegraphics[width=1\columnwidth]{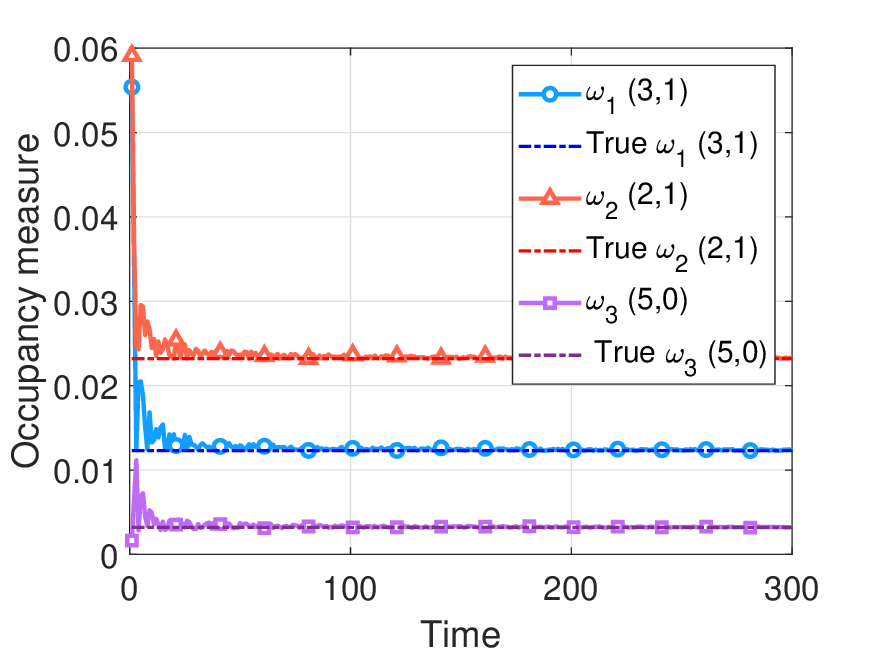}
 \vspace{-0.2in}
 \caption{Global attractor for \fair index policy.}
\label{fig:global-attractor}
 \end{minipage}\hfill
 \vspace{-0.1in}
 \end{figure*}

\subsection{Experiments of asymptotic optimality based on synthetic trace}
Using the synthetic trace setting in Section~\ref{sec:exp}, we verify the asymptotic optimality and global attractor assumption. We vary the number of arms from 30 to 600, and keep the budget as $\frac{1}{3}$ of the number of arms.

\textbf{Asymptotic Optimality.} We first validate the asymptotic optimality of \fair index policy. In particular, we define the difference of average reward obtained by \fair index policy with that from the theoretical upper bound solved from the LP~(\ref{eq:lp-obj})-(\ref{eq:lp-constraint4}).  We call the ratio between this award difference and the number of arms as \textit{optimality gap}.  From Figure~\ref{fig:asymptotic}, we observe that as the number of arms increases, the optimality gap decreases significantly and closes to zero.  This verifies the asymptotic optimality in  Theorem~\ref{thm:asy-optimality}.

\textbf{Global Attractor.} As indicated in Remark~\ref{remark:global-attractor}, the asymptotic optimality of our \fair index policy is under the definition of global attractor as in state of the arts.  In Figure~\ref{fig:global-attractor}, we randomly pick three state-action pairs $(3,1), (2,1)$ and $(5,0)$ for illustration.  It is clear that the occupancy measure of arm 1 for state-action pair $(3,1)$ indeed converges.  Similarly for arm 2 with state-action pair $(2,1)$ and for arm 3 with state-action pair $(5,0)$.  Therefore, the convergence indeed occurs for our \fair index policy and hence we verify the global attractor condition.

\section{Additional Experimental Details}

\subsection{Continuous Positive Airway Pressure Therapy (CPAP)} 
We study the CPAP as in \citep{kang2013markov,herlihy2023planning,li2022towards}, which is a highly effective treatment when it is used consistently during the sleeping for adults with obstructive sleep apnea.  
The state space is ${1,2,3}$, which represents low, intermediate and acceptable adherence level respectively. Based on their adherence behaviour, patients are clustered into two groups, with the first group named  ``Adherence'' and the second ''Non-Adherence''. The difference between these two groups reflects on their transition probabilities, as in Figures.  \ref{Fig:CPAPtransition1}- \ref{Fig:CPAPtransition2}. Generally speaking, the first group has a higher probability of staying in a good adherence level. From each group, we construct 10 arms , whose transition probability matrices are generated by adding a small noise to the original one. Actions such as text patients/ making a call/ visit in person will cause a 5\% to 50\% increase in adherence level. The budget is set to $B = 5$ and  the fairness constraint is set to be a random number between [0.1, 0.7].  The objective is to maximize the total adherence level.

\subsection{PASCAL Recognizing Textual Entailment task (PASCAL-RTE1)} 
This is a task aims to infer hypothesis as in \citep{snow2008cheap}. In each question, the annotator is provided with two sentences and tasked with making a binary choice regarding whether the second hypothesis sentence can be inferred from the first. For example, ``Egyptian television will make a series about Moslems, Copts and Boutros Boutros Ghali" can be inferred from ``Egyptian television is preparing to film a series that highlights the unity and cohesion of Moslems and Copts as the single fabric of the Egyptian society, exemplifying in particular the story of former United Nations Secretary-General Boutros Ghali". Due to carelessness or lack of background information, the annotator may not label correctly.  The 10 workers we pick have worker id (A11GX90QFWDLMM, A14WWG6NKBDWGP, A151VN1BOY29J1, A15MN5MDG4D7Q9, A19PMUTQXDIPLZ, A1CP0KZJS5LSIF, A1DCEOFAUIDY58, A1IPO1FAD1A60X, A1LY3NJTYW9TFF, A1M0SEWUJYX9K0). The ``successful annotation probability'' for each worker is averaged from all tasks, which is (0.495, 0.45, 0.4, 0.3, 0.6, 0.55, 0.65, 0.5, 0.54, 0.37). The state space is $\{0,1\}$, action space is $\{0,1\}$. The number of arms is $N=10$ and budget is set to $B=3$. We set $\eta = 0.05$ for all arms.

\subsection{Land Mobile Satellite System (LMSS)} 
We consider a similar setting as in \citep{wang2020restless} but extend it to the setting with more arms. The land mobile satellite needs to point at different directions (elevation angles). Each elevation angle will result in different channel parameters, hence different transition probabilities.  We pick communicating via S-band in the urban environment and the corresponding parameters of two state Markov chain representations on the channel model. State 1 is defined as \emph{Good} state and state 2 as \emph{Bad}. There are 4 arms in total, which represents $40^{\circ}, 60^{\circ}, 70^{\circ}, 80^{\circ}$ elevation angle. The transition probabilities can be found in Table \ref{table:LMSS}, which can be found in Table III \citep{https://doi.org/10.1002/sat.964}. The state space is $\{0,1\}$, action space is $\{0,1\}$. The number of arms is $N=4$ and budget is set to $B=2$. We set $\eta = 0.03$ for all arms.
\begin{table}[t]
\begin{center}
\begin{tabular}{ |c||c|c|c|c|  }
 \hline
 Elevation angle & $p_{1,1}$ & $p_{1,2}$ & $p_{2,1}$ & $p_{2,2}$\\
 \hline
 $40^{\circ}$   & 0.9155 & 0.0845   &   0.0811 &0.9189\\
 $60^{\circ}$   &  0.9043 &  0.0957   &0.2 &0.8\\
 $70^{\circ}$   & 0.9155 & 0.0845   & 0.2069 & 0.7931\\
 $80^{\circ}$   & 0.9268 & 0.0732    &  0.2667 & 0.7333\\
 \hline
\end{tabular}
\end{center}
\caption{Transition probability matrix of LMSS}
\label{table:LMSS}
\end{table}

\begin{figure}[ht]
\begin {center}
\begin {tikzpicture}[-latex ,auto ,node distance =4 cm and 5cm ,on grid ,
semithick ,
state/.style ={ circle ,top color =white , bottom color = processblue!20 ,
draw,processblue , text=blue , minimum width =1 cm}] x
\node[state] (C)
{$3$};
\node[state] (A) [above left=of C] {$1$};
\node[state] (B) [above right =of C] {$2$};
\path (A) edge [loop above] node[above] {$0.0385$} (A);
\path (C) edge [bend left =25] node[below =0.15 cm] {$0.0257$} (A);
\path (A) edge [bend right = -15] node[below =0.15 cm] {$0.9615$} (C);
\path (C) edge [loop left] node[left] {$0.9498$} (C);
\path (C) edge [bend left =15] node[below =0.15 cm] {$0.0245$} (B);
\path (B) edge [bend right = -25] node[below =0.15 cm] {$1$} (C);
\end{tikzpicture}
\caption{Transition diagram for CPAP Cluster 1}
\label{Fig:CPAPtransition1}
\end{center}
\end{figure}

\begin{figure}[ht]
\begin {center}
\begin {tikzpicture}[-latex ,auto ,node distance =4 cm and 5cm ,on grid ,
semithick ,
state/.style ={ circle ,top color =white , bottom color = processblue!20 ,
draw,processblue , text=blue , minimum width =1 cm}] x
\node[state] (C)
{$3$};
\node[state] (A) [above left=of C] {$1$};
\node[state] (B) [above right =of C] {$2$};
\path (A) edge [loop above] node[above] {$0.7427$} (A);
\path (C) edge [bend left =25] node[below =0.15 cm] {$0.2323$} (A);
\path (A) edge [bend right = -15] node[below =0.15 cm] {$0.1835$} (C);
\path (A) edge [bend right = -15] node[below =0.15 cm] {$0.0741$} (B);
\path (C) edge [loop left] node[left] {$0.6657$} (C);
\path (C) edge [bend left =15] node[below =0.15 cm] {$0.1020$} (B);
\path (B) edge [bend right = -25] node[below =0.15 cm] {$0.4967$} (C);
\path (B) edge [bend left = 15] node[below =0.15 cm] {$0.3399$} (A);
\path (B) edge [loop above] node[above] {$0.1634$} (B);
\end{tikzpicture}
\caption{Transition diagram for CPAP Cluster 2}
\label{Fig:CPAPtransition2}
\end{center}
\end{figure}

\vfill

\end{document}